\documentclass[manuscript, screen, nonacm]{jair}
\AtBeginDocument{%
    \fancyfoot{}

    \fancypagestyle{firstpagestyle}{%
      \fancyhf{}%
    }%
}
\setcopyright{none}

\JAIRTrack{}
\acmVolume{0}
\acmArticle{0}
\acmMonth{8}
\acmYear{2026}

\RequirePackage[
  datamodel=acmdatamodel,
  style=acmauthoryear,
  backend=biber,
  giveninits=true,
  uniquename=init
  ]{biblatex}

\usepackage[ruled,vlined,linesnumbered]{algorithm2e}
\usepackage{tikz}
\usetikzlibrary{arrows.meta,positioning,fit,backgrounds,calc,shapes.geometric,
                decorations.pathreplacing}
\usepackage{enumitem}
\usepackage{multirow}
\usepackage{array}

\definecolor{seagreen}{HTML}{0F766E}
\definecolor{rust}{HTML}{B45309}
\definecolor{claret}{HTML}{9B1C31}

\theoremstyle{acmdefinition}
\newtheorem{principle}{Principle}
\newtheorem{criterion}{Criterion}
\makeatletter
\@ifundefined{remark}{\newtheorem{remark}{Remark}}{}
\makeatother

\SetKwFunction{Rank}{ScoreAndServe}
\SetKwFunction{Cover}{GreedyCover}
\SetKwFunction{Adopt}{PickFromSlate}
\SetKwFunction{Compose}{WriteNew}
\SetKwFunction{Degen}{WriteWithoutReason}
\SetKwFunction{LinkOut}{AssertLinks}
\SetKwFunction{Merge}{MergeSigned}
\SetKwInput{Given}{Given}
\SetKwInput{Yields}{Yields}

\newcommand{\ABAS}{\textsc{Abas}}
\newcommand{\DDP}{\textsc{DDP2P}}
\newcommand{\Jpos}{\mathcal{J}^{+}}
\newcommand{\Jneg}{\mathcal{J}^{-}}
\newcommand{\Rreb}{\mathcal{R}^{\ominus}}
\newcommand{\Rrei}{\mathcal{R}^{\oplus}}
\newcommand{\wc}{\omega_{\mathrm{c}}}
\newcommand{\wrel}{\omega_{\mathrm{r}}}
\newcommand{\lab}{\lambda}
\newcommand{\Lam}{\Lambda}
\newcommand{\pown}{\pi_{\mathrm{own}}}
\newcommand{\preuse}{\pi_{\mathrm{reuse}}}
\newcommand{\pabst}{\pi_{\mathrm{abst}}}
\newcommand{\plnk}{\pi_{\mathrm{lnk}}}

\newcommand{\slate}{\mathcal{S}}
\newcommand{\AUC}{\mathrm{AUC}}
\newcommand{\RDC}{\mathrm{RDC}}
\newcommand{\mass}{\mathrm{M}}

\begin{document}

\title[Rules Before Oracles]{Rules Before Oracles: Auditable,
User-Configurable Argument Selection for Deliberative Polling}

\author{Muntaser Syed}
\email{muntaser@my.fit.edu}
\affiliation{%
  \institution{Florida Institute of Technology}
  \city{Melbourne}
  \state{Florida}
  \country{USA}
}

\author{Markus Zanker}
\email{markus.zanker@unibz.it}
\affiliation{%
  \institution{Free University of Bozen-Bolzano}
  \city{Bolsano}
  \country{Italy}
}

\author{Marius Silaghi}
\authornote{Corresponding Author.}
\email{msilaghi@fit.edu}
\affiliation{%
  \institution{Florida Institute of Technology}
  \city{Melbourne}
  \state{Florida}
  \country{USA}
}

\renewcommand{\shortauthors}{Syed\&Zanker\&Silaghi}

\begin{abstract}
{\bf Background:}
A deliberative poll asks people to decide only after considering the arguments
that bear on the decision. Once submitted arguments outnumber what anyone will
read, some mechanism must choose which of them each voter sees, and that
choice acquires a large share of the decision. Contemporary practice delegates
it to opaque learned rankers, so a participant cannot recompute, attribute or
contest the exposure that shaped their vote.

{\bf Objectives:}
We ask whether argument selection in a binding civic process can be a
\emph{published rule} over \emph{publicly recomputable evidence}, with free
parameters held by the individual voter, and what such a constraint costs. We
treat legibility/explainability as an admissibility condition on the class of usable
mechanisms rather than as an objective to be traded against accuracy, and we
quantify the trade-off that is thereby foreclosed.

{\bf Methods:}
We formalise a poll as an alternative-based information system over bipolar
justification sets, define three complementary evaluation instruments for judging a
served slate: (i) set coverage of the live reason vocabulary, (ii) the order in
which coverage arrives, and (iii) captured endorsement mass. We also state the
Subsuming Justification Problem with a greedy submodular bound used strictly
as an evaluation ceiling, and give seven checkable criteria for a civic
recommender and a rule satisfying all seven: a one-hop reversed endorsement
flow whose only policy parameter is a relation-weight function. An agentic
simulator instantiates $N$ constituent agents and persists every slate at the
instant of every vote; we report roughly $17{,}000$ seeded, seed-paired runs
over two propositions.

{\bf Results:}
In terms of coverage of live reason vocabulary fraction,
served slates fall only $0.035 \pm 0.013$ short of a label-reading ceiling
that upper-bounds \emph{every} selection procedure, opaque ones included,
so the entire competitive advantage available to an unconstrained ranker is
bounded and small. On set coverage alone with solely non-degenerate authoring the rule is statistically
indistinguishable from a uniformly random slate; we show that null is an artefact of an order-blind, charity-blind instrument.
Under
the two remaining evaluation instruments, the rule has a clear effect: it leads at every slate
prefix by a margin that \emph{widens} with adversarial pressure ($-5.7$
positions of depth-to-90\% at a quarter-electorate coalition), and it
dominates on endorsement mass by a factor of $3.3$. Once a realistic fraction
of submissions carries no reasons, the coverage margin returns and grows
monotonically, with the two link summands --- exactly inert on a uniformly
good corpus --- supplying $91\%$ of it. Label-homogeneous flooding collapses
completeness from $0.81$ to $0.34$ under a flat weight policy but only to
$0.44$ under an author-count-normalised one; a coalition co-signing to defeat
the normalisation makes itself monotonically weaker.

{\bf Conclusions:}
Serving as a key security control, the weight policy carries significant completeness value (10\%).
The choice between ranking arms is a position on a coverage-versus-mass frontier rather than a fact, which is precisely the kind of choice only a legible rule can hand to the person it affects.
We map the construction onto
an existing open-source peer-to-peer platform, where per-peer local evaluation
turns configurability from an operator's concession into a structural
property.
\end{abstract}

\maketitle

\section{Introduction}
\label{sec:intro}

In an assembly small enough that everyone hears everything, nobody has to
decide what gets heard. Every real electorate is larger than that, so a
selection step is unavoidable: something decides which of the thousands of
submitted reasons appear on the screen of a voter about to cast a ballot.
That step is where the power sits. The tally is public and checkable; the
slate of arguments that produced the votes is usually neither. This manuscript
takes the position that in a binding civic process the selection step is a
piece of democratic procedure and must be built like one --- a rule published
in advance, computed over evidence any participant can retrieve and
recompute, with the free parameters belonging to the individual voter rather
than to whoever runs the servers. The key insight that makes this practical is
that the property we actually want from a slate --- that its reasons jointly
span the reasons in play --- is a \emph{set-level} property of the served
collection, and set-level properties can be pursued through the structure of
the argumentation graph without ever interpreting the text; a rule that reads
only endorsement counts and signed links is therefore both good enough to use
and far harder to bend without leaving a public trace. Technically we model a
poll as an alternative-based information system, measure a served slate along
three axes rather than one, exhibit a one-hop reversed endorsement-flow rule
whose sole policy lever is a relation-weight function, evaluate it inside a
seed-controlled agentic simulator against a greedy submodular ceiling and
under coordinated attack, and map the result onto an existing peer-to-peer
deliberation platform. The contributions are:
\begin{itemize}[leftmargin=*,itemsep=1pt]
  \item \textbf{A charter for civic recommenders} (Section~\ref{sec:charter}):
    address seven operational criteria --- 1. determinism, 2. evidence locality, 3. author
    blindness, 4. semantic abstinence, 5. reproducibility, 6. contestability,
    7. configurability --- each with the test that discharges it and the
    observable that would show it failing, argued as admissibility conditions
    rather than objectives; an opaque learned ranker fails four by
    construction.
  \item \textbf{A three-instrument model of what a slate is for}
    (Section~\ref{sec:model}): define an alternative-based poll over bipolar
    justification sets and measure 1. completeness of coverage against the reason vocabulary live at the
    instant of each vote; 2. a prefix-sensitive family of order measures; 3. captured
    endorsement mass. The Subsuming Justification Problem is stated with its weighted and
    componentwise variants, and a greedy $(1-e^{-1})$ cover to be used strictly as
    an evaluation ceiling.
  \item \textbf{A rule that meets the charter}
    (Sections~\ref{sec:charter}--\ref{sec:abas}): a one-hop reversed
    endorsement flow over rebuttal and reinforcement links, with a per-voter
    policy vector making the rule individually configurable without making it
    individually unpredictable; and \ABAS{}, a seed-controlled simulator of
    $N$ constituent agents that persists every served slate before the ballot
    it informed, yielding some $17{,}000$ runs across sensitivity sweeps, a
    degenerate-authoring sweep, ranking-term ablations and five adversarial
    families.
  \item \textbf{A measured price for the charter}
    (Section~\ref{sec:res_ceiling}): the coverage fraction distance between the served slates and
    a label-reading greedy ceiling that upper-bounds every selection procedure
    on the same pool is $0.031 \pm 0.014$, against a temporal component of
    $0.163$ that no procedure of any kind could recover.
  \item \textbf{Experimental results and what a ranking rule is actually for}
    (Sections~\ref{sec:instruments}--\ref{sec:res_pareto}): on set coverage
    alone, with solely non-degenerate authoring, the rule does not separate from a uniformly random slate, its two
    link summands are all but inert, and under flooding the random slate even wins. We show it is an
    artefact of an order-blind instrument and an implausibly charitable
    authoring model: under an order-sensitive reading the rule leads at every
    prefix with the margin growing under attack, and once a realistic fraction
    of submissions fails to justify, the coverage margin returns, rises
    monotonically with that fraction, and makes the link summands worth as much
    as the whole ranking advantage --- by squaring the discrimination the
    electorate already exercises.
  \item \textbf{An empirical manipulation result with a named defence}
    (Section~\ref{sec:adversarial}): under authentication, hub-riding is
    harmless and label-homogeneous flooding is not; author-count normalisation
    retains $0.08$--$0.12$ of completeness that a flat policy loses; a rate-
    and corpus-matched control isolates homogeneity as four fifths of the
    cause; and the obvious escalation --- co-signing links to inflate the
    normalising numerator --- makes the coalition monotonically weaker.
  \item \textbf{A peer-to-peer realisation and a normative reading}
    (Sections~\ref{sec:p2p} and~\ref{sec:ecitizen}): a component-by-component
    mapping onto DirectDemocracy Peer-to-Peer (\DDP{}), covering identity and census, gossip synchronisation
    and local evaluation over partial replicas; and an argument for why legible
    procedure is constitutive rather than decorative.
\end{itemize}
The remainder of this section states the exposure problem and previews what the
measurements turned out to say. Section~\ref{sec:related} reviews the
literatures the work sits between, Section~\ref{sec:model} builds the formal
object and its three instruments, Section~\ref{sec:charter} states the charter
and the rule, Section~\ref{sec:abas} describes the simulator and
Section~\ref{sec:setup} the protocol.
Sections~\ref{sec:res_honest}--\ref{sec:adversarial} report the non-degenerate authoring attack-free
sweeps and the price of semantic abstinence, the null result and its two
resolutions, and the adversarial sweeps. Section~\ref{sec:p2p} maps the design
onto \DDP{}, Section~\ref{sec:discussion} discusses,
Section~\ref{sec:limits} lists what would change the conclusions,
Section~\ref{sec:ecitizen} makes the normative argument, and
Section~\ref{sec:conclusion} concludes.

\subsection{The exposure problem, and the position taken here}
\label{sec:exposure}

The legitimacy of a collective decision has never rested on the tally alone.
From Mill's marketplace of ideas to Habermas's account of communicative
rationality~\cite{mill1859liberty,habermas1984theory}, the tradition holds that a decision
earns authority from the quality of the discourse preceding it. Fishkin turned
that into a measurable
protocol~\cite{fishkin1991democracy,fishkin2000deliberative}: draw a
stratified sample, expose it to balanced material and structured discussion,
measure how opinion moves. Deployments across many countries report the same
qualitative finding --- considered opinion differs systematically from raw
opinion, and partisan cues lose their grip once people meet the reasons behind
positions~\cite{luskin2002considered,fishkin2009whenpeople} --- and the
protocol's weakness is cost, a facilitated cohort of a few hundred being an
expensive instrument that does not obviously scale to a national
electorate~\cite{lukensmeyer2005largescale}.

Digital platforms remove that barrier and immediately install another. When a
hundred thousand people submit reasons, nobody reads the corpus; each person
reads a slate. Call this the \emph{exposure problem}: given a growing corpus
of justifications and a per-voter display budget of $K$ items, how should the
slate be chosen so that the reasons in play are represented in what voters
actually see? Getting it wrong produces three distinct {\bf harms}, and they call
for different remedies: 
\begin{itemize}
\item 
\emph{Temporal inequity}: in a sequential process the
corpus is thin at the start and rich at the end, so early voters decide
against a narrower reason space than late voters through no fault of their
own; at the reference configuration the spread of per-voter completeness
\emph{within} a single run is about $0.21$, five times the variation between
one entire run and another. 
\item
\emph{Silent narrowing}: a selector tuned for
engagement converges on whatever holds attention, which in political material
is typically what confirms a
prior~\cite{pariser2011filter,sunstein2007republic,willson2014politics}, and
the narrowing is silent because no individual slate looks censored --- the
missing reasons are simply never surfaced. 
\item
\emph{Unfalsifiable influence}: if
the selector is a learned model, a participant who suspects their side is
being systematically under-surfaced cannot establish it, cannot recompute the
ranking, cannot point at the clause that produced the outcome, and cannot
distinguish deliberate suppression from an artefact of training
data~\cite{burrell2016machine,lipton2018mythos}. A civic process unable to
answer ``why was I shown this?'' with a checkable derivation has replaced
procedural legitimacy with trust in an operator.
\end{itemize}

The position of this manuscript follows from the last two harms in particular, and
it is a position about \emph{admissibility}, not about performance. We do not
claim that opaque rankers rank badly. We claim that opaque ranking is the wrong frame
for a civic slate: the object produced is not a personalised feed but a piece
of the public record of what a voter was shown, and a public record must be
reconstructible. Legibility therefore determines which procedures may be used
at all, and questions of comparative quality arise only among those that
qualify --- the same structure as the secret ballot, which nobody defends on
the grounds that it measures preferences more accurately than open voting.
Section~\ref{sec:desiderata} states the criteria in a form that can be checked
against an implementation, and every subsequent section is written so that a
reader can see which criterion each design decision is discharging.
Figure~\ref{fig:pipeline} states the commitment in one picture.
\begin{figure}[tb]
\centering
\begin{tikzpicture}[
    font=\small,
    node distance=5mm,
    box/.style={draw,rounded corners=2pt,align=center,inner sep=4pt,
                minimum height=8mm,fill=white},
    good/.style={box,draw=seagreen,fill=seagreen!7},
    bad/.style={box,draw=claret,fill=claret!6},
    ev/.style={box,draw=black!35,fill=black!3,font=\footnotesize},
    ar/.style={-{Latex[length=2mm]},draw=black!55}
  ]
  \node[ev,minimum width=26mm] (c1) at (0,0.65) {corpus of justifications};
  \node[ev,minimum width=26mm] (b1) at (0,-0.65) {behavioural telemetry};
  \node[bad,minimum width=30mm] (m1) at (4.7,0)
       {learned selector\\{\footnotesize (weights not public)}};
  \node[box,minimum width=20mm] (s1) at (8.6,0) {slate of $K$};
  \draw[ar] (c1) -- (m1.west);
  \draw[ar] (b1) -- (m1.west);
  \draw[ar] (m1) -- (s1);
  \node[left=4mm of c1.west,anchor=east,yshift=-6mm,align=right,
        font=\footnotesize\bfseries] {inadmissible};
  \node[below=2mm of m1,font=\footnotesize,text=claret]
       {no derivation to point at};

  \node[ev,minimum width=26mm] (c2) at (0,-2.85) {corpus of justifications};
  \node[ev,minimum width=26mm] (b2) at (0,-4.15)
       {public endorsements\\+ signed links};
  \node[good,minimum width=30mm] (m2) at (4.7,-3.5)
       {published rule\\{\footnotesize $\mathrm{sc}_{\theta_i}(\cdot)$}};
  \node[box,minimum width=20mm] (s2) at (8.6,-3.5) {slate of $K$};
  \node[good,minimum width=30mm,font=\footnotesize] (p2) at (4.7,-5.2)
       {voter's own policy vector $\theta_i$};
  \draw[ar] (c2) -- (m2.west);
  \draw[ar] (b2) -- (m2.west);
  \draw[ar] (m2) -- (s2);
  \draw[ar] (p2) -- (m2);
  \node[left=4mm of c2.west,anchor=east,yshift=-6mm,align=right,
        font=\footnotesize\bfseries] {admissible};
\end{tikzpicture}
\caption{The two pipelines. Both consume the same corpus and both emit $K$
items. The upper one additionally consumes behavioural telemetry and passes it
through parameters nobody outside can inspect, so a disputed slate has no
derivation to point at. The lower one consumes only evidence every participant
can already retrieve --- who endorsed what, and which signed links exist ---
and exposes its policy parameters to the individual voter, so a disputed slate
reduces either to a disputed input, which is checkable, or to a disputed
policy, which is arguable. This manuscript treats the difference as a
condition of admissibility rather than as a term in an objective.}
\label{fig:pipeline}
\end{figure}

\subsection{What the measurements turned out to say}
\label{sec:arc}

In a simplified setting where all justification authors correctly express their reasons (non-degenerate authoring), the fraction of the live reason vocabulary covered by a served slate, a simplified optimization objective for the motivating harms, does not differ significantly between our rule and random selection.

With metrics that evaluate the ordering of justifications within the slate, and/or under a realistic electorate with degenerate authoring, the rule shows strong benefits over the control alternatives.

\subsection{Why the substrate matters}
\label{sec:whyp2p}

A published rule evaluated on a server the operator controls is a large
improvement over an unpublished one, and it stops short. The operator still
decides which endorsements are in the database, when the index refreshes, and
whose items quietly stop being returned. Configurability granted by an
operator is revocable by the same operator, and reproducibility asserted by an
operator is a claim about a machine nobody else can inspect. The natural
terminus of the argument is an architecture in which each participant holds
their own replica of the items they care about and evaluates the rule locally
over it. \DDP{} --- \textsc{DirectDemocracyP2P} --- is an existing open-source
Java platform built on precisely that
premise~\cite{silaghi2013ddp2p,silaghi2013petition}: peers keep independent
databases of self-contained items, name them by global identifiers derived
from public keys and content digests, and converge by push--pull gossip.
Section~\ref{sec:p2p} shows the model developed here maps onto \DDP{}'s item
types with very little impedance mismatch, and that doing so converts several
criteria of the charter from policy promises into structural facts.
\section{Background and Related Work}
\label{sec:related}

This work sits at the junction of five literatures:
(1) the empirical political science of deliberative polling, (2) the formal
theory of bipolar argumentation, (3) computational social choice for participatory
processes, (4) the systems literature on recommendation and its manipulation, and
(5) the peer-to-peer work that supplies the substrate. A sixth --- the use of
large language models as simulated populations --- supplies the register in
which the evaluation is conducted. We state below what we take from each and,
where it matters, what we deliberately do not take.

\paragraph{Deliberative polling and its scaling problem.}
\label{sec:rw_dp}
Deliberative polling instruments a simple hypothesis: opinion formed after
exposure to balanced argument differs systematically from opinion measured
cold~\cite{fishkin1991democracy,fishkin2000deliberative}. The canonical design
draws a stratified sample, supplies balanced briefing material, runs moderated
small-group discussion, and re-measures. Results across many deployments are
consistent in direction --- movement is largest where the initial position was
least informed~\cite{luskin2002considered,fishkin2009whenpeople} --- and the
effect survives when partisan cues are stripped from the
material~\cite{price2002disagreement}. The design constraint is that
facilitation does not scale: cohorts are a few hundred and the per-participant
cost is that of a small conference~\cite{lukensmeyer2005largescale}. Digital
deployments relax that constraint and inherit a new
one~\cite{tolbert2009strategic,toots2019participation}: with a corpus no
participant reads in full, the balance of the briefing material is no longer
something an organiser curates but something a selection mechanism produces,
per voter, in real time. We take the goal --- reasoned exposure before the
ballot --- and treat selection, rather than facilitation, as the object to be
engineered.

\paragraph{Bipolar argumentation.}
\label{sec:rw_arg}
Dung's abstract frameworks model a debate as a directed attack graph and
define acceptability through extensions~\cite{dung1995acceptability}. Bipolar
frameworks add a support relation alongside
attack~\cite{cayrol2005bipolarity,cayrol2013bipolarity}, and the handbook
literature surveys the resulting semantic
landscape~\cite{baroni2018handbook,amgoud2008acceptability}; labelled bipolar
frameworks give a unified semantics for the many readings of support and make
the choice among them
explicit~\cite{escanuela2021labeled}, and the correspondence with logic
programming has since been settled in
detail~\cite{alcantara2025equivalence}. Value-based extensions attach
audience-relative preferences to arguments and thereby explain rational
disagreement between audiences that accept the same
facts~\cite{benchcapon2007argumentation}, which is close in spirit to the
per-voter policy vector of Section~\ref{sec:config}.
Our use of this apparatus is deliberately shallow, and the shallowness is a
design commitment. We adopt the bipolar signature --- two relations of opposite
polarity over a set of items --- and decline the semantics: we compute no
extensions and pronounce no argument acceptable or defeated. The reason is
Criterion~\ref{c:abstain} (semantic abstinence,
Section~\ref{sec:desiderata}) --- any semantics deciding which arguments
survive is a machine deciding which arguments a voter should stop considering,
exactly the authority a civic slate must not delegate. Links here are evidence
about relevance, not adjudication of truth. That commitment also distinguishes
this work from dialectical-quality measures over exchanges between
parties~\cite{rocha2026assessing}, which judge the argumentation; Prakken and
Sartor on argument schemes in law~\cite{prakken2001formalizing} and the
procedural tradition running back to Robert's rules~\cite{robert1915rules} are
closer to the role links play here, structuring who may speak to what rather
than who is right.

\paragraph{Computational social choice for participatory processes.}
\label{sec:rw_cs}
A parallel line asks how collective decisions should be composed once
participation is mediated by software. Liquid democracy has been given an
algorithmic treatment with explicit accuracy guarantees and failure
modes~\cite{kahng2021liquid}; representative committees can be sampled from
peers so that the committee's decisions track the electorate's~\cite{meir2021representative};
proposing and voting have been unified as aggregation over a metric
space~\cite{bulteau2021aggregation}; and proportionality has been extended
from one-shot elections to sequential decision
making~\cite{chandak2026proportional}. The complexity of outcome
determination in judgment aggregation is likewise
mapped~\cite{endriss2020complexity}. This literature aggregates
\emph{positions}; the present work is upstream of it, concerning what a voter
is shown before a position is formed, and the two are complementary: any of
these aggregation rules can sit downstream of the slate mechanism studied here.

\paragraph{Argument mining and its role here.}
\label{sec:rw_mining}
Argument mining extracts claims, premises and relations from
text~\cite{lippi2016argumentation,stab2017parsing}; retrieval systems rank
passages by argumentative
quality~\cite{wachsmuth2018retrieval,carenini2006argumentative}; and
argumentative dialogue agents use such structures to conduct persuasive
exchanges~\cite{chalaguine2020chatbot}. These techniques are how the reason
labels of Section~\ref{sec:tags} would be produced in a real deployment, and we
assume nothing better than what they currently deliver. Their placement in the
architecture is the point. Label extraction is an \emph{authoring-time}
operation: it runs once per item, its output is attached to the item, it is
visible to the author, and it can be contested and corrected before the item is
ever served. Selection is a \emph{serving-time} operation over that frozen
output. Keeping a language model strictly on the authoring side of that line
means a disputed slate never requires anyone to reason about model internals;
it requires them to point at a label they think is wrong, which is a claim
about a public artefact. Section~\ref{sec:limits} returns to what happens when
the labels themselves are adversarial.

\paragraph{Civic platforms with algorithmic assistance.}
\label{sec:rw_civic}
Deployed systems already make these choices. Polis clusters participants by
agreement pattern and surfaces statements that bridge
clusters~\cite{small2021polis}, an approach with a genuine claim to
representativeness whose clustering step nevertheless requires the operator's
pipeline to be trusted; hybrid participatory systems go further and estimate
the \emph{values} behind participants' textual motivations, disambiguating
them interactively~\cite{liscio2025value}, which is the closest published
treatment of the authoring-time inference we place off the serving path.
Deliberation-support platforms have been surveyed for their argumentation
structures~\cite{brenneis2021howwill,mancini2015time}, and recent work examines
language models as facilitators~\cite{behrendt2024ai,tessler2024habermas}, with
Tessler et al.\ reporting that a model-generated statement can find more common
ground than a human mediator. That result is real and we do not dispute it. Our
objection is categorical rather than empirical: a mediator whose reasoning
cannot be reconstructed is unsuitable for a binding process regardless of
measured quality, for the same reason that a demonstrably accurate but
unauditable vote count is unsuitable. The regulatory direction is consistent
--- the Digital Services Act obliges very large platforms to disclose
recommender parameters and offer a non-profiling option~\cite{eu2022dsa}, and
the AI Act imposes transparency and human-oversight duties on systems used in
democratic processes~\cite{eu2024aiact} --- and
sycophancy~\cite{perez2023sycophancy} and measurable political leaning in
pretrained models~\cite{feng2023political} are further reasons to keep such
models off the serving path.

\paragraph{Diversity, coverage and manipulation in recommendation.}
\label{sec:rw_rec}
Ranking by predicted relevance alone yields redundant lists; maximal marginal
relevance trades relevance against novelty~\cite{carbonell1998mmr}, and
aggregate diversity has been studied as a system-level
objective~\cite{adomavicius2011aggregate}. Multi-stakeholder recommendation
observes that platform, provider and consumer objectives
diverge~\cite{burke2017multisided}; recency-aware work notes the drift of
freshness against quality~\cite{chakraborty2019recency}; and conceptual
modelling of explainable recommenders has mapped what an explanation of a
recommendation can even be~\cite{caromartinez2021conceptual}. Our completeness
functional is a coverage objective in this family, with the civic-specific
twist that the covered universe is the set of \emph{reasons live at the moment
of the vote} rather than a static catalogue. The manipulation literature is
more directly load-bearing for Section~\ref{sec:adversarial}. Shilling attacks
against collaborative filtering inject profiles to promote a target
item~\cite{lam2004shilling,shardanand1995social}; Sybil attacks manufacture
identities to acquire disproportionate influence~\cite{douceur2002sybil};
eclipse attacks isolate a peer's view of the network~\cite{singh2006eclipse};
and election manipulation on social networks has been analysed as seeding and
edge modification, with the hardness of each variant
established~\cite{castiglioni2021election}. Eigenvector-style ranking
schemes~\cite{page1999pagerank} are structurally susceptible to link farms,
which is precisely why our rule takes exactly one hop and no iteration to a fixed point:
Section~\ref{sec:res_hub} confirms empirically that hub-riding is inert against
it, and Section~\ref{sec:res_flood} shows where the residual exposure lives.

\paragraph{Agentic simulation as an evaluation method.}
\label{sec:rw_agentic}
Generative agents with memory and planning reproduce plausible social behaviour
in sandboxed environments~\cite{park2023generative}; language models
conditioned on demographic profiles reproduce survey response
distributions~\cite{argyle2023outofone}; multi-agent orchestration frameworks
have matured~\cite{wu2024autogen}; and the method has been applied to
social-science questions directly~\cite{bail2024llmsocial}. Our simulator
belongs to this family and inherits its central caveat: agent behaviour is a
model of constituent behaviour, not evidence about it. We therefore restrict
every claim to statements about \emph{mechanism} under a stated behavioural
model --- comparisons between ranking arms, sensitivities to structural
parameters, responses to attacks --- and make no claim about the magnitude any
quantity would take in a human electorate.

\paragraph{Decentralised infrastructure for civic processes.}
\label{sec:rw_p2p}
Structured overlays give scalable key-based
routing~\cite{maymounkov2002kademlia}; epidemic protocols give robust eventual
dissemination~\cite{demers1987epidemic}; conflict-free replicated data types
give convergence without coordination~\cite{shapiro2011crdt}; hash trees give
compact integrity proofs~\cite{merkle1988digital}; blockchains give append-only
public ledgers~\cite{nakamoto2008bitcoin}, with transparency-log constructions
applied to update integrity~\cite{nikitin2017chainiac} and cryptographic
scrutiny of agreement protocols~\cite{miller2020cryptoag}; and decentralised
social platforms have explored gossip-based
dissemination~\cite{boutet2013whatsup}.
\DDP{} occupies a specific position: it is not a ledger and not a DHT, but a
gossip-replicated store of self-contained, signed items designed for petition
drives and organisational
decision-making~\cite{silaghi2013ddp2p,silaghi2013petition,silaghi2014encompassing}.
Its research programme covers the pieces a civic deployment actually needs ---
decentralised census construction and
verification~\cite{qin2013census,qin2014opencensus}, detection of false
identities~\cite{qin2013falseidentity}, peer
reputation~\cite{qin2013reputation}, supernodes for peers behind
NAT~\cite{alhamed2014supernode}, protocol stacking and update
propagation~\cite{alhamed2013stacking,alhamed2013updates}, meta-level
recommendation over decentralised
data~\cite{alhamed2016metarecommenders}, trust and key
management~\cite{silaghi2016pgp}, interface studies for non-expert
users~\cite{alqahtani2017hci,alqahtani2016cognition,kattamuri2005supporting},
its logical foundations~\cite{roussev2017logic}, related vehicular
deployments~\cite{dhannoon2013vanet,dhannoon2013thesis}, and the motivating
argument for why participation is worth the
engineering~\cite{silaghi2017whyvote}. Section~\ref{sec:p2p} builds on that
stack rather than proposing a new one.
\section{The Object: An Alternative-Based Poll and Three Ways to Judge a Slate}
\label{sec:model}

Before anything can be recommended, the thing being recommended over has to be
defined, and before any recommender can be judged, the standard of judgement
has to be fixed. This section does both. The object is an
\emph{alternative-based information system}: a poll in which each side of the
question carries its own set of justifications, and in which typed links
between justifications carry the polarity of the relations participants assert
between them. The standard is deliberately plural. We define three
instruments --- how much of the live reason vocabulary a slate covers, how
early in the slate that coverage arrives, and how much of the electorate's
expressed endorsement the slate captures --- because Section~\ref{sec:instruments}
will show that any one of them alone gives a misleading verdict. We then state
the combinatorial problem that coverage induces, prove it hard, and construct
the greedy bound we use as an evaluation ceiling, with an explicit statement
of the discipline governing that bound's use.

\subsection{Polls, sides, and justifications}
\label{sec:polls}

A poll asks a question with a finite set of mutually exclusive alternatives.
Throughout we take the binary case --- support or oppose a motion --- because
it is the case civic petitions actually present and because it keeps the
notation legible; nothing in the construction depends on it, and
Section~\ref{sec:conclusion} notes the multi-alternative generalisation.

\begin{definition}[Alternative-based poll]
\label{def:poll}
An \emph{alternative-based poll} is a tuple
\begin{equation}
  \Pi \;=\; \bigl\langle\, \Jpos,\ \Jneg,\ \Rreb,\ \Rrei,\ \wc,\ \wrel \,\bigr\rangle
  \label{eq:poll}
\end{equation}
where $\Jpos$ and $\Jneg$ are finite sets of \emph{justifications}
attached to the supporting and opposing alternatives respectively\footnote{In the experiments reported here, $\Jpos$ and $\Jneg$ are disjoint, but that can be relaxed.};
$\Rreb \subseteq \mathcal{J} \times \mathcal{J}$ with
$\mathcal{J} = \Jpos \cup \Jneg$ is the \emph{rebuttal} relation, read
``$(j,k) \in \Rreb$ asserts that $j$ tells against $k$'';
$\Rrei \subseteq \mathcal{J} \times \mathcal{J}$ is the \emph{reinforcement}
relation, read ``$j$ tells in favour of $k$''; $\wc : \mathcal{J} \to
\mathbb{R}_{\ge 0}$ assigns each justification a civic weight; and
$\wrel : \Rreb \cup \Rrei \to \mathbb{R}_{\ge 0}$ assigns each link a weight.
\end{definition}

Three features carry the design. The two relations are typed but not signed at
the level of the tuple: polarity lives in which relation a link belongs to, and
the score function of Section~\ref{sec:rule} attaches a separate coefficient to
each, so a participant may configure how much a rebuttal counts relative to a
reinforcement --- a substantive editorial choice --- without reaching inside
the graph. The civic weight $\wc$ is the endorsement count: how many
constituents cast their ballot citing that item. It is a public tally, not a
quality judgement, and no part of the construction asks whether an item
deserves the endorsements it has. Finally $\wrel$ is where policy lives:
Section~\ref{sec:weights} shows that choosing it flat or normalised by the
number of distinct authors behind the link moves completeness under coordinated
flooding by $0.08$--$0.12$, an order of magnitude larger than any other design
decision here. 

\subsection{Reason vocabularies and the live denominator}
\label{sec:tags}

Coverage requires a universe to cover. We obtain it by labelling each
justification with the reasons it invokes.

\begin{definition}[Reason labelling]
\label{def:labels}
Fix a finite \emph{reason vocabulary} $\Lam$. A \emph{labelling} is a map
$\lab : \mathcal{J} \to 2^{\Lam}$ assigning each
justification the non-empty set of reasons it invokes\footnote{In the first set of reported experiments, $\lab : \mathcal{J} \to 2^{\Lam} \setminus \{\emptyset\}$ to have solely non-degenerate authorship.}.
\end{definition}

Just for evaluation in reported experiments, labels are associated at authoring time (Section~\ref{sec:rw_mining}), stored on
the item, and never recomputed at serving time; a deployment publishes its
vocabulary and its extraction procedure.
The denominator is the
subtler half. Comparing a slate against the full vocabulary $\Lam$ would
penalise an early voter's slate for failing to see reasons nobody had yet written,
which is exactly the temporal inequity of Section~\ref{sec:exposure} --- a real
phenomenon, but one to \emph{measure} separately rather than fold into the
score of the recommender. We therefore evaluate against what existed.

\begin{definition}[Live vocabulary]
\label{def:live}
For a voter $i$ casting a ballot at time $t_i$, let
$\mathcal{J}_{t_i}$ be the set of justifications submitted strictly before
$t_i$, and let the \emph{live vocabulary} on side $\sigma \in \{+,-\}$ be
$\Lam^{\sigma}_{t_i} = \bigcup_{\,j \in \mathcal{J}^{\sigma} \cap \mathcal{J}_{t_i}} \lab(j)$.
\end{definition}

\begin{definition}[Slate completeness]
\label{def:completeness}
Let $S_i^{\sigma} \subseteq \mathcal{J}^{\sigma} \cap \mathcal{J}_{t_i}$ be
the slate of at most $K$ justifications served to voter $i$ on side $\sigma$.
Its \emph{completeness} is
\begin{equation}
  c_i^{\sigma} \;=\;
  \frac{\bigl|\ \bigcup_{j \in S_i^{\sigma}} \lab(j)\ \bigr|}
       {\bigl|\ \Lam^{\sigma}_{t_i}\ \bigr|}
  \;\in\; [0,1],
  \label{eq:completeness}
\end{equation}
with $c_i^{\sigma} = 1$ by convention when $\Lam^{\sigma}_{t_i} = \emptyset$.
The \emph{combined} completeness of a run over $N$ voters, by averaging, is
\begin{equation}
  \bar{c} \;=\; \frac{1}{2N}\sum_{i=1}^{N}\bigl(c_i^{+} + c_i^{-}\bigr).
  \label{eq:meancompleteness}
\end{equation}
\end{definition}

Completeness is defined per voter and per side, and \eqref{eq:meancompleteness}
aggregates a voter's two values by averaging them. Averaging permits
compensation: a slate that covers one side well and the other badly scores the
same as one that covers both indifferently. A flooding coalition attacks one
side, so this is precisely the configuration in which the reported number and
the experienced one can come apart. 
We therefore also run evaluations with the average over a voter's two sides replaced by the minimum,
\begin{equation}
  c_i^{\min} \;=\; \min\bigl(c_i^{+},\, c_i^{-}\bigr),
  \qquad
  \bar{c}^{\min} \;=\; \frac{1}{N}\sum_{i=1}^{N} c_i^{\min},
  \label{eq:mincompleteness}
\end{equation}
under which a voter is served exactly as well as their worse-covered side and
is granted no credit for coverage they did not lose.

Two properties of \eqref{eq:completeness} deserve to be stated here rather
than discovered later, because between them they account for the entire
narrative arc of Section~\ref{sec:instruments}.

\begin{remark}[Order blindness]
\label{rem:orderblind}
$c_i^{\sigma}$ depends on $S_i^{\sigma}$ only as a set. Permuting the slate
leaves it unchanged. Completeness therefore cannot, even in principle,
distinguish a ranking rule from any procedure that returns the same $K$ items
in a different order --- and when $|\mathcal{J}^{\sigma} \cap \mathcal{J}_{t_i}|$
is not much larger than $K$, it cannot distinguish it from a random draw
either, because both return nearly the same set.
\end{remark}

\begin{remark}[Charity blindness]
\label{rem:charityblind}
$c_i^{\sigma}$ credits an item for the labels it carries, assumed to be a nonempty set of a size above some given positive threshold, whatever the quality
of the argument attached to them. If every submission in the corpus is a
well-formed argument on its own side, every item is worth roughly the same to
this measure, and no selection rule can beat a random draw by much. The
measure only rewards discrimination when there is something to discriminate
\emph{against}.
\end{remark}

\subsection{Three instruments: coverage, order, and endorsement mass}
\label{sec:instruments_def}

A slate is a sequence, presented to a person with finite attention, drawn from
a corpus the electorate has already expressed opinions about. Each clause
supports a different question about whether the slate was well chosen, and we
report all three because Section~\ref{sec:instruments} shows they disagree in
ways that matter.

\paragraph{Instrument I: coverage.} Completeness $\bar{c}$ per
Definition~\ref{def:completeness}. It answers: \emph{were the reasons in play
represented at all?} It is the natural formalisation of the balance
requirement inherited from deliberative polling, and it is order-blind and
charity-blind per Remarks~\ref{rem:orderblind}--\ref{rem:charityblind}.

\paragraph{Instrument II: order.} Real readers do not consume slates
uniformly. Let $S_i^{\sigma} = \langle j_1, \dots, j_K \rangle$ now be an
ordered slate, and define the \emph{prefix coverage} at depth $m$,
\begin{equation}
  p^{\sigma}_i(m) \;=\;
  \frac{\bigl|\ \bigcup_{u \le m} \lab(j_u)\ \bigr|}{\bigl|\ \Lam^{\sigma}_{t_i}\ \bigr|},
  \qquad m = 1,\dots,K,
  \label{eq:prefix}
\end{equation}
so that $p^{\sigma}_i(K) = c^{\sigma}_i$ recovers Instrument~I. From the
prefix profile we take three summaries. The \emph{area under the prefix curve}
$\AUC^{\sigma}_i = K^{-1}\sum_{m=1}^{K} p^{\sigma}_i(m)$
is the coverage enjoyed by a reader who stops at a uniformly random depth. The
\emph{rank-discounted coverage} weights each reason by where it first appears,
\begin{equation}
  \RDC^{\sigma}_i \;=\;
  \Bigl(\textstyle\sum_{t \le |\Lam^{\sigma}_{t_i}|} \tfrac{1}{\log_2(1+t)}\Bigr)^{-1}
  \sum_{\ell \in \Lam^{\sigma}_{t_i}} \frac{\mathbb{1}[\ell \text{ served}]}{\log_2\bigl(1 + \mathrm{pos}_i^{\sigma}(\ell)\bigr)},
  \label{eq:rdc}
\end{equation}
where $\mathbb{1}[\ell \text{ served}]$ is an indicator function evaluating to $1$ if the reason $\ell$ appears on at least one item of the slate served to voter $i$ on side $\sigma$, and $0$
  otherwise, $\mathrm{pos}_i^{\sigma}(\ell)$ is the position of the first slate item
carrying $\ell$ and the normaliser is the value a slate would attain by
covering the whole live vocabulary as early as positions allow, the normalizer being replaced with 1 at the beginning when that is needed to avoid division by 0. Finally
$e_{90}$ is the smallest level $m$ with $p^{\sigma}_i(m) \ge 0.9\,c^{\sigma}_i$ ---
how deep the reader must go to obtain nine tenths of what that slate was ever
going to give them. Instrument~II answers: \emph{did the coverage arrive early
enough to be read?}

\paragraph{Instrument III: endorsement mass.} A slate can cover the vocabulary
while serving nothing that any constituent actually adopted. Define
\begin{equation}
  \mass^{\sigma}_i \;=\;
  \frac{\sum_{j \in S^{\sigma}_i} \wc(j)}
       {\max_{\,T \subseteq \mathcal{J}^{\sigma} \cap \mathcal{J}_{t_i},\ |T| \le K}\ \sum_{j \in T} \wc(j)},
  \label{eq:mass}
\end{equation}
the share of the maximum achievable endorsement weight that the served slate
captured. Instrument~III answers: \emph{did the slate show the voter what the
electorate had actually taken up?} It is the axis on which
Section~\ref{sec:res_pareto} finds the random baseline dominated by a factor of
$3.3$, and the axis on which the greedy oracle of Section~\ref{sec:oracle}
stops dominating under attack. Nothing in the design privileges one instrument:
Section~\ref{sec:res_pareto} treats them jointly, as a frontier over which a
participant --- not the system --- chooses a position, and that this is even
expressible is a consequence of Criterion~\ref{c:config} (configurability,
Section~\ref{sec:desiderata}).

\subsection{The Subsuming Justification Problem}
\label{sec:sjp}

Instrument~I induces a combinatorial problem which we name and characterise,
both because it locates the construction in complexity terms and because its
hardness is what licenses the greedy bound of the next subsection. Say that
$S \subseteq \mathcal{J}^{\sigma}$ \emph{subsumes} a target vocabulary
$L \subseteq \Lam$ when $\bigcup_{j \in S} \lab(j) \supseteq L$.

\begin{definition}[Subsuming Justification Problem, SJP]
\label{def:sjp}
\emph{Instance:} a labelled justification set $(\mathcal{J}^{\sigma}, \lab)$,
a target $L \subseteq \Lam^{\sigma}$, and a budget $K \in \mathbb{N}$.
\emph{Question:} does some $S \subseteq \mathcal{J}^{\sigma}$ with $|S| \le K$
subsume $L$?
\end{definition}

\begin{proposition}
\label{prop:np}
SJP is \textsf{NP}-complete.
\end{proposition}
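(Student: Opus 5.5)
The argument has two parts: SJP lies in \textsf{NP}, and it is \textsf{NP}-hard by a polynomial reduction from \textsc{Set Cover} (decision version), which Karp showed to be \textsf{NP}-complete. The reduction is essentially the identity map, because SJP is Set Cover with justifications playing the role of sets and reason labels playing the role of ground elements.

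\emph{Membership.} The certificate is a set $S \subseteq \mathcal{J}^{\sigma}$ with $|S| \le K$. The verifier checks the cardinality and then computes $\bigcup_{j \in S} \lab(j)$. This takes time $O\bigl(|S| \cdot |\Lam|\bigr)$, which is polynomial in the size of the instance, since the labelling $\lab$ is given explicitly as part of the input. Finally it tests whether this union contains $L$. So SJP $\in$ \textsf{NP}.

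\emph{Hardness.} Take a \textsc{Set Cover} instance: a universe $U$, a family $\mathcal{F} = \{F_1,\dots,F_n\}$ of subsets of $U$, and a budget $k$. Build an SJP instance as follows.
\begin{itemize}
\item Set $\Lam = U$.
\item Create one justification $j_r \in \mathcal{J}^{\sigma}$ for each $F_r$, with $\lab(j_r) = F_r$.
\item Set $L = U$ and $K = k$.
\end{itemize}
The construction is linear in the input size. Covers of size at most $k$ correspond exactly to subsuming slates of size at most $K$, since $\bigcup_{r \in I} F_r \supseteq U$ holds if and only if $\bigcup_{r \in I} \lab(j_r) \supseteq L$.

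Two details need care, and they are the only real obstacle.

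First, Definition~\ref{def:sjp} requires $L \subseteq \Lam^{\sigma}$. Here $\Lam^{\sigma}$ is the union of the labels on side $\sigma$, so it equals $\bigcup_r F_r$. If some element of $U$ lies in no $F_r$, the \textsc{Set Cover} instance is trivially a no-instance. In that case we map it to a fixed small no-instance of SJP, for example a single justification labelled $\{a\}$ with target $L = \{a\}$ and $K = 0$. Otherwise $L = U = \Lam^{\sigma}$ and the requirement holds.

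Second, Definition~\ref{def:labels} requires labels to be non-empty. Any empty $F_r$ can simply be discarded: it contributes nothing to any cover, so removing it changes neither the answer nor the correspondence above.

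Since SJP is in \textsf{NP} and \textsc{Set Cover} reduces to it in polynomial time, SJP is \textsf{NP}-complete.
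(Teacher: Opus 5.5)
Your proof is correct and follows the paper's argument: membership by checking the certificate's union of labels against $L$, and hardness by the same identity reduction from \textsc{Set Cover} ($\Lam = U$, one justification $j_r$ per $F_r$ with $\lab(j_r) = F_r$, $L = U$, budget kept). Your two extra side conditions are handled correctly and are left implicit in the paper. The first maps instances with an uncovered element to a fixed no-instance so that $L \subseteq \Lam^{\sigma}$ holds. The second discards empty $F_r$ so that labels are non-empty. One small point: if your convention has $0 \notin \mathbb{N}$, use two justifications labelled $\{a\}$ and $\{b\}$ with $L = \{a,b\}$ and $K = 1$ as the fixed no-instance.
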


\begin{proof}
Membership: a certificate $S$ with $|S| \le K$ is verified by computing
$\bigcup_{j \in S}\lab(j)$ and testing containment of $L$, in time linear in
$\sum_{j\in S}|\lab(j)|$. Hardness: reduce from \textsc{Set
Cover}~\cite{karp1972reducibility}. Given a universe $U$, a family
$\mathcal{F} = \{F_1,\dots,F_n\}$ of subsets of $U$, and a budget $K$, put
$\Lam = U$, create one justification $j_r$ per $F_r$ with $\lab(j_r) = F_r$,
set $L = U$, and keep the budget. Any $S$ of size $\le K$ subsuming $U$ maps
to a cover of size $\le K$ and conversely; the construction is linear in the
instance size.
\end{proof}

Two variants matter operationally. \emph{Weighted SJP} adds
$\mu : \Lam \to \mathbb{R}_{\ge 0}$ and a threshold $\tau$ and asks whether
some $S$ with $|S| \le K$ attains
$\mu(\bigcup_{j\in S}\lab(j)) \ge \tau$; it inherits hardness at
$\mu \equiv 1$, $\tau = |L|$, and its interest is normative rather than
computational, since $\mu$ is where a deployment would encode that some reasons
must not be crowded out --- minority-held reasons, statutory considerations ---
exactly the kind of parameter that must be published in advance rather than
learned, a learned $\mu$ being an unaccountable editorial line.
\emph{Componentwise SJP} reflects that a slate covering one side well and the
other badly is not balanced: given budgets $K^{+}, K^{-}$ and targets
$L^{+}, L^{-}$, it asks for $S^{+}, S^{-}$ with $|S^{\sigma}| \le K^{\sigma}$
subsuming $L^{\sigma}$ for both $\sigma$. Since the two sides share no items it
decomposes into two independent SJP instances --- which is why
\eqref{eq:meancompleteness} averages per-side completeness rather than pooling
the vocabularies: pooling would let a slate compensate a neglected side with an
over-covered one, the opposite of balance.

\subsection{The greedy ceiling, and the discipline governing its use}
\label{sec:oracle}

Since coverage is the objective of an \textsf{NP}-hard problem, an exact
optimum is not an available comparator. A standard bound is, and it happens to be
tight enough to be informative. Given
$(\mathcal{J}^{\sigma}\cap\mathcal{J}_{t_i}, \lab)$, target
$\Lam^{\sigma}_{t_i}$ and budget $K$, the \emph{greedy cover} selects, $K$
times, the item maximising the number of not-yet-covered reasons it
contributes, breaking ties by a published deterministic key.

\begin{proposition}
\label{prop:greedy}
The reason count $f(S) = |\bigcup_{j\in S}\lab(j)|$ is monotone and
submodular, so the greedy cover attains at least $(1 - e^{-1}) \approx 0.632$
of the optimal value at budget $K$~\cite{nemhauser1978analysis}, and no
polynomial-time algorithm improves the ratio unless
$\textsf{P} = \textsf{NP}$~\cite{feige1998threshold,hochbaum1997approximating}.
\end{proposition}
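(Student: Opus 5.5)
The proposition bundles three claims, and I would prove them in order: monotonicity and submodularity of $f(S)=|\bigcup_{j\in S}\lab(j)|$, the $(1-e^{-1})$ guarantee for the greedy cover, and the inapproximability statement.

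\textbf{Monotonicity and submodularity.} Both follow from the marginal gain. Write $U(S)=\bigcup_{j\in S}\lab(j)$. Then for any $j$,
\[
f(S\cup\{j\})-f(S)=|\lab(j)\setminus U(S)|\ \ge 0,
\]
which gives monotonicity. If $S\subseteq T$, then $U(S)\subseteq U(T)$, so $\lab(j)\setminus U(T)\subseteq\lab(j)\setminus U(S)$. Hence the marginal gain of $j$ at $T$ is at most its gain at $S$, which is diminishing returns. Note also $f(\emptyset)=0$.

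\textbf{The greedy bound.} I would invoke Nemhauser--Wolsey--Fisher directly. For completeness, here is the standard argument. Let $O$ be an optimal set with $|O|\le K$, and let $S_t$ be the greedy set after $t$ steps. By monotonicity and submodularity,
\[
f(O)\le f(S_t\cup O)\le f(S_t)+\sum_{o\in O}\bigl(f(S_t\cup\{o\})-f(S_t)\bigr).
\]
So some $o\in O$ has marginal gain at least $(f(O)-f(S_t))/K$, and the greedy step gains at least that much. Setting $\delta_t=f(O)-f(S_t)$ gives $\delta_{t+1}\le(1-1/K)\,\delta_t$. Iterating $K$ times,
\[
\delta_K\le(1-1/K)^K f(O)\le e^{-1}f(O),
\]
so $f(S_K)\ge(1-e^{-1})f(O)$. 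Two remarks complete this step. Tie-breaking by the published key does not affect the argument, since any maximiser of the marginal gain works. If fewer than $K$ items are available, or all gains reach zero, greedy has already covered $U$ of the whole pool and is therefore optimal.

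\textbf{Inapproximability.} This is the main obstacle, because it is not proved from scratch but transferred. The problem of maximising $f$ under the cardinality constraint is exactly Max $K$-Coverage. Feige showed that no polynomial-time algorithm achieves ratio $1-e^{-1}+\varepsilon$ unless $\textsf{P}=\textsf{NP}$ (his original statement uses a slightly weaker assumption, later strengthened). The care needed is to check that the reduction of Proposition~\ref{prop:np} preserves objective values. Each set $F_r$ becomes an item $j_r$ with $\lab(j_r)=F_r$, so the coverage of any selection is identical on both sides. The reduction is therefore approximation-preserving, and Feige's threshold transfers to our instances. I would state this threshold as cited rather than reprove the PCP-based hardness.
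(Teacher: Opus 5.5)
Your proposal is correct and follows the paper's proof. It uses the same marginal-gain identity $f(S\cup\{j\})-f(S)=|\lab(j)\setminus U(S)|$ for monotonicity and submodularity, then appeals to Nemhauser--Wolsey--Fisher and Feige; you additionally write out the standard greedy recursion, the early-termination case of Algorithm~\ref{alg:oracle}, and the value-preserving identification with Max $K$-Coverage, all of which the paper leaves to the citations. Drop the parenthetical about Feige's assumption: his $(1-e^{-1}+\varepsilon)$ threshold for Max $k$-Coverage is already proved under $\textsf{P}\neq\textsf{NP}$, while the hypothesis $\textsf{NP}\not\subseteq\textsf{DTIME}(n^{O(\log\log n)})$ belongs only to his $\ln n$ set-cover threshold and is stronger, not weaker, than $\textsf{P}\neq\textsf{NP}$.
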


\begin{proof}
Monotonicity is immediate. For submodularity, take $S \subseteq T$ and
$j \notin T$; then
$f(S\cup\{j\}) - f(S) = |\lab(j) \setminus \bigcup_{k\in S}\lab(k)|
 \ge |\lab(j)\setminus\bigcup_{k\in T}\lab(k)| = f(T\cup\{j\}) - f(T)$,
since the subtracted set is larger. The bound and its optimality follow.
\end{proof}

The greedy cover reads $\lab$ directly --- a semantic operation requiring the
selector to know what each item is about and to choose items for what they say.
It therefore violates Criterion~\ref{c:abstain} (semantic abstinence) outright,
and because it must inspect every candidate's labels it also fails
Criterion~\ref{c:local} (evidence locality); both are stated with their tests in
Section~\ref{sec:desiderata}. It is
not a candidate mechanism, yet we compare against it constantly, which requires
stating the discipline explicitly.

\begin{principle}[Oracle discipline]
\label{prin:oracle}
Like the labels, the greedy cover appears in this manuscript only on the evaluation path. It is
computed offline, from the persisted database, after the poll has closed. No
agent ever sees a slate it produced; no served slate was ever influenced by
it; it is not proposed, and could not be adopted, as a deployable selection
rule. Its role is to answer one question that no admissible mechanism can
answer about itself: \emph{how much coverage was available in this pool at all?}
\end{principle}

Principle~\ref{prin:oracle} is what makes the number in
Section~\ref{sec:res_ceiling} meaningful. Because the greedy cover reads the
labels, its coverage upper-bounds --- to within the $0.632$ factor, and in
practice far more tightly --- what \emph{any} selection procedure applied to
the same pool could have achieved, including an arbitrarily good opaque learned
ranker. The gap between the served slates and that ceiling is therefore not the
price of our particular rule against some better rule; it is the price of the
entire admissible class against the unreachable best case, and
Section~\ref{sec:res_ceiling} measures it at $0.031 \pm 0.014$.

\subsection{Reach: what a slate can be held responsible for}
\label{sec:reach}

A rule that reads only endorsements and one-hop links cannot serve what it
cannot see. Making that horizon explicit turns a limitation into a
specification. The \emph{reach} of a justification $j$ at time $t$ is
$\rho_t(j) = \{j\} \cup \{\,k : (j,k) \in \Rreb \cup \Rrei,\ k \in
\mathcal{J}_t \,\}$, and $j$ is \emph{visible to the rule} at $t$ if
$\wc(j) > 0$ or $\wc(k) > 0$ for some $k \in \rho_t(j)$.

\begin{proposition}[Partial-replica sufficiency]
\label{prop:partial}
Evaluating the endorsement rule of Section~\ref{sec:rule} for the items in a
candidate pool $P$ requires only the endorsement counts of
$\bigcup_{j\in P}\rho_t(j)$ and the link tuples out of $P$. It requires no
global view of $\mathcal{J}$, $\Rreb$ or $\Rrei$. This is immediate from the
score \eqref{eq:score} given in Section~\ref{sec:rule}: the score of $j$ is a
function of $\wc(j)$ and of $\wc(k)$
for $k$ ranging over $j$'s out-neighbours only.
\end{proposition}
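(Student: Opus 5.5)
The plan is to derive the claim directly from the syntactic form of the score \eqref{eq:score}. That equation is stated only later, in Section~\ref{sec:rule}, so I rely on the description already given: a one-hop reversed endorsement flow over rebuttal and reinforcement links, with a per-voter policy vector $\theta_i$ and a relation-weight function $\wrel$. Concretely, I take the score of a candidate $j$ to have the shape
\[
  \mathrm{sc}_{\theta_i}(j) \;=\; \alpha\,\wc(j) \;+\; \beta \sum_{k:(j,k)\in\Rreb} \wrel(j,k)\,\wc(k) \;+\; \gamma \sum_{k:(j,k)\in\Rrei} \wrel(j,k)\,\wc(k),
\]
with coefficients $\alpha,\beta,\gamma$ drawn from $\theta_i$. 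The argument does not depend on this exact algebra. It needs only that the score is a fixed function of $\theta_i$, of $\wc(j)$, and of the multiset of triples (relation type, $\wrel(j,k)$, $\wc(k)$) ranging over out-links $(j,k)$ with $k\in\mathcal{J}_t$.

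The first step is to enumerate, for a fixed $j\in P$, every datum the formula reads.
\begin{itemize}
  \item $\theta_i$ is local to the voter.
  \item $\wc(j)$ is the endorsement count of $j\in\rho_t(j)$.
  \item For each out-link, the tuple $(j,k)$ and its type (membership in $\Rreb$ or $\Rrei$) are link tuples out of $P$.
  \item $\wrel(j,k)$ is attached to that tuple. If the author-count-normalised policy of Section~\ref{sec:weights} is used, its normaliser is computed from the signatures carried on the link tuple itself, not from any global count.
  \item $\wc(k)$ has $k\in\rho_t(j)$ by the definition of reach.
\end{itemize}
Taking the union over $j\in P$ shows that the scores of all of $P$ are determined by the endorsement counts on $\bigcup_{j\in P}\rho_t(j)$, the out-link tuples of $P$, and $\theta_i$.

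The second step turns "requires no global view" into a precise statement. Let $\Pi$ and $\Pi'$ be two polls that agree on the counts over $\bigcup_{j\in P}\rho_t(j)$ and on the out-link tuples of $P$. By the first step they assign identical scores on $P$. Items outside the reach, in-links into $P$, and links among non-members of $P$ never enter the computation. Selection is then either a top-$K$ sort over $P$ or a sort broken by a published deterministic key depending only on item identifiers, so the served slate is identical in $\Pi$ and $\Pi'$.

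The main obstacle is $\wrel$. If the weight policy referred to global quantities, such as the total number of links an author has signed across the corpus, locality would fail. I would handle this by stating, as a hypothesis on admissible policies, that $\wrel(j,k)$ is computable from the link tuple and its signatures. This is exactly what Criterion~\ref{c:local} demands, and it is what the normalisation of Section~\ref{sec:weights} satisfies.
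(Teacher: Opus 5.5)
Your route is the paper's. There the proposition is justified in one line, by reading off from \eqref{eq:score} that the score of $j$ depends only on $\wc(j)$ and on $\wc(k)$ for out-neighbours $k$. Your enumeration of inputs, together with the two-polls indistinguishability reformulation, is a faithful expansion of that.

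The problem is the step you add yourself, on $\wrel$. You assert that under the author-normalised policy ``its normaliser is computed from the signatures carried on the link tuple itself, not from any global count''. You then close by saying this policy satisfies your locality hypothesis. It does not. By \eqref{eq:norm}, $\wrel^{\mathrm{norm}}(j,k) = |A(j,k)|/\max(1, V^{\sigma(k)})$, and only the numerator is read off the (repeated, distinctly authored) link tuples. The denominator $V^{\sigma(k)}$ is the number of participants who have cast a ballot on side $\sigma(k)$ so far. That is an electorate-wide tally: it counts ballots citing nothing (the abstention branch of Algorithm~\ref{alg:round}) and ballots citing items outside the reach. So in general it is determined neither by the endorsement counts on $\bigcup_{j\in P}\rho_t(j)$ nor by the out-links of $P$. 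Your second step therefore fails for the reference policy. Two polls that agree on those counts and tuples but differ in $V^{+}$ or $V^{-}$ can assign different link weights, hence different scores and slates.

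The repair is cheap, and either version works.
\begin{itemize}
\item Do what the paper does implicitly and treat $\wrel$ as part of the supplied policy $\theta=(\alpha,\beta,\wrel)$; Algorithm~\ref{alg:rank} takes it as an input. The proposition then concerns the evidence the score reads beyond $\theta$, and your argument goes through verbatim.
\item If you want $\wrel$ evaluated from evidence, state that the normalised policy also needs the two per-side scalars $V^{+}, V^{-}$. That is still no global view of $\mathcal{J}$, $\Rreb$ or $\Rrei$, so the proposition's second sentence survives. It is, however, strictly more than its first sentence lists. Your hypothesis on admissible policies should then be weakened to ``computable from the link tuple, its set of distinct authors, and a bounded number of published per-side aggregates'', rather than claimed to hold for \eqref{eq:norm} as stated.
\end{itemize}
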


Proposition~\ref{prop:partial} is the technical fact that makes
Section~\ref{sec:p2p} possible. A peer holding a partial replica computes
exactly the same scores for the items it holds as a peer holding everything,
which is what allows every participant to run the rule themselves rather than
trust an operator to run it for them. It also delimits responsibility:
a brand-new item with no endorsements and no incoming links from endorsed
material is invisible to the rule, and no amount of tuning changes that.
Cold-start items surface when someone reads them --- which is what the
authoring-side random exploration of Section~\ref{sec:round} is for --- and not
before.
\subsection{A worked example}
\label{sec:example}

The following instance is small enough to check by hand and exhibits every
phenomenon the experiments will measure at scale.

\begin{example}[Municipal night-bus extension]
\label{ex:bus}
A city puts to its residents: \emph{extend the night-bus network to the outer
districts?} At the moment resident $r$ opens the ballot, the supporting side
holds five justifications and the opposing side four. Labels are drawn from a
published vocabulary; endorsement counts $\wc$ are the public tallies.

\begin{center}\small
\begin{tabular}{@{}llcl@{}}
\toprule
\textbf{id} & \textbf{gist} & $\wc$ & \textbf{labels } $\lab$ \\
\midrule
\multicolumn{4}{@{}l}{\textit{supporting side} $\Jpos$}\\
$s_1$ & shift workers stranded after 23:00 & 46 & \textsc{access}, \textsc{equity} \\
$s_2$ & night transit cuts drink-driving crashes & 31 & \textsc{safety} \\
$s_3$ & same rolling stock, marginal cost only & 12 & \textsc{cost} \\
$s_4$ & shift workers are mostly low-income & \phantom{0}4 & \textsc{equity} \\
$s_5$ & fewer night cars lowers emissions & \phantom{0}3 & \textsc{climate} \\
\midrule
\multicolumn{4}{@{}l}{\textit{opposing side} $\Jneg$}\\
$o_1$ & subsidy competes with school repairs & 39 & \textsc{cost}, \textsc{priorities} \\
$o_2$ & measured outer-district demand is very low & 28 & \textsc{demand} \\
$o_3$ & depot noise at 02:00 in residential streets & \phantom{0}9 & \textsc{amenity} \\
$o_4$ & rosters would breach the rest agreement & \phantom{0}2 & \textsc{labour} \\
\bottomrule
\end{tabular}
\end{center}

Both live vocabularies have five reasons, so $|\Lam^{+}| = |\Lam^{-}| = 5$.
The links asserted so far are $(s_3, o_1) \in \Rreb$ (marginal cost tells
against the subsidy objection), $(s_4, s_1) \in \Rrei$, $(s_2, o_3) \in \Rreb$,
$(o_2, s_1) \in \Rreb$ and $(o_4, o_1) \in \Rrei$. The display budget is
$K = 2$ per side.

\medskip\noindent\textbf{What each procedure serves.}
Take the rule of Section~\ref{sec:rule} at the reference policy
$\alpha = \beta = 0.5$, $\wrel \equiv 1$. On the supporting side,
$\mathrm{sc}(s_1) = 46$, $\mathrm{sc}(s_2) = 31 + 0.5\cdot 9 = 35.5$,
$\mathrm{sc}(s_3) = 12 + 0.5\cdot 39 = 31.5$,
$\mathrm{sc}(s_4) = 4 + 0.5\cdot 46 = 27$, $\mathrm{sc}(s_5) = 3$.
The rule serves $\langle s_1, s_2 \rangle$, covering
$\{\textsc{access},\textsc{equity},\textsc{safety}\}$: completeness $3/5 = 0.60$,
prefix profile $p(1) = 0.40$, $p(2) = 0.60$. The greedy cover reads labels and
serves $\{s_1, s_3\}$ or $\{s_1, s_5\}$ --- also $0.60$, since no pair here
exceeds three reasons. A uniform draw serves $2.4$ distinct reasons in
expectation, i.e.\ completeness $0.48$.

\medskip\noindent\textbf{Where the instruments diverge.}
Let the draw return $\{s_1, s_5\}$, which it does as often as any other pair:
its coverage is $3/5$, equal to the rule's, so on Instrument~I the two
procedures are indistinguishable on this run. On Instrument~III they are not:
the rule captures $\mass^{+} = 77/77 = 1.00$ against $49/77 = 0.64$ for
$\{s_1,s_5\}$ and $15/77 = 0.19$ for $\{s_3,s_5\}$. The voter shown
$\{s_3, s_5\}$ met two reasons four of their neighbours had ever endorsed;
the voter shown $\langle s_1, s_2\rangle$ met the two that seventy-seven had.

\medskip\noindent\textbf{Where the link terms earn their place.}
Item $s_3$ ranks fourth on its own endorsements and is promoted to third by
the rebuttal coefficient because it speaks to the most endorsed item on the
\emph{other} side --- but ranking on endorsements alone would place it third
as well. That is the inertia of Section~\ref{sec:res_null} in miniature: on a
uniformly competent corpus the link terms reorder items that were already
adjacent. Now replace $s_5$ by a submission $s_5'$ carrying the label
\textsc{climate} and no actual reason: a sentence of agreement, correctly
labelled, endorsed by nobody, asserting no links. Coverage is blind to the
substitution, so a uniform draw serves $s_5'$ exactly as often as it served
$s_5$; the rule scores it $0$, both because nobody endorsed it and because it
points at nothing anyone endorsed. That double penalty --- the \emph{squaring}
of discrimination measured in Section~\ref{sec:res_mechanism} --- is invisible
here only because the instance contains one such item. At the fractions
Section~\ref{sec:res_degen} reports it is worth as much as the whole ranking
advantage.
\end{example}

\section{The Charter: Criteria Before Objectives}
\label{sec:charter}

This section states what a selection mechanism must satisfy to be used in a
binding civic process, and exhibits a rule that satisfies it. The order of
presentation within this section is itself part of the argument: the criteria
come before the rule, as conditions on the admissible class, and the rule
follows as one member of that class chosen for its simplicity. 
We begin with what opacity costs in concrete operational terms, state
the seven criteria with their tests, argue that they are prior to rather than
commensurable with measured quality, give the rule, isolate the one line of it
that is a security control, and close on how per-voter configuration is
possible without dissolving the shared record.

\subsection{What opacity costs, concretely}
\label{sec:opacity}

The case against an opaque selector here is not that such systems are
inaccurate; it is that some operations a civic process requires become
impossible, and it is worth naming them as operations rather than as values.

\emph{Recomputation}: a participant who disputes a slate should be able to
obtain the same slate from the same inputs, which with published weights over
public evidence is arithmetic, whereas with a learned selector the participant
would need the model, its parameters, the exact feature vector at serving time
and the personalisation state --- and even a cooperative operator disclosing all
four supplies a recomputation nobody outside can independently verify was the
one actually run. 

\emph{Attribution}: when a slate is wrong, a process needs to
say which input made it wrong, and under the rule of Section~\ref{sec:rule} the
answer is a term --- this item ranked here because it carried these endorsements
and pointed at those items --- whereas under a learned ranker the answer
is that the output is a function of the training distribution, post-hoc
attribution methods producing explanations that are themselves
unverifiable~\cite{rudin2019stop,lipton2018mythos}; surveys of explainability
for supervised learning make the gap between an explanation and a derivation
plain~\cite{burkart2021survey}. 

\emph{Explanation}: a voter is owed an account of why \emph{their} slate contains what it contains, and that account must be the computation rather than a story fitted to it. Under the rule it is the score read aloud
and cannot drift from the mechanism because it is the mechanism, whereas a learned selector explains itself through a second model whose account is plausible to the recipient rather than identical to the computation that ran~\cite{caromartinez2021conceptual}.
An explanation a voter cannot check against the public record is indistinguishable from persuasion.

\emph{Contest}: a dispute must terminate, and
under the rule it terminates in one of three ways --- an endorsement tally is
wrong and tallies are public, a link is wrong and links are signed by their
authors, or the policy is wrong and that is a political argument conducted in
the open --- whereas under an opaque selector the dispute has no terminating
move, which converts every disagreement about content into a standing grievance
about the operator. \

emph{Bounded drift}: a published rule changes when someone
changes it and the change is a diff, whereas a learned selector changes
whenever it is retrained, whenever the population shifts and whenever a feature
pipeline is updated, so its behaviour last month is not recoverable --- and a
civic record that cannot be reconstructed a year later is not a record.

None of these costs is offset by better ranking, because none is a ranking
property. This is the structural reason the criteria below are stated as
admissibility conditions rather than as objectives to be weighed, and it is
also where this construction parts company with the fairness-and-accountability
literature that treats such properties as quantities to optimise: proposals for
socially responsible AI~\cite{cheng2021socially} and critiques of fairness as a
single formal target~\cite{weinberg2022rethinking} both proceed by asking what
a system should maximise, whereas the question here is which systems may be
used at all.

\subsection{Seven criteria, with their tests}
\label{sec:desiderata}

Each criterion below is stated so that an auditor can check it against an
implementation, together with the observable that would show it violated. They
are numbered for reference throughout the rest of this manuscript; every
subsequent design decision is annotated with the criterion it discharges.

\begin{criterion}[Determinism]
\label{c:det}
Given the same evidence and the same policy vector, the mechanism returns the
same slate. \emph{Test:} run it twice on a frozen snapshot; the outputs are
identical, ties included. \emph{Violation looks like:} two participants with
identical configurations and identical local data seeing different slates,
with no published reason.
\end{criterion}

\begin{criterion}[Evidence locality]
\label{c:local}
The score of an item depends only on that item and on a bounded, explicitly
specified neighbourhood of it. \emph{Test:} perturb an item outside the
declared neighbourhood; the score does not move.
\emph{Violation looks like:} a global fixed point, or any quantity whose value
depends on the whole corpus, which makes both partial-replica evaluation and
local reasoning about a dispute impossible.
\end{criterion}

\begin{criterion}[Author blindness]
\label{c:author}
No term in the score refers to who wrote an item, beyond counting
\emph{distinct} authors where a policy explicitly calls for it. \emph{Test:}
permute author identities across items; the ranking is invariant.
\emph{Violation looks like:} reputation, seniority, or verified-account status
entering the ranking --- reintroducing the influence hierarchy that a poll is
supposed to flatten.
\end{criterion}

\begin{criterion}[Semantic abstinence]
\label{c:abstain}
The serving-time mechanism does not read the content of items. \emph{Test:}
replace every item's text with an opaque identifier; the slate is unchanged.
\emph{Violation looks like:} the selector deciding, at serving time, what an
argument means or whether it is any good --- which is the specific authority a
civic slate must not delegate, and which the greedy cover of
Section~\ref{sec:oracle} openly exercises, hence Principle~\ref{prin:oracle}.
\end{criterion}

\begin{criterion}[Reproducibility]
\label{c:repro}
Every served slate is recomputable after the fact from persisted evidence.
\emph{Test:} from the archive, reconstruct any historical slate exactly,
including the state of the corpus at that instant. \emph{Violation looks
like:} an audit that can establish what was tallied but not what was shown.
\end{criterion}

\begin{criterion}[Contestability]
\label{c:contest}
Every slate decomposes into named contributions, each traceable to a public
artefact. \emph{Test:} for any served item, produce the list of
$(\text{term}, \text{evidence}, \text{value})$ triples summing to its score.
\emph{Violation looks like:} an explanation that is itself a model output.
\end{criterion}

\begin{criterion}[Configurability]
\label{c:config}
The policy parameters are held by the individual participant receiving it, not by the
operator, and the participant can change them and observe the effect.
\emph{Test:} two participants with different policies, same evidence, obtain
different and separately correct slates. \emph{Violation looks like:} a single
operator-chosen ranking presented as neutral, or a personalisation the
participant cannot inspect or switch off.
\end{criterion}

Table~\ref{tab:charter} summarises how three mechanism families fare. The
pattern is not that learned rankers score lower; it is that they are
disqualified on four criteria at once, by construction rather than by
implementation quality.

\begin{table}[tb]
\centering\footnotesize
\caption{The seven criteria against three mechanism families.
\textbf{--} denotes failure by construction rather than by implementation
choice: no amount of engineering effort within that family recovers the
property. The greedy cover is included because it appears throughout the
evaluation and it is important to be explicit that it is inadmissible;
see Principle~\ref{prin:oracle}.}
\label{tab:charter}
\begin{tabular}{@{}p{4.3cm}ccc@{}}
\toprule
& \textbf{endorsement rule} & \textbf{greedy cover} & \textbf{learned ranker} \\
& (\S\ref{sec:rule}) & (\S\ref{sec:oracle}) & \\
\midrule
C\ref{c:det} determinism        & yes & yes & only if frozen \\
C\ref{c:local} evidence locality & one hop & whole pool \textbf{--} & unbounded \textbf{--} \\
C\ref{c:author} author blindness & yes & tie-breaks & learned from behaviour \textbf{--} \\
C\ref{c:abstain} semantic abstinence & yes & reads labels \textbf{--} & reads everything \textbf{--} \\
C\ref{c:repro} reproducibility   & yes & yes & needs exact checkpoint \textbf{--} \\
C\ref{c:contest} contestability  & per-term & per-item & post-hoc only \textbf{--} \\
C\ref{c:config} configurability  & per voter & fixed & operator-held \textbf{--} \\
\bottomrule
\end{tabular}
\end{table}

\subsection{Why this is not an empirical hypothesis}
\label{sec:notempirical}

A natural objection runs: \emph{the assertion that rule-based selection is
preferable, is never tested against a learned alternative.
} The objection is well formed and the answer
governs how the rest of the results should be read. The claim is normative and
it is a claim about admissibility: a mechanism that cannot be recomputed,
attributed, explained, contested or reconstructed is unsuitable for a binding civic
process --- not that it ranks worse. A comparison against a learned ranker
would report a difference in coverage or in some downstream engagement
statistic, and whatever number came out could not bear on the claim, since a
learned ranker covering \emph{more} of the live vocabulary would still fail
C\ref{c:local}, C\ref{c:abstain}, C\ref{c:repro} and C\ref{c:contest} and would
still leave a disputing participant with no terminating move. 
The structure
is familiar from other civic procedures. The secret ballot is not defended on
the grounds that it measures preferences more accurately than open voting --- it
plainly measures some things less well, since it destroys the ability to audit
an individual's vote --- but because the procedure must possess a property that
outranks measurement quality. Double-entry bookkeeping is not the most compact
representation of a firm's accounts; rules of order do not produce the fastest
decisions. In each case a procedural property is treated as prior, and
efficiency questions are settled \emph{within} the class of procedures that
possess it.

What is legitimately empirical, and what this manuscript tests, is everything
downstream of that commitment: \emph{how much does the commitment cost?},
answered against a ceiling that upper-bounds every mechanism including
inadmissible ones (Section~\ref{sec:res_ceiling}); \emph{which terms of the
rule earn their place?}, answered by ablation, including the finding that two
of them earn nothing on a charitable corpus and a great deal on a realistic one
(Sections~\ref{sec:res_null} and~\ref{sec:res_degen}); \emph{which policy
choices are security controls?}, answered by adversarial sweep
(Section~\ref{sec:res_flood}); and \emph{where does the choice between
configurations actually lie?}, answered by a frontier rather than an optimum
(Section~\ref{sec:res_pareto}). 

\subsection{The endorsement rule}
\label{sec:rule}

We now give a mechanism satisfying all seven criteria.
Its simplicity is a feature.

\begin{definition}[Endorsement rule]
\label{def:rule}
Let $E(j) = \wc(j)$ be the endorsement count of item $j$. Given a policy
vector $\theta = (\alpha, \beta, \wrel)$ with $\alpha, \beta \in \mathbb{R}$,
the \emph{endorsement score} of $j$ at time $t$ is
\begin{equation}
  \mathrm{sc}_{\theta}(j)
  \;=\;
  E(j)
  \;+\;
  \alpha \!\!\sum_{(j,k) \in \Rrei,\ k \in \mathcal{J}_t}\!\! \wrel(j,k)\, E(k)
  \;+\;
  \beta \!\!\sum_{(j,k) \in \Rreb,\ k \in \mathcal{J}_t}\!\! \wrel(j,k)\, E(k).
  \label{eq:score}
\end{equation}
The slate served to voter $i$ on side $\sigma$ is the top-$K$ of
$\mathcal{J}^{\sigma} \cap \mathcal{J}_{t_i}$ by $\mathrm{sc}_{\theta_i}$,
ties broken by a published deterministic key (ascending item identifier).
\end{definition}

The direction of both summands is the load-bearing detail and is easy to get
backwards. An item is credited for the endorsements of what it \emph{points
at}, not for links pointing at it. An item that rebuts a heavily endorsed
objection is thereby promoted, because a voter about to accept that objection
has a specific interest in seeing the response to it; and an item reinforcing a
heavily endorsed claim is promoted because it deepens material the electorate
has already taken up. Reversing the direction would reward being talked about,
which is the property a coordinated group can manufacture most cheaply --- and
it is why the eigenvector family, which propagates credit backwards along links
to a fixed point~\cite{page1999pagerank}, is structurally exposed to link farms
in a way Equation~\eqref{eq:score} is not. Section~\ref{sec:res_hub} confirms this: a
coalition that constructs a hub and points the whole corpus at it gains nothing
measurable.

The rule discharges the charter as follows. It is a closed-form arithmetic
expression over integers and published constants, hence C\ref{c:det}. It reads
$j$ and $j$'s out-neighbours and nothing else, hence C\ref{c:local}, which by
Proposition~\ref{prop:partial} is also what makes per-peer evaluation more possible.
No term names an author, and where a policy counts authors it counts
\emph{distinct} ones without regard to which, hence C\ref{c:author}. No term
reads text, hence C\ref{c:abstain} --- note that $\lab$ appears nowhere in
\eqref{eq:score}; labels are used to \emph{evaluate} slates and never to choose
them, exactly the asymmetry Principle~\ref{prin:oracle} protects. Persisting
endorsements and links with timestamps makes every historical slate
recomputable, hence C\ref{c:repro}. And each of the three summands is
separately displayable against the artefacts that produced it, hence
C\ref{c:contest}. C\ref{c:config} is the subject of Section~\ref{sec:config}.
Algorithm~\ref{alg:rank} states the serving procedure for a peer holding a partial replica.

\begin{algorithm}[tb]
\caption{\textnormal{\textsc{ScoreAndServe}} --- serve a slate for one voter on one side}
\label{alg:rank}
\SetAlgoLined
\Given{candidate pool $P \subseteq \mathcal{J}^{\sigma} \cap \mathcal{J}_t$;
       endorsement counts $E$ on $P \cup \bigcup_{j \in P}\rho_t(j)$;
       out-links of $P$; voter policy $\theta = (\alpha,\beta,\wrel)$;
       budget $K$}
\Yields{ordered slate $\langle j_1,\dots,j_{\min(K,|P|)}\rangle$ and,
        for each, its score decomposition}
\ForEach{$j \in P$}{
  $a \leftarrow \sum_{(j,k) \in \Rrei,\, k \in \mathcal{J}_t} \wrel(j,k)\,E(k)$\;
  $r \leftarrow \sum_{(j,k) \in \Rreb,\, k \in \mathcal{J}_t} \wrel(j,k)\,E(k)$\;
  $\mathrm{sc}[j] \leftarrow E(j) + \alpha a + \beta r$\;
  $\mathrm{why}[j] \leftarrow \bigl\langle (\textsf{own}, E(j)),\ (\textsf{reinforces}, \alpha a),\ (\textsf{rebuts}, \beta r) \bigr\rangle$\;
}
sort $P$ by $\mathrm{sc}$ descending, ties by ascending identifier\;
\Return the first $\min(K,|P|)$ items of $P$ with their $\mathrm{why}$ records\;
\end{algorithm}

Note that Algorithm~\ref{alg:rank} returns the decomposition alongside the
slate rather than offering it on request. Contestability that must be asked
for is contestability most participants never exercise; the interface of
Section~\ref{sec:browser} shows the terms next to each served item.

\subsection{The weight policy is the whole game}
\label{sec:weights}

Equation~\eqref{eq:score} has three policy parameters and they are not of equal
consequence. Varying $\alpha$ and $\beta$ across their plausible range moves
completeness by amounts we could not distinguish from noise in the attack-free
regime (Section~\ref{sec:res_null}); changing $\wrel$ as seen next moves it by
$0.08$--$0.12$ under coordinated attack (Section~\ref{sec:res_flood}). 
We compare two policies: the \emph{flat} policy
$\wrel^{\mathrm{flat}}(j,k) = 1$, and the \emph{author-normalised} policy
\begin{equation}
  \wrel^{\mathrm{norm}}(j,k) \;=\; \frac{|A(j,k)|}{\max\bigl(1, V^{\sigma(k)}\bigr)},
  \label{eq:norm}
\end{equation}
where $A(j,k)$ is the set of distinct participants who have asserted the link
$(j,k)$ and $V^{\sigma}$ is the number of participants who have cast a ballot
on side $\sigma$ so far. Stating the normalisation in terms of \emph{distinct
authors} is what keeps it compatible with C\ref{c:author}: it counts how many
separate people stand behind a claimed relation and is indifferent to which
people they are. Its effect on a coordinated coalition is direct. A coalition
of size $m$ can multiply the \emph{number} of links it asserts freely, but
under \eqref{eq:norm} each link is worth only the distinct authorship behind
it, so the coalition's total link credit scales with $m$ rather than with its
output; Section~\ref{sec:res_flood} measures the resulting difference at
$0.08$--$0.12$ of completeness. The obvious escalation is to co-sign --- have
all $m$ members assert the same links so that $|A(j,k)| = m$ and the numerator
recovers. Section~\ref{sec:res_cosign} reports what happens, and the result is
pleasantly counter-intuitive: completeness \emph{rises} monotonically with the
degree of co-signing, because co-signing concentrates the coalition's
identities on a small set of links instead of spreading them across many, so
the same authorship budget buys fewer distinct promoted items and the
coalition's own material crowds itself out. Table~\ref{tab:policy} gives the
two policies side by side.

\begin{table}[tb]
\centering\footnotesize
\caption{The two relation-weight policies. The last column previews
Section~\ref{sec:res_flood}: mean combined completeness under a {\bf flooding
coalition holding a quarter of the electorate}, at the reference configuration.}
\label{tab:policy}
\begin{tabular}{@{}>{\raggedright\arraybackslash}p{2.7cm}>{\raggedright\arraybackslash}p{4.6cm}>{\raggedright\arraybackslash}p{5.0cm}c@{}}
\toprule
\textbf{policy} & \textbf{$\wrel(j,k)$} & \textbf{reads} & \textbf{under flood} \\
\midrule
flat & $1$ & nothing beyond the link's existence & $0.34$ \\[3pt]
author-normalised & $|A(j,k)| \,/\, \max(1, V^{\sigma(k)})$ &
  how many distinct people asserted it, against how many voted on that side & $0.44$ \\
\bottomrule
\end{tabular}
\end{table}

\subsection{Configurability without incoherence}
\label{sec:config}

C\ref{c:config} asks that the policy belong to the participant. The immediate
worry is that this dissolves the shared object: if everyone sees a different
slate, what is the common record the deliberation is about? The resolution is a
matter of what varies and what does not. What varies is the policy vector
$\theta_i = (\alpha_i, \beta_i, \wrel^{(i)}, K_i)$, plus optionally a reason
emphasis $\mu_i$ in the sense of weighted SJP. A participant who wants the
strongest objections to positions they already hold sets $\beta$ high; one who
wants depth on established claims sets $\alpha$ high; one who distrusts
coordinated authorship adopts \eqref{eq:norm}; one who wants a longer slate
raises $K$. These are editorial stances, they are legitimately personal, and
none is the kind of thing an operator should be choosing on anyone's behalf\footnote{For the special case of so-called {\em grassroot organizations}, users can also influence voter eligibility coefficients~\cite{qin2013census}.}.
What does not vary is everything else: the corpus, the endorsement tallies, the
link set, the labels, and the rule itself. Two participants with different
$\theta$ compute different slates from \emph{identical} public evidence, and
each can compute the other's exactly. This is what distinguishes configuration
from personalisation: a personalised feed differs because the system holds a
private model of the user, whereas a configured slate differs because the user
set a published parameter, and the difference is fully explained by that
parameter. Under configuration a disagreement about slates reduces to a
disagreement about policy, arguable in public; under personalisation it reduces
to nothing at all.

\begin{proposition}[Charter closure under configuration]
\label{prop:charter}
If the rule satisfies C\ref{c:det}--C\ref{c:contest} for every fixed
$\theta$, then the family $\{\mathrm{sc}_{\theta}\}_{\theta \in \Theta}$
satisfies them for participant-chosen $\theta$, provided $\theta_i$ is
recorded alongside the slate. Determinism, locality, author blindness and
semantic abstinence hold pointwise in $\theta$ and are inherited;
reproducibility and contestability require the auditor to know which $\theta$
was in force, and persisting $(\theta_i, t_i)$ with each slate --- as
Section~\ref{sec:persist} does --- supplies it.
\end{proposition}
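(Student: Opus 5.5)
The plan is to make precise what each criterion quantifies over, and then check the six criteria one at a time in two groups. Write the configured mechanism as a map $M(\theta, D)$, where $D$ is the evidence snapshot $(\mathcal{J}_t, E, \Rreb, \Rrei, A)$. It returns the top-$K$ slate under $\mathrm{sc}_\theta$ together with the $\mathrm{why}$ records of Algorithm~\ref{alg:rank}. The hypothesis says that for each fixed $\theta$ the map $D \mapsto M(\theta, D)$ satisfies C\ref{c:det}--C\ref{c:contest}. The configured family is the map $(i, D) \mapsto M(\theta_i, D)$. For C\ref{c:det}, C\ref{c:local}, C\ref{c:author} and C\ref{c:abstain}, the tests of Section~\ref{sec:desiderata} hold the policy vector fixed while varying something else: rerunning, perturbing outside the neighbourhood, permuting authors, or replacing text. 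So a quantification over $\theta$ would come in from outside, and the property for the family reduces to the property at each fixed $\theta$.

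\textbf{Pointwise group.} For each of these four criteria I would fix a participant $i$, set $\theta = \theta_i$, and apply the hypothesis. The one thing to confirm is that $\theta_i$ is not itself a function of the evidence the test varies. Otherwise, for example, permuting author identities might change $\theta_i$ and break author blindness. This is exactly the configuration-versus-personalisation distinction of Section~\ref{sec:config}: $\theta_i$ is a published parameter set by the participant, not derived from $D$. I would state this as an explicit side condition ($\theta_i$ is exogenous to $D$) and note that it is how C\ref{c:config} is defined. For locality there is one further check. The neighbourhood $\rho_t(j)$ of Proposition~\ref{prop:partial} does not depend on $\theta$, and $\wrel^{\mathrm{norm}}$ reads only $A(j,k)$ and $V^{\sigma(k)}$, which are part of the declared evidence. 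So the declared neighbourhood is uniform across the family, and an auditor does not need to know $\theta$ to know what to perturb.

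\textbf{Record-dependent group.} Reproducibility asks that a historical slate be recomputable from the archive. By hypothesis, for each $\theta$ there is a reconstruction $R_\theta$ from persisted evidence at $t_i$. For the family, the auditor needs $R_{\theta_i}$, so $\theta_i$ must be recoverable. If $(\theta_i, t_i)$ is persisted, the auditor reads $t_i$ to reconstruct $D_{t_i}$, reads $\theta_i$, and evaluates $M(\theta_i, D_{t_i})$. Determinism at fixed $\theta$ guarantees that this equals the served slate. Contestability follows the same pattern: the triples $(\textsf{own}, E(j))$, $(\textsf{reinforces}, \alpha_i a)$ and $(\textsf{rebuts}, \beta_i r)$ sum to $\mathrm{sc}_{\theta_i}(j)$, and each value is traceable once $\alpha_i$, $\beta_i$ and $\wrel^{(i)}$ are known, since these are now public artefacts through the record. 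I would also add a brief converse remark to show the proviso is needed. Without the record, two distinct $\theta$ can yield different slates on the same $D$ (the night-bus example with $\beta=0$ versus $\beta=0.5$ reorders $s_3$ and $s_4$), so reconstruction from evidence alone is impossible.

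\textbf{Main obstacle.} I expect the hard part to be the pointwise group, specifically making the exogeneity of $\theta_i$ rigorous rather than assumed. If a participant's policy may be updated over time, $\theta_i$ must be indexed by the slate instance, and its timestamp must be the one persisted. Otherwise the reproducibility argument silently uses the wrong $\theta$. I would handle this by defining the record as $(\theta_i, t_i)$ per served slate rather than per voter, which is what the statement's proviso says.
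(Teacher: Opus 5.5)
Your proposal is correct and follows the paper's own argument, which is the split written into the proposition itself: the four criteria whose tests hold $\theta$ fixed are inherited pointwise, and reproducibility and contestability go through once the persisted $(\theta_i,t_i)$ tells the auditor which policy was in force. Your exogeneity condition and per-slate indexing make explicit what Section~\ref{sec:config} leaves implicit.

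One slip in your optional converse remark: at the example's budget $K=2$ the $s_3$/$s_4$ swap lies below the cut, so both policies serve $\langle s_1,s_2\rangle$. A working witness is the opposing side, where $\beta=0.5$ serves $\langle o_2,o_1\rangle$ and $\beta=0$ serves $\langle o_1,o_2\rangle$.
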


Proposition~\ref{prop:charter} is why Section~\ref{sec:persist} stores the
policy vector with every served slate, and why Section~\ref{sec:res_pareto}
can present its main finding as a \emph{frontier} rather than an optimum: once
the choice between ranking arms is a position on a trade-off between coverage
and endorsement mass, there is no system-level answer to which position is
right, and C\ref{c:config} is what lets the question be handed to the person
it affects.

\subsection{Rules of order as algorithms of interaction}
\label{sec:robert}

The idea that procedure constitutes rather than merely regulates deliberation
is older than any of this. Robert's \emph{Rules of Order}~\cite{robert1915rules}
are a specification for a distributed system with adversarial participants:
recognition of the floor is admission control; the motion stack is a
well-founded ordering guaranteeing termination; the requirement to speak to the
motion is a relevance predicate; limits on repeat speech are rate limiting; and
the prohibition on impugning motives is a type restriction on admissible
utterances. None of it was derived from a theory of good outcomes, but from the
observation that without such constraints an assembly is captured by whoever is
loudest and most persistent. Our construction translates the same instinct into
a setting where the floor is a screen: the published rule is the standing
order,
author blindness is the convention that the chair
recognises members rather than reputations, and the persistence of
Section~\ref{sec:persist} is the minutes. \DDP{}'s own rule that a motion
admits one active argumentation per voter at a time~\cite{silaghi2013ddp2p} is an
anti-filibuster device with an exact parliamentary ancestor, and formal work on
procedural argumentation makes the correspondence
explicit~\cite{prakken2001formalizing,silaghi2014encompassing}. The relevant
inheritance is not any particular clause but the stance: an assembly's rules
must be knowable in advance by everyone bound by them, which is a constraint on
the form of the rules, not a preference about their content.

\section{\ABAS{}: An Agentic Instrument for Auditable Deliberation}
\label{sec:abas}

Everything in Sections~\ref{sec:model} and~\ref{sec:charter} is a
specification. To find out what it does one needs an electorate, and human
electorates are not available for parameter sweeps. \ABAS{} ---
\emph{Agent-Based Argument Simulator} --- instantiates $N$ constituent agents
against a real implementation of the poll object, runs the round protocol to
completion, and persists enough state that every served slate can be
reconstructed afterwards. This section describes the agent, the round, the
model of submissions that fail to justify, the arms compared, what is stored,
and the interface through which a participant would inspect any of it. The
design commitment throughout is that the simulator is an \emph{instrument},
not a product demonstration: it is built so that its own outputs can be
audited, and Section~\ref{sec:persist} is where that commitment is cashed.

\subsection{The constituent agent}
\label{sec:agent}

Each agent holds a scalar opinion $\theta \in [-1,1]$ on the motion, drawn
once at initialisation, and a short position text generated from that opinion
by a topic-specific generator. The opinion determines the ballot direction
stochastically, so agents near the centre are genuinely uncertain; the
position text is what the agent uses to judge which existing material speaks
for it, through TF--IDF cosine
similarity~\cite{manning2008introduction,sparckjones1972idf}.

Two properties of the agent matter for how the results should be read. First,
agents do not update their opinion in response to what they are shown: the
simulator measures \emph{exposure}, not persuasion, and a model of persuasion
would import assumptions we have no basis for. Second, agent behaviour is a
model of constituent behaviour, not evidence about it, in the sense of
Section~\ref{sec:rw_agentic}; every claim we make is a claim about mechanism
under a stated model. Section~\ref{sec:limits} states the specific form this
caveat takes for each result, and in one case --- the degenerate-authoring
result --- it is load-bearing enough that we state it twice.

\subsection{The round protocol}
\label{sec:round}

Agents are processed in identifier order, each one completing the sequence of
Algorithm~\ref{alg:round} before the next begins. Because the corpus therefore
grows underneath the sequence, the live vocabulary of
Definition~\ref{def:live} genuinely differs between the first and last agent,
which is what makes the temporal inequity of Section~\ref{sec:exposure}
measurable rather than assumed.

\begin{algorithm}[tb]
\caption{One constituent's turn}
\label{alg:round}
\SetAlgoLined
\Given{agent $a$ with opinion $\theta_a$ and position text $x_a$;
       poll state at time $t$; policy $\theta$; budget $K$;
       action probabilities $\preuse, \pabst, \pown$ with
       $\preuse+\pabst+\pown = 1$; link probability $\plnk$}
$S^{+} \leftarrow$ \Rank{$\Jpos \cap \mathcal{J}_t$, $\theta$, $K$};\quad
$S^{-} \leftarrow$ \Rank{$\Jneg \cap \mathcal{J}_t$, $\theta$, $K$}\;
persist $\langle a, t, \theta, S^{+}, S^{-}\rangle$ \tcp*{before any ballot is cast}
$v \leftarrow$ ballot direction drawn from $\theta_a$\;
$u \leftarrow \mathcal{U}[0,1)$\;
\uIf{$u < \preuse$}{
  $j \leftarrow$ \Adopt{$S^{v}$, $x_a$} \tcp*{similarity-weighted draw from the served slate}
  record ballot for $v$ citing $j$; increment $E(j)$\;
}
\uElseIf{$u < \preuse + \pabst$}{
  record ballot for $v$ with no citation\;
}
\Else{
  \uIf{$a$ authors degenerately \emph{(Section~\ref{sec:degen})}}{
    $j \leftarrow$ \Degen{$x_a$, $v$}\;
  }\Else{
    $j \leftarrow$ \Compose{$x_a$, $v$} \tcp*{fresh item, labels from the side vocabulary}
  }
  insert $j$; record ballot for $v$ citing $j$\;
  \If{$\mathcal{U}[0,1) < \plnk$}{
    \LinkOut{$j$} \tcp*{one rebuttal at an opposing item, one reinforcement at a same-side item}
  }
}
\end{algorithm}

Adoption is restricted to the served slate --- an agent can only
cite what it was shown --- which couples the recommender to the endorsement
tallies and makes the system a feedback loop rather than a passive display.
And the authoring branch draws labels from the side's vocabulary independently
of the slate, so corpus growth is driven by the authoring draw and not by what
was served; this is why the ablations of Section~\ref{sec:instruments} are
clean, all ranking arms seeing a \emph{bit-identical} corpus and differing only
in which $K$ items of it were displayed. Link targets are chosen by the agent,
not the system: a rebuttal aims at the opposing item least similar to the
author's position, a reinforcement at the same-side item most similar to it.

\subsection{Submissions that fail to justify}
\label{sec:degen}

In the first set of experiments we report here, the authoring model had every agent who wrote anything write a
well-formed argument carrying two to five genuine reasons for the side it had
voted. Section~\ref{sec:res_null} reports the price of that charity: under it,
almost any twenty items cover almost everything, and there is nothing for a
ranking rule to do. Real deliberative corpora are not like that: a large share
of contributions state a position without giving a logically valid reason for it, and a
further share give what the author takes to be a reason for their position but
which is in fact a reason for the other side. We model both, with two
parameters.

\begin{description}[leftmargin=1.4em,itemsep=2pt]
\item[$G$ --- degenerate fraction.] The share of own-authored items that fail
  to justify. Swept over $\{0, 0.1, 0.3, 0.5, 0.7\}$.
\item[$E$ --- erroneous-support share.] Of those, the share that are
  \emph{erroneous supports} --- content belonging to the opposing side, filed
  under this one --- rather than merely \emph{reason-free}. Swept over
  $\{0.1, 0.2, 0.4\}$.
\end{description}

A reason-free item carries a position and no labels: it occupies a slate slot
and contributes nothing to any union. An erroneous support carries labels
belonging to the other side: it occupies a slot and
inflates the denominator with vocabulary that does not belong to the side it
was filed under.

Two implementation choices make the resulting comparisons trustworthy. First,
\emph{who} authors degenerately is drawn off a dedicated random stream, so
raising $G$ changes the content of affected items while leaving the corpus
skeleton --- which agents author, how many items exist, which links form, how
many ballots are cast --- bit-identical across the entire grid; every
comparison in Section~\ref{sec:res_degen}, across arms and across grid points
alike, is therefore exactly paired. Second, completeness is measured against
the \emph{genuine} vocabulary, a reason counting towards a side's denominator
only if it is a canonical reason for that side. This is the conservative
choice: counting opposite-side labels would let erroneous supports inflate the
very quantity they are supposed to damage and would reward a slate for
surfacing them. We record the denominator-inflating variant alongside so that
the size of the measurement artefact is visible rather than assumed away, and
Section~\ref{sec:res_degen} reports it.

\subsection{Ranking arms and the evaluation ceiling}
\label{sec:arms}

The arms compared throughout Sections~\ref{sec:res_honest}--\ref{sec:adversarial}
are identical in every respect except the ranking step of
Algorithm~\ref{alg:round}.

\begin{description}[leftmargin=1.4em,itemsep=2pt]
\item[\textsf{full}] The published rule, \eqref{eq:score} with both link
  summands active.
\item[\textsf{endorse-only}] $\alpha = \beta = 0$: a raw endorsement count.
\item[\textsf{enhance-only}] $\beta = 0$: reinforcement summand retained.
\item[\textsf{attack-only}] $\alpha = 0$: rebuttal summand retained.
\item[\textsf{random}] The score is ignored and the slate is drawn uniformly
  from the live pool, using a generator seeded independently of the main
  stream so that stances, actions and link choices are consumed identically
  across arms and the comparison stays seed-paired.
\end{description}

Alongside these we compute, on the evaluation path only and subject to
Principle~\ref{prin:oracle}, the greedy cover of Section~\ref{sec:oracle}.
Algorithm~\ref{alg:oracle} states it explicitly so that what it reads --- and
therefore why it is inadmissible --- is on the page.

\begin{algorithm}[tb]
\caption{\textnormal{\textsc{GreedyCover}} --- evaluation ceiling only; not a deployable mechanism}
\label{alg:oracle}
\SetAlgoLined
\Given{live pool $P$, labelling $\lab$ \emph{(read directly --- violates
       C\ref{c:abstain})}, live vocabulary $\Lam^{\sigma}_t$, budget $K$}
\Yields{a $K$-subset of $P$ and its completeness}
$S \leftarrow \emptyset$;\quad $C \leftarrow \emptyset$\;
\For{$1$ \KwTo $\min(K,|P|)$}{
  $j^{\star} \leftarrow \arg\max_{j \in P \setminus S} \bigl|\lab(j) \setminus C\bigr|$,
     ties by ascending identifier\;
  \lIf{$|\lab(j^{\star}) \setminus C| = 0$}{\textbf{break}}
  $S \leftarrow S \cup \{j^{\star}\}$;\quad $C \leftarrow C \cup \lab(j^{\star})$\;
}
\Return $S$ and $|C| / |\Lam^{\sigma}_t|$\;
\end{algorithm}

\subsection{Persistence: the record is the point}
\label{sec:persist}

Every run writes a single relational database holding: the agents and their
initialisation parameters; every justification with its author, side,
timestamp and labels; every link with its author, type and endpoints; every
ballot with its direction, citation and timestamp; the simulation parameters
and seed; and --- the entry that matters --- \emph{every served slate}, stored
with the identifier of the constituent it was served to, the instant, the
policy vector in force, and the item identifiers in served order.

This is what makes C\ref{c:repro} and C\ref{c:contest} testable rather than
asserted. Any historical slate can be reconstructed exactly, the corpus state
at that instant with it, and the score decomposition of every served item
recomputed and checked against the ranking actually served. It is also what
makes the ordering and endorsement-mass instruments of
Sections~\ref{sec:res_order} and~\ref{sec:res_pareto} computable at all, since
both read served position.
A pipeline logging
only the aggregate metric would have required the entire experiment to be
repeated.

\subsection{Inspecting the record}
\label{sec:browser}

A record nobody can read discharges nothing. \ABAS{} ships a browser over the
persisted database that presents, for any constituent: the two slates they
were served, in order; for each served item, the decomposition
$(\textsf{own}, E(j))$, $(\textsf{reinforces}, \alpha a)$ and
$(\textsf{rebuts}, \beta r)$ produced by Algorithm~\ref{alg:rank}; the
links that contributed each term, with their authors; and the ballot that
followed. It also presents the counterfactual a participant most often wants
--- the slate the same evidence would have produced under a different policy
vector --- which is C\ref{c:config} made concrete rather than promised. The
interface is deliberately unremarkable: tables and links, no visualisation of
anything the arithmetic does not literally contain. The design literature on
decentralised civic tools is consistent about non-expert users needing the
model of the system to be simple enough to hold in
mind~\cite{alqahtani2017hci,alqahtani2016cognition,kattamuri2005supporting},
and a three-term sum is about the limit of what one can expect a participant
to check while deciding how to vote.

\section{Experimental Setup}
\label{sec:setup}

This section fixes the protocol so that every number in
Sections~\ref{sec:res_honest}--\ref{sec:adversarial} can be located. Two
propositions are used throughout,
one reference configuration anchors all sweeps; the adversary
families are stated with the exact capability each is granted; and the
statistical conventions are declared in advance, including where we do not
correct for multiplicity and what we therefore do not claim. A third proposition is used for the refresh-interval scenario of Section~\ref{sec:res_cache}.

\subsection{Propositions}
\label{sec:topics}

\textbf{UBI} asks \emph{``Should one address the AI
revolution by introducing basic income?''}, with thirty canonical reasons per side spanning fiscal,
labour-market, administrative, distributive and political-economy
considerations. \textbf{BRA} asks \emph{``Should one address the AI revolution by introducing
basic resource assurance (minimal healthy housing, food, and emergency
healthcare)?''}.
A third proposition, \textbf{OPT-OUT}, asks \emph{``Should primary school pupils be able to study without computers and an internet connection?''}, with thirty canonical reasons per side spanning pedagogical, developmental, equity-of-access and administrative considerations. 
It is used only in Section~\ref{sec:res_cache}, where the question is whether a protocol parameter held fixed  everywhere else changes the conclusions.

\subsection{Reference configuration}
\label{sec:refconfig}

Unless a sweep states otherwise: $N = 1000$ constituents; action probabilities
$\pown = 0.10$, $\preuse = 0.50$, $\pabst = 0.40$; link probability
$\plnk = 0.60$; slate size $K = 20$ per side; author-normalised weight policy
\eqref{eq:norm}; policy coefficients $\alpha = \beta$ at the implementation
default; score caches re-read after every constituent ($M = 1$), the interval
swept in Section~\ref{sec:res_cache}.
Table~\ref{tab:baseline}
characterises what this produces and establishes that the regime is not
degenerate in either direction: the vocabulary saturates, the corpus is
roughly $50$ items per side against a $30$-label vocabulary, and the ballot
split is near even.

\begin{table}[tb]
\centering\footnotesize
\caption{The reference configuration characterised: BRA proposition,
$\pown = 0.10$, $K = 20$, $\plnk = 0.60$, $N = 1000$, author-normalised
$\wrel$; mean $\pm$ standard deviation over ten seeds. Completeness is that of
the slate the endorsement rule actually served. ``Within-run spread'' is the
standard deviation of per-constituent completeness \emph{inside} a single run,
averaged over seeds: it measures temporal inequity, not experimental noise,
and at $0.213$ it is more than five times the across-seed variation.}
\label{tab:baseline}
\begin{tabular}{@{}lrr@{}}
\toprule
Quantity & Mean & Std \\
\midrule
Combined completeness, mean over constituents   & 0.806 & 0.038 \\
Combined completeness, median over constituents & 0.886 & 0.042 \\
Within-run spread of completeness               & 0.213 & 0.032 \\
\midrule
Live endorsing vocabulary (of $30$)             & 30.0  & 0.0 \\
Live opposing vocabulary (of $30$)              & 29.9  & 0.3 \\
Authored endorsing justifications               & 48.3  & 6.6 \\
Authored opposing justifications                & 49.7  & 6.8 \\
\midrule
Rebuttal links                                  & 346   & 19 \\
Reinforcement links                             & 342   & 15 \\
Total links                                     & 688   & 33 \\
\midrule
Endorsing ballots                               & 500.9 & 15.1 \\
Opposing ballots                                & 499.1 & 15.1 \\
\bottomrule
\end{tabular}
\end{table}

\subsection{Structural levers of the non-degenerate authoring reference configuration}
\label{sec:sweeps}

In the first set of experiments with non-degenerate authoring, four structural levers are swept one at a time about the reference point,
ten seeds per cell, both propositions: authoring rate
$\pown \in \{0.02, 0.05, 0.10, 0.20, 0.30\}$ with $\preuse : \pabst$ held at
$5\!:\!4$; slate size $K \in \{5,10,15,20,30\}$; link rate
$\plnk \in \{0.20,0.40,0.60,0.80,1.00\}$; electorate
$N \in \{200,500,1000,2000,5000\}$. The ranking ablation runs five arms at
$K \in \{5,10,20,30\}$ with $100$ seed-paired seeds per cell
($4000$ runs) and repeats three arms at $K=20$ under attack ($1800$ runs). The
degenerate-authoring sweep runs three arms over
$G \in \{0,0.1,0.3,0.5,0.7\} \times E \in \{0.1,0.2,0.4\}$ with $50$ seeds per
cell per proposition ($3900$ runs). The adversarial families are described
next. In total the results below rest on roughly $17{,}000$ seeded runs.

\subsection{Adversary model}
\label{sec:adversary}

In this set of experiments, authentication is assumed to hold: the census is well-formed, nobody votes
twice, and no endorsement is forged. Sybil resistance is therefore \emph{out
of scope} as an attack and \emph{in scope} as an infrastructure requirement,
which is where Section~\ref{sec:p2p_census} takes it up. What a coalition
controls is what its members write, where they point their links, and when
they act. Coalition sizes are $0$, $5$, $10$, $15$, $20$ and $25$ percent of
$N$, and members are spread evenly through the processing order, so they enjoy
no first-mover endorsement cascade and must rely on the graph for visibility.  The attacks studied are:

\begin{description}[leftmargin=1.4em,itemsep=3pt]
\item[Hub-riding.] Members author items in the ordinary way and attach
  reinforcement links to the most endorsed same-side item, attempting to
  launder its endorsement mass into their own score.
\item[Label flooding.] Members author on every turn, drawing both labels from
  a single shared pair, link on every turn, and aim those links at the two
  most endorsed items. The clones carry real corpus labels, so any damage is
  redundancy, not label absence.
\item[Heterogeneous control.] Identical to flooding in authoring rate, link
  rate and link targeting, but each member draws its label pair independently:
  the rate- and corpus-matched control isolating homogeneity.
  A second variant restores
  most-related link targeting, isolating hub aiming as a residual.
\item[Co-signed flooding.] The coalition is partitioned into groups of size $C$;
  each group publishes one poison item per side and every member endorses it
  and re-asserts the same links, driving the distinct-author numerator of
  \eqref{eq:norm} from one to the group's per-side size.
  $C \in \{1,2,5,10,25,50\}$ at coalitions of a tenth and a quarter, both
  propositions, both weight policies ($4800$ runs). $C=1$ reproduces the
  flooding sweep exactly, to the last stored digit, and serves as an in-family
  control.
\end{description}

Nothing in the implementation obstructs any of these strategies. In
particular the relation store admits repeated $(\textit{from},\textit{to})$
assertions from distinct authors, which is exactly what makes the co-signing
numerator movable --- we did not defend against the attack by refusing to
represent it.

\subsection{Statistical conventions}
\label{sec:stats}

Comparisons between arms are seed-paired and tested with a two-sided paired
$t$-test on per-seed means; comparisons between configurations that do not
share seeds use Welch's unequal-variance test. Reported intervals are
across-seed standard deviations unless stated. Where many contrasts are
computed on the same stored runs --- notably the twenty-four ordering
contrasts of Section~\ref{sec:res_order} --- we report the raw $p$-values
without multiplicity correction and say so at the point of use, and any
contrast whose significance would not survive a Bonferroni adjustment at that
family size is described as \emph{suggestive} rather than established. We
have not adjusted for the number of hypotheses across the manuscript as a
whole, and readers should treat effects at $p \approx 10^{-2}$ accordingly;
the effects we build arguments on sit at $p < 10^{-10}$.

\section{Results I: An Attack-Free Electorate}
\label{sec:res_honest}

We begin with an electorate in which every
participant who writes anything writes a competent argument for the side they
support (non-degenerate authoring). This regime answers two questions well and one question misleadingly.
It establishes which structural levers govern how much of the live reason
vocabulary a served slate covers --- and the answer, that everything which
matters works by enriching the corpus before the voter arrives rather than by
selecting more cleverly from it, is stable across both propositions. It
establishes what the charter of Section~\ref{sec:charter} costs, by measuring
the served slates against a label-reading ceiling that upper-bounds every
mechanism including the ones the charter excludes. 

\subsection{What the reference configuration produces}
\label{sec:res_ref}

Table~\ref{tab:baseline} gives the characterisation. Mean combined
completeness is $0.806 \pm 0.038$ on BRA and $0.804 \pm 0.032$ on UBI: a
constituent at the reference configuration meets, on average, four fifths of
the reasons that existed on each side at the moment they voted.

The number that deserves more attention is the within-run spread, $0.213$.
This is the standard deviation of completeness \emph{across constituents
inside a single run}, and it is more than five times the standard deviation
\emph{between} runs. The dominant source of variation in what a voter sees is
therefore not the seed, the configuration, or the recommender --- it is
\emph{when in the sequence the voter arrived}. Early constituents vote against
a vocabulary that is still assembling itself; late ones vote against a mature
one. This is the temporal inequity of Section~\ref{sec:exposure}, measured, and
it is an inequality in the informational conditions of the ballot that no
choice of selection rule addresses. Figure~\ref{fig:maturation} shows how the
saturation point moves with authoring rate, and the area to the left of each
curve is the population that voted early enough for it to matter.

\begin{figure}[tb]
\centering
\includegraphics[width=0.52\linewidth]{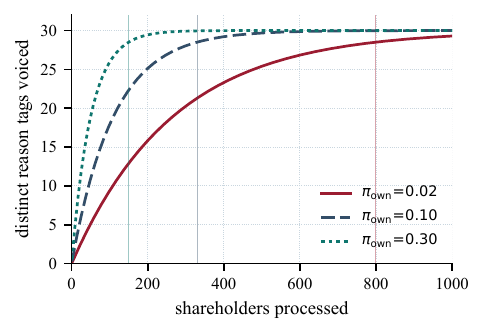}
\caption{Vocabulary saturation under three authoring rates, anchored at the
measured saturation points ($\approx 800$ constituents at $\pown = 0.02$,
$\approx 150$ at $\pown = 0.30$, in runs of $N = 1000$); the vertical rules
mark the measured anchors and the shape between them is illustrative. The area
to the left of each curve is the population that voted against a reason space
still under construction --- the quantity a deployment reduces by raising the
authoring rate, not by ranking better.}
\label{fig:maturation}
\end{figure}

\subsection{Which levers move coverage}
\label{sec:res_levers}

Table~\ref{tab:allresults} gives every setting of every lever on both
propositions. The ordering is unambiguous and identical across them.

Slate size spans widest --- $0.44$ at $K=5$ to $0.83$ at $K=30$ --- and is
bounded above by the corpus, since a slate cannot cover reasons nobody has
written. The fifth through fifteenth slots contribute the overwhelming
majority; slots beyond the twentieth add two to three points per block, and
only because the rule is label-blind, so lower-ranked but reason-novel items
keep entering. A label-reading selector would exhaust the vocabulary earlier
and then add nothing.
Authoring rate follows: raising $\pown$ from $0.02$ to
$0.30$ lifts completeness from $0.70$ to $0.88$, by the mechanism visible in
Figure~\ref{fig:maturation} --- a higher rate saturates the vocabulary sooner,
so a smaller fraction of the electorate votes against an immature reason
space. It is the cheapest lever per point gained, and one a deployment pulls
through interface design and prompting rather than through algorithms.
Electorate size behaves identically, $0.69$ at $N=200$ rising to $0.90$ at
$N=5000$, with the difference that it is not a design parameter; it is worth
stating because it means the construction gets \emph{better} at scale, the
opposite of the usual finding for deliberative procedures.

Among these levers, link rate does essentially nothing: completeness moves by $0.004$ across a
fivefold change in linking activity, on both propositions. This is the first
appearance of a fact Section~\ref{sec:res_null} will sharpen considerably. It
does not mean links are useless --- links are what the rebuttal and
reinforcement summands read, and Section~\ref{sec:res_degen} shows they carry
real weight once the corpus contains material worth discriminating against. It
means that in a corpus where every item is a competent argument, the ordering
induced by the links reshuffles items that were going to be served anyway.

\begin{table}[tb]
\centering\footnotesize
\caption{Mean combined completeness $\bar{c}$ at every setting of every lever,
both propositions, author-normalised weight policy, ten seeds per cell.
Settings run left to right in the order given in the row label; the reference
setting is in bold. Across-seed $\sigma$ runs $0.013$--$0.082$ throughout and
is tabulated per cell in the online appendix. In the authoring-rate row
$\preuse = 0.50$ is fixed and $\pabst$ absorbs the remainder; all other levers
are held at reference.}
\label{tab:allresults}
\begin{tabular}{@{}ll rrrrr@{}}
\toprule
Lever (settings) & prop. & \multicolumn{5}{c}{$\bar{c}$ by setting} \\
\midrule
\multirow{2}{*}{$\pown \in \{0.02, 0.05, \mathbf{0.10}, 0.20, 0.30\}$}
  & BRA & 0.702 & 0.713 & \textbf{0.806} & 0.841 & 0.878 \\
  & UBI & 0.684 & 0.751 & \textbf{0.804} & 0.853 & 0.873 \\
\midrule
\multirow{2}{*}{$K \in \{5, 10, 15, \mathbf{20}, 30\}$}
  & BRA & 0.448 & 0.666 & 0.763 & \textbf{0.806} & 0.833 \\
  & UBI & 0.448 & 0.666 & 0.768 & \textbf{0.804} & 0.832 \\
\midrule
\multirow{2}{*}{$\plnk \in \{0.20, 0.40, \mathbf{0.60}, 0.80, 1.00\}$}
  & BRA & 0.806 & 0.804 & \textbf{0.806} & 0.804 & 0.804 \\
  & UBI & 0.799 & 0.801 & \textbf{0.804} & 0.805 & 0.802 \\
\midrule
\multirow{2}{*}{$N \in \{200, 500, \mathbf{1000}, 2000, 5000\}$}
  & BRA & 0.691 & 0.744 & \textbf{0.806} & 0.850 & 0.904 \\
  & UBI & 0.680 & 0.732 & \textbf{0.804} & 0.865 & 0.894 \\
\bottomrule
\end{tabular}
\end{table}

\subsection{The price of semantic abstinence, measured}
\label{sec:res_ceiling}

Section~\ref{sec:oracle} introduced the greedy cover as an evaluation ceiling
and Principle~\ref{prin:oracle} confined it to the evaluation path. Here we
spend it. For every slate served in the reference cell we also compute, over
the \emph{identical} pool and at the \emph{identical} cache refresh, the slate
a label-reading greedy selector would have returned, scoring both with the same
measure. This answers the question the charter leaves open: not whether some
excluded procedure could score higher, but whether the admissible class
contains a procedure good enough to use.

It does. Against the end-of-round vocabulary (Table~\ref{tab:ceiling}) the
endorsement rule serves $0.806 \pm 0.038$ where the ceiling reaches
$0.837 \pm 0.034$: a gap of $0.031 \pm 0.014$, or $3.7\%$ of the ceiling,
positive in all ten seeds and never exceeding $0.059$. Reading the labels is
worth about three points of completeness.

The second block is the more consequential reading. Scored against the
vocabulary actually live when each slate was served --- the denominator of
Definition~\ref{def:completeness} --- the ceiling attains $1.000$ in every seed
and the served slates $0.968 \pm 0.015$. The ceiling saturates because the
reference cell ends with a thirty-label vocabulary per side against a
twenty-item slate, so a spanning sub-collection almost always exists; the
$(1-e^{-1})$ worst case of Proposition~\ref{prop:greedy} is nowhere near
binding at this scale. The $0.194$ shortfall from $\bar{c} = 1$ therefore
decomposes into two very unequal parts: a \emph{selection} component of
$0.031$, which is what abstaining from the labels costs, and a \emph{temporal}
component of $0.163$ --- vocabulary that did not yet exist when the slate was
assembled, which no procedure of any kind, rule-based or learned or
label-omniscient, could have shown that voter. Four fifths of the measured
incompleteness is a property of sequential deliberation rather than of the
recommender.

This governs how the rest of the manuscript should be read. Every lever of
Section~\ref{sec:res_levers} that moves completeness substantially does so by
attacking the temporal component; the selection component is small and is the
only one any better \emph{selection} procedure could recover, so a learned
ranker offered the same pool could at best close $0.031$, and only by reading
what C\ref{c:abstain} forbids.

\begin{table}[tb]
\centering\footnotesize
\caption{Served completeness against the label-reading greedy ceiling,
reference configuration (BRA), summarised over ten seeds; the per-seed rows are
in the online appendix. The ceiling is evaluated on the same pool and at the
same cache refresh as the served slate, so the gap isolates selection quality
rather than index staleness. The left block scores against the end-of-round
vocabulary, the right block against the vocabulary live at the moment of
serving (Definition~\ref{def:completeness}). The ceiling saturates the live
vocabulary in every seed, so the two gaps almost
coincide and the residual shortfall
in the left block is temporal rather than algorithmic.}
\label{tab:ceiling}
\begin{tabular}{@{}l rrr rrr@{}}
\toprule
& \multicolumn{3}{c}{End-of-round vocabulary}
& \multicolumn{3}{c}{Live vocabulary} \\
\cmidrule(lr){2-4}\cmidrule(lr){5-7}
& Served & Ceiling & Gap & Served & Ceiling & Gap \\
\midrule
Mean over seeds & 0.806 & 0.837 & 0.031 & 0.968 & 1.000 & 0.032 \\
Std over seeds  & 0.038 & 0.034 & 0.014 & 0.015 & 0.000 & 0.015 \\
Minimum         & 0.750 & 0.793 & 0.011 & 0.940 & 1.000 & 0.011 \\
Maximum         & 0.862 & 0.888 & 0.059 & 0.989 & 1.000 & 0.060 \\
\bottomrule
\end{tabular}
\end{table}

\section{Results II: What the Coverage Measure Could Not See}
\label{sec:instruments}

This section reports the ablation
 --- does the rule beat not ranking at all?
--- and the answer, on set coverage with non-degenerate authoring, is no. It reports that null in full, with
the seeds on which the ranked and random slates were bit-identical, because a
manuscript arguing for legibility owes its readers the result that embarrassed
it. It then shows that the null is an artefact of two things: an instrument
that discards order by construction, and an authoring model in which every
submission is worth reading. Correcting either one dissolves it. Correcting
both reveals that the two link summands, which the ablation found exactly
inert, are worth as much as the entire ranking advantage once the corpus
contains material a rule should keep off the slate. The section closes on the
third instrument, endorsement mass, which shows that the random baseline was
never competitive on any axis except the one we happened to be measuring.

\subsection{The null result, for non-degenerate authoring}
\label{sec:res_null}

Five arms, identical in every respect but the ranking step
(Section~\ref{sec:arms}), $100$ seed-paired seeds per cell, four slate sizes,
both propositions --- $4000$ runs. One property of the model makes the
comparison unusually clean: corpus growth is driven by the authoring draw and
not by the slate, so all arms see a \emph{bit-identical} corpus ($102.4$ items
and $694$ links at zero attackers, $326.4$ and $1035$ at a quarter-electorate
coalition), and the served slate is the only thing that differs.

\paragraph{The link terms are all but inert.} Table~\ref{tab:ablation} gives the
non-degenerate authoring attack-free sweep. No two of the four ranked arms separate by more than $0.004$, and
of the twenty-four seed-paired contrasts between ranked arms only one survives a
Bonferroni correction for that many comparisons: at $K=5$ the enhance-only arm
leads the pure endorsement count by $0.0039$ ($p=0.001$). At that same slate
size $58$ of the $100$ BRA seeds are bit-identical between the full rule and the
pure endorsement count; at $K=20$ about a fifth of seeds still are.

The mechanism is arithmetic rather than mysterious. Under author normalisation
a relation weight is the number of distinct authors of an edge divided by the
constituents voting that side, which at the reference configuration ranges
from $0.002$ to $0.023$; multiplied by the endorsement count of a hub it
yields a bonus that never exceeded $0.96$ in the state we instrumented. The
gap in raw endorsements at the $K$-boundary was $3$ in that same state. A
bonus smaller than the boundary gap reorders items \emph{within} the slate
without changing which items are in it --- and completeness, by
Remark~\ref{rem:orderblind}, reads only the union of what is served. The link
terms are not doing nothing. They are doing something the instrument is
constitutionally unable to see.

\begin{table}[tb]
\centering\footnotesize
\caption{Ranking-term ablation in an non-degenerate authoring attack-free electorate: mean combined
completeness pooled over the two propositions, $100$ seed-paired seeds per
cell. The live pool holds about $51$ items per side throughout, so $K$ is also
a proxy for the fraction of the pool served. Across-seed $\sigma$ runs
$0.026$--$0.049$ for the four ranked arms and $0.015$--$0.025$ for the random
arm.}
\label{tab:ablation}
\begin{tabular}{@{}r rrrr r@{}}
\toprule
$K$ & full & endorse-only & enhance-only & attack-only & random \\
\midrule
$5$  & 0.4500 & 0.4470 & 0.4509 & 0.4498 & 0.4475 \\
$10$ & 0.6706 & 0.6708 & 0.6712 & 0.6723 & 0.6652 \\
$20$ & 0.8205 & 0.8199 & 0.8202 & 0.8213 & 0.8181 \\
$30$ & 0.8511 & 0.8505 & 0.8510 & 0.8510 & 0.8506 \\
\bottomrule
\end{tabular}
\end{table}

\paragraph{The random baseline is not beaten, and under attack it wins.} 
At non-degenerate authoring and zero attackers the rule leads by $0.8205$ against $0.8181$, a
margin of $0.0024$ that is suggestive at $p=0.034$ and does not survive
correction for the number of contrasts; at a tenth of the electorate the rule
\emph{trails}, $0.6088$ against $0.6472$ ($-0.0385$, $p \approx
5\times10^{-27}$); at a quarter it trails further, $0.3731$ against $0.4437$
($-0.0706$, $p \approx 3\times10^{-39}$), all seed-paired over $200$ runs per
comparison. The direction is the mechanism rather than an anomaly: the ranked
arms read the endorsement tallies the coalition inflates, and a uniform draw
does not, so the clones the flood manufactures reach the slate through the
ranking step and not around it. The three-way comparison at the largest
coalition is the one to sit with: the author-normalised policy reaches $0.3695$
on BRA and $0.3768$ on UBI, the random slate $0.4419$ and $0.4456$, the flat
policy $0.2279$ and $0.2195$. Read at face value the flat policy is some
twenty-two points \emph{worse} than not ranking at all, and author normalisation
buys back about two thirds of that deficit but not the whole of it, leaving the
rule seven points short of the uniform draw --- its virtue under attack is the
limiting of amplification rather than the delivery of coverage. The random arm
also carries less than half the across-seed variance of the ranked arms under
attack ($\sigma \approx 0.035$ against $0.078$), an effect we do not currently
model.

modelling assumption responsible. A learned selector exhibiting the same null
would have offered a metric that failed to move and an unbounded space of explanations.

\subsection{Order: what a ranking rule is actually for}
\label{sec:res_order}

Completeness is invariant to slate order (Remark~\ref{rem:orderblind}), and a
ranking rule is precisely a claim about order. 
$m \in \{1,2,3,5,10,20\}$, the area under the prefix curve, rank-discounted
coverage, and $e_{90}$, at $K=20$, pooled over both propositions and
seed-paired across arms.

Table~\ref{tab:ordering} and Figure~\ref{fig:prefix} give the result, and it
is unambiguous in exactly the way Table~\ref{tab:ablation} was not. At every
prefix short of the full slate the endorsement rule leads the uniform draw,
and the lead grows with adversarial pressure. In an attack-free electorate the
gains are small but uniformly significant: $p_1$ $+0.0094$
($p \approx 10^{-10}$), $p_5$ $+0.0198$ ($7\times10^{-15}$), $\AUC$ $+0.0135$
($3\times10^{-19}$), $\RDC$ $+0.0347$ ($10^{-17}$), $e_{90}$ $-0.40$ positions
($5\times10^{-10}$). At a coalition of a tenth of the electorate they become
$p_5$ $+0.1016$, $\AUC$ $+0.0558$ and $e_{90}$ $-3.14$ positions, all at
$p \approx 2\times10^{-34}$; at a quarter, $p_1$ $+0.0335$, $p_5$ $+0.1387$,
$\AUC$ $+0.0796$, $\RDC$ $+0.1788$ and $e_{90}$ $-5.69$ positions, all at
$p \le 5\times10^{-34}$.

The reading is direct. A constituent under a quarter-electorate coalition who
reads the ranked slate reaches nine tenths of that slate's coverage after
$8.9$ items; one reading the uniform draw needs $14.6$. Both slates end at the
same $p_{20}$ --- which is why Table~\ref{tab:ablation} saw nothing --- but
they are not the same object for anyone with finite attention. The coalition
succeeds at filling the corpus and fails at placing its material where the
reader is. That the margin \emph{grows} under attack inverts the natural
expectation that an adversary manipulating the graph would degrade the rule's
ordering advantage. What happens instead is that as the pool fills with
redundant clones, the difference between ordering by endorsement flow and not
ordering at all becomes the difference between a reader meeting distinct
reasons early and a reader wading through duplicates --- a distinction that
barely exists in a clean pool and dominates in a polluted one.

The link terms remain nearly invisible on this instrument too. Comparing the
full rule against \textsf{endorse-only}, no contrast reaches significance at
zero or a tenth attackers; at a quarter, $\RDC$ gains $0.013$
($p = 0.011$) and $\AUC$ gains $0.004$ ($p = 0.027$). With twenty-four
contrasts computed on the same stored runs and no multiplicity correction
(Section~\ref{sec:stats}), we report this as \emph{suggestive} and nothing
more. The next subsection is where the link terms stop being suggestive.

\begin{figure}[tb]
\centering
\includegraphics[width=\linewidth]{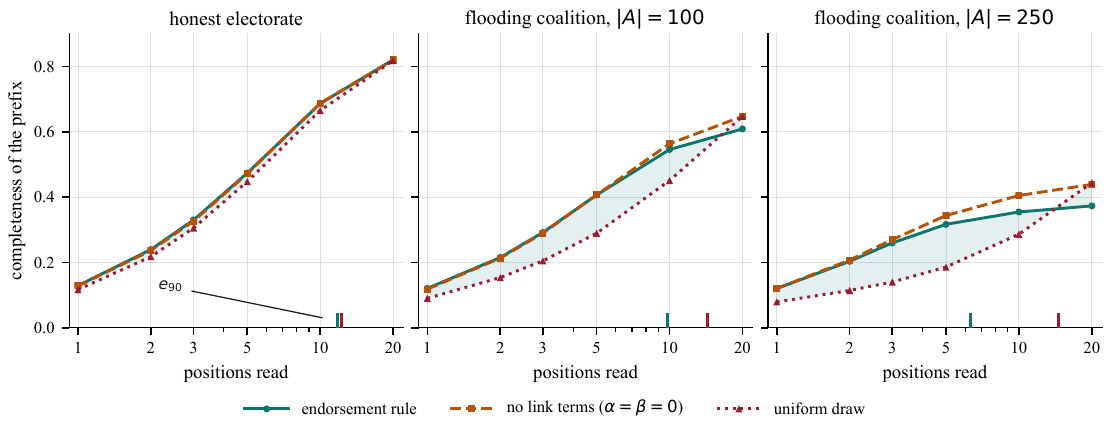}
\caption{Prefix coverage against slate depth at $K=20$, pooled over both
propositions, $2800$ stored runs. Panels: attack-free electorate, coalition at a
tenth, coalition at a quarter. The shaded region is what the endorsement rule
delivers above the uniform draw at each depth; ticks on the axis mark $e_{90}$,
the depth at which each arm reaches nine tenths of its own final coverage. The
three arms meet at $m = 20$ in the attack-free panel and separate everywhere before
it, by a margin that widens with attack; under attack the endorsement rule ends
$m = 20$ below the draw --- the only quantity Table~\ref{tab:ablation} could
see --- and leads it at every earlier depth.}
\label{fig:prefix}
\end{figure}

\begin{table}[tb]
\centering\footnotesize
\caption{Ordering measures at $K=20$, pooled over both propositions,
seed-paired, $2800$ stored runs. $p_m$ is prefix coverage at depth $m$;
$\AUC$ is the mean over depths; $\RDC$ is rank-discounted coverage
\eqref{eq:rdc}; $e_{90}$ is the depth reaching nine tenths of that slate's own
final coverage (lower is better). $|A|$ is the coalition size out of
$N = 1000$. Note that $p_{20}$ --- the only column completeness can see ---
agrees across arms to within $0.005$ in the attack-free electorate, and under
attack moves against the ranked rule while every earlier column moves for it.}
\label{tab:ordering}
\begin{tabular}{@{}rl rrrr rr r r@{}}
\toprule
$|A|$ & arm & $p_1$ & $p_3$ & $p_5$ & $p_{10}$ & $\AUC$ & $\RDC$ & $e_{90}$ & $p_{20}$ \\
\midrule
\multirow{3}{*}{$0$}
 & full         & 0.1290 & 0.3301 & 0.4738 & 0.6865 & 0.4466 & 1.3298 & 11.74 & 0.8205 \\
 & endorse-only & 0.1281 & 0.3267 & 0.4720 & 0.6872 & 0.4451 & 1.3258 & 11.68 & 0.8199 \\
 & random       & 0.1165 & 0.3054 & 0.4476 & 0.6650 & 0.4284 & 1.2833 & 12.28 & 0.8181 \\
\midrule
\multirow{3}{*}{$100$}
 & full         & 0.1205 & 0.2919 & 0.4066 & 0.5457 & 0.3648 & 1.0705 & \phantom{0}9.79 & 0.6088 \\
 & endorse-only & 0.1182 & 0.2898 & 0.4068 & 0.5635 & 0.3732 & 1.1021 & 10.49 & 0.6474 \\
 & random       & 0.0901 & 0.2048 & 0.2894 & 0.4509 & 0.3060 & 0.9544 & 14.38 & 0.6472 \\
\midrule
\multirow{3}{*}{$250$}
 & full         & 0.1200 & 0.2597 & 0.3167 & 0.3545 & 0.2712 & 0.7884 & \phantom{0}6.29 & 0.3731 \\
 & endorse-only & 0.1201 & 0.2699 & 0.3435 & 0.4050 & 0.2974 & 0.8670 & \phantom{0}7.97 & 0.4389 \\
 & random       & 0.0790 & 0.1400 & 0.1857 & 0.2864 & 0.2082 & 0.6676 & 14.61 & 0.4437 \\
\bottomrule
\end{tabular}
\end{table}

\subsection{Degenerate authoring: giving the rule something to discriminate against}
\label{sec:res_degen}

The second correction addresses Remark~\ref{rem:charityblind}. Under the model
of Section~\ref{sec:res_honest} every submission is a competent argument, so
every item is worth roughly the same to a coverage measure and no rule can
beat a draw by much. Section~\ref{sec:degen} introduced two parameters --- the
degenerate fraction $G$ and the erroneous-support share $E$ --- and this
subsection sweeps them: three arms, $G \in \{0,0.1,0.3,0.5,0.7\}$,
$E \in \{0.1,0.2,0.4\}$, $50$ seeds per cell per proposition, $3900$ runs, zero
failures, twenty minutes of wall clock. Because degenerate authorship is drawn
off a dedicated stream, the corpus skeleton is bit-identical across the whole
grid and every comparison is exactly paired.

Table~\ref{tab:degenerate} gives the result: the margin the coverage
instrument could not find in Table~\ref{tab:ablation} reappears as soon as
there is anything to discriminate against, and grows monotonically with how
much there is. At $G = 0$ the arms are all but indistinguishable, matching the attack-free sweep
of Table~\ref{tab:ablation} ($+0.0033$, $p = 0.018$) --- the control
establishing that the sweep measures what it claims. At $G = 0.1$ the rule
already leads the uniform draw by $0.0059$ ($p = 7\times10^{-4}$); by
$G = 0.7$ it leads by $0.0471$ ($p = 8\times10^{-14}$), more than an order of
magnitude larger. The served degenerate share tracks it: at $G=0.7$ the rule
serves $66.6\%$ degenerate material against the $70.5\%$ corpus base rate a
uniform draw returns, significant at $p < 10^{-6}$ in every cell.

Raising $E$ --- moving degenerate items from reason-free towards
erroneous-support --- consistently \emph{shrinks} the margin, by $0.005$ to
$0.010$ across the grid. This is the expected direction and a useful sanity
check: an erroneous support carries real labels and real endorsement
potential, so a rule reading only endorsements and links finds it harder to
distinguish from a genuine item than a reason-free stub. The rule's advantage
is largest exactly where the material is most obviously worthless. We also
record the measurement artefact rather than assuming it away: scoring against
the denominator-inflating variant at $G=0.3$, $E=0.4$ gives $0.6227$ where the
genuine-vocabulary measure gives $0.7424$, because erroneous supports push the
apparent per-side vocabulary from $29.6$ labels to $44.8$. Reporting the raw
variant would have made every arm look worse and credited slates for surfacing
material filed on the wrong side; the genuine-vocabulary measure of
Section~\ref{sec:degen} is the conservative choice and is what
Table~\ref{tab:degenerate} reports throughout.

Ordering and coverage now agree. At $G = 0.7$, $E = 0.1$ the rule reaches
$p_5 = 0.4158$ against the draw's $0.1862$, $\AUC$ $0.3629$ against $0.2104$,
and $e_{90}$ of $7.39$ against $13.25$. The two instruments that disagreed in
Sections~\ref{sec:res_null} and~\ref{sec:res_order} disagree because of the
authoring model, and once it is corrected they say the same thing.

\begin{table}[tb]
\centering\footnotesize
\caption{Degenerate authoring: mean combined completeness against the
\emph{genuine} reason vocabulary, pooled over both propositions, $100$ runs
per cell, seed-paired across arms. $G$ is the fraction of own-authored items
that fail to justify; $E$ is the share of those that are erroneous supports
rather than reason-free. $p$-values are two-sided Wilcoxon signed-rank tests on
the full-minus-random contrast. The $G=0$ row reproduces the attack-free sweep
of Table~\ref{tab:ablation} and serves as the control.}
\label{tab:degenerate}
\begin{tabular}{@{}cc rrr rc r@{}}
\toprule
$G$ & $E$ & full & endorse-only & random & full $-$ random & $p$ & full $-$ endorse \\
\midrule
$0.0$ & ---   & 0.8232 & 0.8233 & 0.8199 & $+0.0033$ & $0.018$             & $-0.0000$ \\
\midrule
$0.1$ & $0.1$ & 0.7996 & 0.7982 & 0.7940 & $+0.0056$ & $2.3\times10^{-3}$  & $+0.0014$ \\
$0.1$ & $0.4$ & 0.7993 & 0.7982 & 0.7940 & $+0.0053$ & $5.5\times10^{-3}$  & $+0.0011$ \\
$0.3$ & $0.1$ & 0.7479 & 0.7323 & 0.7268 & $+0.0210$ & $3.6\times10^{-12}$ & $+0.0156$ \\
$0.3$ & $0.4$ & 0.7424 & 0.7323 & 0.7268 & $+0.0156$ & $9.2\times10^{-9}$  & $+0.0102$ \\
$0.5$ & $0.1$ & 0.6639 & 0.6242 & 0.6248 & $+0.0392$ & $4.9\times10^{-15}$ & $+0.0397$ \\
$0.5$ & $0.4$ & 0.6538 & 0.6242 & 0.6248 & $+0.0290$ & $4.5\times10^{-12}$ & $+0.0295$ \\
$0.7$ & $0.1$ & 0.5764 & 0.5147 & 0.5104 & $+0.0661$ & $2.9\times10^{-16}$ & $+0.0617$ \\
$0.7$ & $0.4$ & 0.5596 & 0.5147 & 0.5104 & $+0.0492$ & $1.9\times10^{-14}$ & $+0.0449$ \\
\bottomrule
\end{tabular}
\end{table}

\subsection{Why the link terms square the discrimination}
\label{sec:res_mechanism}

Table~\ref{tab:degenerate}'s last column is the surprise. The link summands,
worth $-0.0000$ at $G=0$, are worth $+0.0617$ at $G=0.7$ --- which is
$93\%$ of the full margin over the uniform draw. Terms that
Table~\ref{tab:ablation} found all but inert become, in a realistic corpus,
the substance of the rule. This subsection explains why, by measuring the
graph directly.

Recall the direction of \eqref{eq:score}: an item is credited for the
endorsements of what it \emph{points at}. So the question is whether a
degenerate author's outgoing links land on lower-endorsed targets than a
genuine author's do. Instrumenting $100$ runs at $G = 0.7$, $E = 0.4$ gives
the answer in one table of five numbers per item class:

\begin{center}\small
\begin{tabular}{@{}lrrrrr@{}}
\toprule
item class & count & own endorsements & reinforcement out-edges & per item & link credit \\
\midrule
genuine           & 30.2 & 11.04 & 196.1 & 6.5 & 274.25 \\
erroneous support & 28.6 & \phantom{0}6.95 & 117.5 & 4.1 & 136.28 \\
reason-free       & 43.6 & \phantom{0}1.43 & \phantom{0}34.2 & 0.8 & \phantom{0}38.12 \\
\bottomrule
\end{tabular}
\end{center}

On endorsements alone, genuine items lead
reason-free ones by a factor of $7.7$. On link credit they lead by a factor of
$7.2$ --- and the two factors \emph{multiply}, because an item's link credit is
the sum of its targets' endorsements and a degenerate item both asserts fewer
links ($0.8$ per item against $6.5$) and aims the few it asserts at material
nobody adopted. The link summands do not introduce new discrimination. They
\emph{square} the discrimination already present in the endorsement count.

This also explains the inertness at $G=0$ without special pleading. When every
item is genuine, every item has both a healthy endorsement count and healthy
out-edges into other well-endorsed items; squaring a ratio of one leaves one.
The link terms were never a mechanism for separating good arguments from good
arguments. They are a mechanism for separating arguments from non-arguments,
and we had not given them any non-arguments.

One limitation must be stated here rather than deferred, because it governs
the reading of the entire subsection. The chain rests on a single modelled
fact --- degenerate items are adopted less ($1.43$ endorsements against
$11.04$) --- and that fact follows from the simulator's adoption step, which
selects from the served slate by TF--IDF cosine similarity to the agent's
position text (Section~\ref{sec:agent}). It is a property of the agents, not
of the rule. The finding is therefore that \emph{the rule
inherits and amplifies whatever discrimination the electorate itself
exercises}. 
The estimation of the rate at which human electorate discriminates is an empirical question this manuscript does not attempt to answer (Section~\ref{sec:limits}).

\subsection{Coverage against endorsement mass}
\label{sec:res_pareto}

The third instrument closes the account. Sections~\ref{sec:res_null}
and~\ref{sec:res_order} left an uncomfortable residue: on coverage the uniform
draw was never beaten, and under flooding attacks it beat the rule.
Instrument~III \eqref{eq:mass} asks the question
coverage cannot --- \emph{did the slate show the voter what the electorate had
actually taken up?} --- and the answer is that the draw was never competitive.

Table~\ref{tab:pareto} gives five regimes: the
attack-free non-degenerate corpus, two degenerate-authoring levels, and two coalition sizes. The
uniform draw captures $0.49$ of the achievable endorsement mass in the attack-free
regime against the rule's $0.76$, and $0.17$ against $0.55$ at a
quarter-electorate coalition --- a factor of $3.3$. Its coverage --- level with
the rule in the attack-free regime, ahead of it under attack --- is purchased
entirely by serving material nobody had adopted. A voter shown the random
slate met at least as many reasons, attached to items their fellow
constituents had, in the main, passed over.

The greedy ceiling behaves in a way worth dwelling on. It attains coverage
$1.0000$ in every regime, as Section~\ref{sec:res_ceiling} led one to expect.
On endorsement mass it \emph{dominates} the rule in the three attack-free
regimes ($0.79$ against $0.76$ at $G=0$) and is \emph{dominated} under attack
($0.38$ against $0.55$ at a quarter-electorate coalition). Under flooding, the
label-reading selector buys its perfect coverage by abandoning what
constituents endorsed --- it reaches for the rare labels, which under a
homogeneity attack are the ones the coalition has not saturated and which
almost nobody has adopted. The inadmissible mechanism is not merely
inadmissible; on the axis that measures whether a slate reflects the
electorate, it is worse under exactly the conditions a civic deployment must
survive.

Finally, at $G = 0.5$ the link terms buy $+2.8$ points of coverage for $-2.4$
points of endorsement mass. That is not an error and not a defect. It is a
frontier: the two link summands trade breadth of reasons against fidelity to
what people endorsed, and there is no system-level fact about which is
correct. It is exactly the kind of question C\ref{c:config} exists to hand to
the person it affects, and Section~\ref{sec:discussion} takes up what it means
that our clearest quantitative finding turns out to be a menu rather than an
answer.

\begin{table}[tb]
\centering\footnotesize
\caption{Coverage against endorsement mass \eqref{eq:mass} across five
regimes, pooled over both propositions. $G$ is the degenerate fraction at
$E=0.1$; $|A|$ is coalition size out of $N=1000$. The greedy ceiling attains
coverage $1.0000$ in every regime, so only its mass is shown. The uniform draw
is level with the rule on coverage in the attack-free regime and ahead of it under
attack, and loses on mass everywhere, by a
factor rising to $3.3$; the ceiling dominates on mass in the three attack-free
regimes and is dominated in both attacked ones.}
\label{tab:pareto}
\begin{tabular}{@{}l ccc ccc c@{}}
\toprule
& \multicolumn{3}{c}{\textbf{coverage}} & \multicolumn{3}{c}{\textbf{endorsement mass}} & \textbf{ceiling} \\
\cmidrule(lr){2-4}\cmidrule(lr){5-7}
regime & full & end.-only & random & full & end.-only & random & mass \\
\midrule
$G = 0$, no coalition & 0.8232 & 0.8233 & 0.8199 & 0.7560 & 0.7601 & 0.4906 & 0.7905 \\
$G = 0.3$       & 0.7479 & 0.7323 & 0.7268 & 0.7458 & 0.7610 & 0.4915 & 0.7645 \\
$G = 0.7$       & 0.5764 & 0.5147 & 0.5104 & 0.7215 & 0.7590 & 0.4906 & 0.7920 \\
$|A| = 100$     & 0.6088 & 0.6474 & 0.6472 & 0.6895 & 0.6943 & 0.2916 & 0.5875 \\
$|A| = 250$     & 0.3731 & 0.4389 & 0.4437 & 0.5460 & 0.5579 & 0.1673 & 0.3809 \\
\bottomrule
\end{tabular}
\end{table}

\section{Results III: A Coalition in the Electorate}
\label{sec:adversarial}

We now admit a coordinated coalition under the model of
Section~\ref{sec:adversary}: authentication holds, so nobody votes twice and
no endorsement is forged, and the coalition's only instruments are what it
writes, where it points its links, and when it acts. The two natural
strategies produce opposite outcomes, and the contrast is the most useful
result in this manuscript for anyone actually deploying such a system --- it
says that the intuitive attack is harmless, that the unintuitive one is
severe, and that the defence against the severe one is a single line of the
specification. We then decompose the damage to establish what causes it, and
give the coalition its best available escalation to establish that the defence
does not merely displace the problem.

\subsection{Hub-riding is inert}
\label{sec:res_hub}

When coalition members author ordinary items and attach reinforcement
links to the leading same-side item, in order to launder its endorsement mass
into their own score, completeness does not move. At every coalition size, on
both propositions, the change from the zero-attacker cell is statistically
indistinguishable from zero ($n=200$ seeds per cell, all $p > 0.50$ under
Welch's test), and the two weight policies differ by less than $0.002$.

The reason is structural. Completeness is a property of a union of sets, and
exchanging one ordinary item for another
changes which elements contribute to the union without changing the union
much. Hub-riding achieves its proximate goal --- the attacker's argument gets
seen --- while leaving the informational quality of the slate intact. Whether
this constitutes an attack at all is a fair question: a system in which
arguing that your point extends a popular point makes your point more visible
is arguably working as designed, and this is one of the few places where the
right response to an adversarial finding is to accept the behaviour rather
than defend against it.

The result also confirms the design choice of Section~\ref{sec:rule}
empirically. \eqref{eq:score} takes exactly one hop and computes no fixed
point, so there is no eigenvector to farm~\cite{page1999pagerank}; a coalition
that constructs a hub and points its whole corpus at it gains a bounded,
one-hop bonus and nothing compounding.

\subsection{Label flooding, and the weight policy as a security control}
\label{sec:res_flood}

The picture inverts entirely when the clones are label-identical.
Table~\ref{tab:attackers} gives the sweep, $200$
seeds per cell, spread timing so the coalition enjoys no first-mover cascade.
Under the author-normalised policy completeness falls from $0.816$ at zero
attackers to $0.377$ at a quarter of the electorate on BRA, and $0.815$ to
$0.372$ on UBI. Under the flat policy the same coalition drives it to $0.228$
and $0.219$. Every non-zero cell differs from its baseline at $p < .001$, and
every normalised-versus-flat comparison at a non-zero coalition size differs
at $p < .001$ as well.

The defence margin --- the completeness that author normalisation retains and
a flat policy loses --- is $0.117$ at a coalition of five percent, widens to
$0.181$ at fifteen and eases to $0.151$ at twenty-five. Section~\ref{sec:weights} claimed the
relation-weight function is a security control rather than a tuning knob;
this is the measurement behind the claim, and it should be read against
Section~\ref{sec:res_levers}, where the four structural levers of the attack-free
regime were the only things that moved completeness at all.

The arithmetic behind the defence is Example~\ref{ex:bus} scaled up. Under the
flat policy a lone clone's link into the hub transfers the hub's entire
endorsement mass, so every clone inherits a score comparable to the most
successful non-coalition item in the pool and the top of the ranking fills with
duplicates. Under author normalisation the same link transfers a fraction
$|A(j,k)| / V^{\sigma}$; a solo clone inherits about a thousandth of the hub's
mass. To buy what the flat policy gives away, the coalition must make many
members assert the \emph{same} link --- which means the cost of the attack
scales with its size, and, more importantly, that the attack becomes
\emph{visible} in the public link record as an anomalous concentration of
identical assertions. A defence that converts a covert manipulation into an
overt one is doing exactly what a civic mechanism should do, and note that it
does so without any term that reads \emph{which} participants asserted the
link: \eqref{eq:norm} counts distinct authors and remains compliant with
C\ref{c:author}.

\begin{table}[tb]
\centering\footnotesize
\caption{Flooding attack under spread timing: mean combined completeness
against coalition size, both weight policies, both propositions, $200$ seeds
per cell. The final column gives the defence margin --- how much completeness
author normalisation retains that a flat policy loses --- averaged over the
two propositions. Hub-riding, not shown, produces no significant change at any
coalition size.}
\label{tab:attackers}
\begin{tabular}{@{}r rr rr r@{}}
\toprule
& \multicolumn{2}{c}{\textbf{BRA}} & \multicolumn{2}{c}{\textbf{UBI}} & \\
\cmidrule(lr){2-3}\cmidrule(lr){4-5}
Coalition & normalised & flat & normalised & flat & margin \\
\midrule
$0\%$  & 0.816 & 0.814 & 0.815 & 0.814 & --- \\
$5\%$  & 0.713 & 0.597 & 0.708 & 0.590 & $0.117$ \\
$10\%$ & 0.608 & 0.440 & 0.602 & 0.427 & $0.172$ \\
$15\%$ & 0.519 & 0.340 & 0.514 & 0.331 & $0.181$ \\
$20\%$ & 0.443 & 0.282 & 0.430 & 0.265 & $0.163$ \\
$25\%$ & 0.377 & 0.228 & 0.372 & 0.219 & $0.151$ \\
\bottomrule
\end{tabular}
\end{table}

\subsection{The mean over sides hides how one-sided the damage is}
\label{sec:res_sidebalance}

One caveat applies to both aggregators in Equations~\ref{eq:mincompleteness} and~\ref{eq:meancompleteness}. About $2\%$ of constituents --- $19.6$
of $987$ at the reference configuration --- receive a slate on one side only
and so contribute a single value, whereas the factor $1/2N$ in
\eqref{eq:meancompleteness} presumes two. The figure we report as $\bar{c}$
throughout is the mean over slates actually served, which coincides with
\eqref{eq:meancompleteness} when every voter is served on both sides and
otherwise sits $0.003$--$0.007$ below it, the one-sided voters being those
whose remaining side is the sparse one; $\bar{c}^{\min}$ is necessarily
restricted to the voters served on both. Both discrepancies are an order of
magnitude smaller than any effect discussed here, but the asymmetry is worth
naming: an unserved side is a coverage failure that neither equation counts as
one.

The substantive conclusions are unchanged. The minimum tracks the mean at an
offset of $0.04$--$0.08$ across the whole grid, the damage it registers is
$5$--$13\%$ larger than the mean's at every coalition size, and the advantage
of author normalisation over a flat policy is not merely preserved but wider
under it, $0.133$--$0.188$ against $0.118$--$0.182$, at the same significance.
The choice of aggregator does not decide any claim we make.

What it does change is what a reader can see.
Table~\ref{tab:sidebalance} reports, for the same runs as
Table~\ref{tab:attackers}, the mean over voters of the worse-covered side, the
mean within-voter gap between the two sides, and the share of constituents
whose worse side falls below one half. The gap roughly doubles under attack,
from $0.085$ with no coalition to $0.15$--$0.16$ at the largest ones: flooding
does not lower coverage evenly, it unbalances it. The last column is the
consequence. At a $15\%$ coalition under author normalisation the mean reads
$0.52$, which invites the reading that a typical voter still meets half of the
reasons; in those same runs $58\%$ of constituents have a side below $0.5$,
and under a flat policy $94\%$ do. Both statements are true
of the same electorate, and only the first is visible in
\eqref{eq:meancompleteness}.

\begin{table}[!htbp]
\centering
\footnotesize
\caption{Side imbalance under label flooding, on the runs of
Table~\ref{tab:attackers} (200 seeds per cell, pooled over the two
propositions). ``min'' is $\bar{c}^{\min}$ of \eqref{eq:mincompleteness},
``gap'' the mean over voters of the difference between their two sides, and
``$<\!0.5$'' the share of voters whose worse side falls below one half.}
\label{tab:sidebalance}
\begin{tabular}{@{}rcccccc@{}}
\toprule
& \multicolumn{3}{c}{author-normalised} & \multicolumn{3}{c}{flat} \\
\cmidrule(lr){2-4}\cmidrule(lr){5-7}
Coalition & min & gap & $<\!0.5$ & min & gap & $<\!0.5$ \\
\midrule
$0\%$  & 0.779 & 0.085 & 0.118 & 0.777 & 0.086 & 0.118 \\
$5\%$  & 0.661 & 0.107 & 0.130 & 0.528 & 0.138 & 0.348 \\
$10\%$ & 0.538 & 0.140 & 0.257 & 0.351 & 0.168 & 0.782 \\
$15\%$ & 0.447 & 0.144 & 0.581 & 0.259 & 0.155 & 0.941 \\
$20\%$ & 0.359 & 0.158 & 0.878 & 0.194 & 0.160 & 0.987 \\
$25\%$ & 0.300 & 0.150 & 0.978 & 0.149 & 0.149 & 0.997 \\
\bottomrule
\end{tabular}
\end{table}

We keep \eqref{eq:meancompleteness} as the reported instrument. It is linear, so a drop in it decomposes into the per-voter and per-side contributions that make an attack attributable to a subpopulation, and it is the expectation of the coverage seen by a reader who stops at a uniformly random point, which is the quantity the three measures are jointly framed around; the minimum has neither property. But the mean supports a narrower reading than it appears to. A completeness of $0.52$ is a statement about an average side, not about an average voter, and an adversary who concentrates on one side is rewarded by exactly that difference. Where a deployment sets a threshold below which it will not certify a poll, the threshold belongs on the imbalance, not on the mean.

\subsection{The refresh interval matters only when someone is attacking}
\label{sec:res_cache}

Score caches are re-read after every constituent (Section~\ref{sec:refconfig}),
so no slate is scored against a snapshot older than the vote before it. Widening
the interval to $M$ constituents makes it the latency of the endorsement
feedback loop, and that lag reinterprets both preceding results at once: a
coalition accumulating endorsement mass gets $M$ votes of head start before the
quantity it inflates is read again, and a rule scoring against a stale snapshot
is not ranking the corpus the constituent is about to see. What lags is the
tallies and the relation weights, not the corpus: a newly authored item is
appended and forces a re-rank at any $M$. Crossing $M = 200$ against the
reported $M = 1$ with the coalition makes the objection an estimable
interaction. The experiment was run on a third proposition,
\textbf{OPT-OUT} (\emph{should primary pupils be able to study without computers
and an internet connection?}, thirty canonical reasons per side spanning
pedagogical, developmental, equity-of-access and administrative
considerations), and is reported on its own rather than pooled with the two of
Section~\ref{sec:topics}: $M \in \{1, 200\}$ crossed with no coalition against
the co-signed flood of Section~\ref{sec:res_cosign} at a tenth of the electorate
in groups of $C = 5$, three arms, the two extremes of the degenerate axis, $50$
seeds shared across all four corners, $1200$ runs.

Table~\ref{tab:cache} supports two conclusions of different kinds. With no
coalition the interval changes nothing the ordering claims rest on: on the
non-degenerate pool \textsf{full}$-$\textsf{random} is $-0.0011$ at $M = 1$ and
$-0.0015$ at $M = 200$ (both $p > 0.5$), both consistent with the
attack-free contrast of Table~\ref{tab:ordering} ($+0.0024$, suggestive
only); under degenerate authoring the rule's
advantage is $+0.0222$ and $+0.0202$ ($4\times10^{-6}$, $1\times10^{-5}$) ---
the same effect whether the loop is fresh or two hundred votes stale. With the
coalition the interval matters, in the unhelpful direction: lagging the refresh
to $M = 200$ \emph{raises} the ranked arm by $0.013$ of completeness and leaves
the uniform draw untouched. The interaction --- the effect of the lag on
\textsf{full}$-$\textsf{random} under attack minus the same effect without ---
is $+0.0136$ on the non-degenerate pool (95\% CI $[+0.0062, +0.0209]$,
$p = 5\times10^{-4}$) and $+0.0069$ under degenerate authoring
(CI $[-0.0011, +0.0150]$, $p = 0.089$).

\begin{table}[!htbp]
\centering\scriptsize
\caption{Completeness at the four corners of the refresh-interval $\times$
coalition results table, OPT-OUT, $50$ seeds per cell, author-normalised weights,
co-signed flood at $C = 5$. $M$ is the number of constituents between cache
refreshes; $M = 1$ is the reference configuration. The
\textsf{full}$-$\textsf{random} rows are seed-paired differences with two-sided
paired $t$-tests. $E$ is inert at $G = 0$.}
\label{tab:cache}
\setlength{\tabcolsep}{4pt}
\begin{tabular}{@{}ll rr rr@{}}
\toprule
& & \multicolumn{2}{c}{\textbf{no coalition}}
& \multicolumn{2}{c}{\textbf{$10\%$ coalition}} \\
\cmidrule(lr){3-4}\cmidrule(lr){5-6}
Load & arm & $M = 1$ & $M = 200$ & $M = 1$ & $M = 200$ \\
\midrule
\multirow{4}{*}{$G = 0$}
 & full                    & 0.815 & 0.815 & 0.707 & 0.720 \\
 & endorse-only            & 0.815 & 0.815 & 0.717 & 0.729 \\
 & random                  & 0.816 & 0.816 & 0.742 & 0.742 \\
 & \emph{full$-$random}    & $-0.0011$ & $-0.0015$
                           & $-0.0348^{***}$ & $-0.0216^{***}$ \\
\midrule
\multirow{4}{*}{$G = 0.5$, $E = 0.4$}
 & full                    & 0.642 & 0.640 & 0.534 & 0.539 \\
 & endorse-only            & 0.625 & 0.625 & 0.527 & 0.538 \\
 & random                  & 0.620 & 0.620 & 0.552 & 0.552 \\
 & \emph{full$-$random}    & $+0.0222^{***}$ & $+0.0202^{***}$
                           & $-0.0174^{**}$ & $-0.0124^{*}$ \\
\bottomrule
\end{tabular}
\end{table}

The mechanism is the order-invariance of Remark~\ref{rem:orderblind}, visible
slate by slate; Table~\ref{tab:cachearm} isolates each arm. \textsf{random}
never consults the cached scores, and its slates are identical item for item and
position for position across intervals at every corner, so its completeness is
exactly equal on all $50$ seeds. \textsf{endorse-only} is exactly invariant too,
but only without a coalition: over ten seeds its $19{,}702$ slates have
identical membership at $M = 1$ and $M = 200$ while $93\%$ are served in a
different order, which completeness cannot see. Under the coalition that breaks
--- at seed $3$ only $801$ of $1988$ slates keep their membership --- and
completeness moves $+0.0120$ ($5\times10^{-10}$). What the flood supplies is not
staleness but \emph{speed}: it makes tallies move far enough within one window
for the stale snapshot to select a different twenty items. \textsf{full} also
reads relation weights, whose staleness changes which items are served on $40\%$
of slates at $G = 0$ and $49\%$ at $G = 0.5$ even with no attacker, but there
the coverage gained and lost cancels ($-0.0004$, $-0.0020$, both $p > 0.35$).

\begin{table}[!htbp]
\centering\scriptsize
\caption{Cost of lagging the refresh interval, per arm: mean seed-paired
$M = 200$ minus $M = 1$ completeness, OPT-OUT, $50$ seeds. \emph{exact} marks
cells in which the two intervals produce the same completeness on every seed to
the last stored digit, so no test applies.}
\label{tab:cachearm}
\setlength{\tabcolsep}{4pt}
\begin{tabular}{@{}ll rl rl@{}}
\toprule
& & \multicolumn{2}{c}{\textbf{no coalition}}
& \multicolumn{2}{c}{\textbf{$10\%$ coalition}} \\
\cmidrule(lr){3-4}\cmidrule(lr){5-6}
Load & arm & $\Delta$ & $p$ & $\Delta$ & $p$ \\
\midrule
\multirow{3}{*}{$G = 0$}
 & full         & $-0.0004$ & $0.79$ & $+0.0132$ & $3\times10^{-4}$ \\
 & endorse-only & $\phantom{+}0.0000$ & exact & $+0.0120$ & $5\times10^{-10}$ \\
 & random       & $\phantom{+}0.0000$ & exact & $\phantom{+}0.0000$ & exact \\
\midrule
\multirow{3}{*}{$G = 0.5$, $E = 0.4$}
 & full         & $-0.0020$ & $0.36$ & $+0.0050$ & $0.17$ \\
 & endorse-only & $\phantom{+}0.0000$ & exact & $+0.0110$ & $4\times10^{-5}$ \\
 & random       & $\phantom{+}0.0000$ & exact & $\phantom{+}0.0000$ & exact \\
\bottomrule
\end{tabular}
\end{table}

The prefix measures agree at both intervals. Ranking's lead peaks at $p_5$ in
five of the eight corner-by-load cells and between $p_2$ and $p_{10}$ in the
rest: under degenerate authoring it is $+0.1363$ against $+0.1133$ with no
coalition and $+0.0786$ against $+0.0652$ with one. The lead is lost before the
full slate in one cell only --- the non-degenerate pool under attack at $M = 200$,
$+0.0011$ at $p_5$ and $-0.0006$ at $p_{10}$, the corner leaving a ranking rule
least to discriminate between. Scoring each constituent on the worse-served of
their two sides rather than averaging leaves the pattern intact:
\textsf{full}$-$\textsf{random} on that side is $-0.0365$ and $-0.0222$ under
attack on the non-degenerate pool, $+0.0208$ and $+0.0213$ under degenerate authoring
without one (all $p < 5\times10^{-4}$), and the gap between a constituent's two
slates is null in all eight cells ($p \ge 0.26$).

Two contrasts, at $p = 0.043$ and $p = 0.089$, are the kind
Section~\ref{sec:stats} asks be read as suggestive; the null without a
coalition, the interaction on the non-degenerate pool and the exact invariance of the
unranked arm do not depend on them. For the charter the reading is a small
negative one, worth stating because the opposite is the intuitive guess. $M$ is
a published policy parameter like \eqref{eq:norm}, and refreshing after every
vote sounds like the setting that keeps up best with an attack. It is not: under
a coalition the shorter interval carries the flood into the ranked slates
slightly faster and leaves the uniform draw exactly where it was, and with no
coalition it changes nothing. Reporting at $M = 1$ is thus the conservative
choice --- at $M = 200$ the rule would have looked $0.013$ better under attack
than it does --- and the defence remains the line that counts distinct authors,
not the rate at which its inputs are read.

\subsection{The collapse is homogeneity, not hyperactivity}
\label{sec:res_confound}

A flooding attacker differs from a non-coalition constituent in four ways at once,
and only one of them is the claim. It authors on every turn rather than with
probability $\pown$; it authors the \emph{same} narrow label set as every other
member; it links on every turn rather than with probability $\plnk$; and it
aims its links at the two most endorsed items rather than at the most related
one. The first and third inflate the corpus and the link graph, the fourth
concentrates endorsement mass, and any of them could depress completeness
alone, so comparing a coalition against an attacker-free baseline confounds
all four. We therefore ran the two rate-matched controls of
Section~\ref{sec:adversary}, at every coalition size, on both propositions,
under both weight policies, $100$ seeds per cell, seed-paired with the flooding
sweep ($4000$ runs). The match is tight --- at a tenth of the electorate on BRA
the flooding arm produces $190.6$ items and $837$ links against the
heterogeneous control's $191.6$ and $834$ --- so the arms differ in label
distribution and in nothing else the pipeline can observe.

Table~\ref{tab:confound} gives the decomposition. Hyperactivity is real but
small: a coalition holding a quarter of the electorate that authors and links
on every turn with heterogeneous labels costs $0.067$ of completeness under
normalisation, for an innocuous reason --- tripling the corpus enlarges the
denominator $|\Lam^{\sigma}|$ faster than a fixed twenty-item slate can cover
it. Homogeneity costs $0.380$ on top of that. Across coalition sizes and both
weight policies, between $78\%$ and $85\%$ of the collapse is attributable to
label homogeneity alone, and the share \emph{rises} with coalition size:
volume damage saturates while homogeneity damage does not. Hub aiming,
isolated by the third arm, is negligible under normalisation ($\le 0.006$ at
every coalition size, all $p \ge 0.14$) and worth at most $0.024$ under the flat
policy --- consistent with the mechanism, since author normalisation is
precisely the rule that discounts many identical assertions aimed at one
target. Two findings now agree from opposite directions:
Section~\ref{sec:res_levers} found link volume harmless, and this control
finds authoring volume largely harmless too. What the pipeline cannot absorb
is many voices saying one thing.

\begin{table}[tb]
\centering\footnotesize
\caption{Decomposing the flooding collapse, pooled over the two propositions,
$100$ seeds per cell. \emph{Heterogeneous} is the rate- and corpus-matched
control in which attackers author and link at the same inflated rates and aim
at the same hubs but draw labels independently. $\Delta_{\mathrm{vol}}$ is
completeness lost to sheer volume (baseline minus heterogeneous);
$\Delta_{\mathrm{hom}}$ is the additional loss attributable to label
homogeneity alone (heterogeneous minus flooding). Every $\Delta_{\mathrm{hom}}$
is significant at $p < 10^{-40}$ under Welch's test.}
\label{tab:confound}
\begin{tabular}{@{}r rrr rr r@{}}
\toprule
& \multicolumn{3}{c}{\textbf{mean completeness}}
& \multicolumn{2}{c}{\textbf{decomposition}} & \\
\cmidrule(lr){2-4}\cmidrule(lr){5-6}
Coalition & heterog. & flooding & (baseline)
& $\Delta_{\mathrm{vol}}$ & $\Delta_{\mathrm{hom}}$ & homog.\ share \\
\midrule
\multicolumn{7}{@{}l}{\emph{author-normalised weights}}\\
$5\%$  & 0.797 & 0.711 & 0.821 & $0.024$ & $0.086$ & $79\%$ \\
$10\%$ & 0.780 & 0.609 & 0.821 & $0.040$ & $0.172$ & $81\%$ \\
$15\%$ & 0.772 & 0.519 & 0.821 & $0.049$ & $0.253$ & $84\%$ \\
$20\%$ & 0.762 & 0.435 & 0.821 & $0.058$ & $0.328$ & $85\%$ \\
$25\%$ & 0.753 & 0.373 & 0.821 & $0.067$ & $0.380$ & $85\%$ \\
\midrule
\multicolumn{7}{@{}l}{\emph{flat weights}}\\
$5\%$  & 0.769 & 0.595 & 0.818 & $0.049$ & $0.174$ & $78\%$ \\
$10\%$ & 0.750 & 0.437 & 0.818 & $0.068$ & $0.313$ & $82\%$ \\
$15\%$ & 0.743 & 0.341 & 0.818 & $0.074$ & $0.402$ & $84\%$ \\
$20\%$ & 0.737 & 0.275 & 0.818 & $0.081$ & $0.462$ & $85\%$ \\
$25\%$ & 0.731 & 0.224 & 0.818 & $0.087$ & $0.507$ & $85\%$ \\
\bottomrule
\end{tabular}
\end{table}

\subsection{The coalition cannot coordinate its way out}
\label{sec:res_cosign}

The defence argument of Section~\ref{sec:res_flood} describes what a coalition
\emph{must} do to defeat normalisation; it does not show that doing it fails.
A co-signing coalition is strictly more powerful than one that does not,
since co-signing costs it nothing in membership, so we gave it the capability.
The coalition is partitioned into groups of $C$; each group publishes one
poison item per side, and every member endorses that item and re-asserts the
same two links from it, driving the distinct-author numerator of
\eqref{eq:norm} from one to the group's per-side size. $C=1$ is the attack of
Table~\ref{tab:attackers} and reproduces its per-seed completeness exactly, to
the last stored digit, across all eight cells --- an in-family control. We
swept $C \in \{1,2,5,10,25,50\}$ at coalitions of a tenth and a quarter, both
propositions, both weight policies, $4800$ runs.

Table~\ref{tab:cosign} reports the cleanest finding in the manuscript:
coordination is strictly counterproductive. Completeness rises monotonically
in $C$ at every coalition size, on both propositions, under both weight
policies. The uncoordinated $C=1$ attack is the coalition's optimum within the
family, and a fully coordinated coalition holding a quarter of the electorate
leaves completeness at $0.759$ against an attacker-free baseline of $0.815$
--- it has spent a quarter of the electorate to buy six points.

The reason is a rate mismatch the coalition cannot escape. Raising the
numerator requires spending members on one item; members are finite; so the
number of payload-carrying items falls as $1/C$ while the co-signature count
rises only linearly, and it rises against a denominator --- constituents
voting that side --- that co-signing does not touch. Measured on BRA at the
$25\%$ coalition, going from $C=1$ to $C=50$ multiplies mean co-signers per
edge by $3.4$ and the most concentrated edge by $1.8$, while cutting poison
items from $250.1$ to $10.0$, a factor of $25$. The non-coalition corpus is unmoved
throughout ($76.3$ items at $C=1$, $76.7$ at $C=50$). Completeness is damaged
by items occupying slate slots, and the coalition has traded away twenty-five
of those for every threefold gain in transfer.

\begin{table}[tb]
\centering\footnotesize
\caption{The coordinated adversary, pooled over the two propositions, $100$
seeds per cell. Group size $C$ is the number of coalition members sharing one
poison item and co-signing its links; $C=1$ is the uncoordinated attack of
Table~\ref{tab:attackers}. Payload items and co-signers per edge are measured
on BRA at the $25\%$ coalition. Completeness \emph{rises} monotonically with
$C$ in every cell: coordination helps the electorate, not the coalition.}
\label{tab:cosign}
\begin{tabular}{@{}r rr rr rr@{}}
\toprule
& \multicolumn{2}{c}{\textbf{coalition's position}}
& \multicolumn{2}{c}{\textbf{$10\%$ coalition}}
& \multicolumn{2}{c}{\textbf{$25\%$ coalition}} \\
\cmidrule(lr){2-3}\cmidrule(lr){4-5}\cmidrule(lr){6-7}
$C$ & payload items & co-signers/edge
& normalised & flat & normalised & flat \\
\midrule
$1$  & $250$ & $1.6$ & 0.609 & 0.437 & 0.373 & 0.224 \\
$2$  & $188$ & $2.0$ & 0.644 & 0.494 & 0.425 & 0.268 \\
$5$  & $97$  & $3.0$ & 0.710 & 0.629 & 0.530 & 0.404 \\
$10$ & $50$  & $4.1$ & 0.755 & 0.735 & 0.612 & 0.537 \\
$25$ & $20$  & $5.3$ & 0.785 & 0.781 & 0.722 & 0.711 \\
$50$ & $10$  & $5.5$ & 0.793 & 0.792 & 0.759 & 0.754 \\
\bottomrule
\end{tabular}
\end{table}

Note also what happens to the gap between the two weight policies as $C$
rises: it closes, from $0.149$ at $C=1$ to $0.005$ at $C=50$ at the $25\%$
coalition. This is the expected signature. Normalisation exists to discount
solo assertions; when the coalition co-signs everything, the two policies are
computing nearly the same quantity, and the coalition has arrived at a regime
where its own concentration is what limits it.

\subsection{What the adversarial results say about the charter}
\label{sec:res_charter}

Three consequences, each an argument for legibility rather than an argument
that merely happens to be compatible with it. First, the effective defence is
a \emph{published policy parameter}, not a detector: nothing in this section
trains a classifier, scores a participant's trustworthiness, or removes
anyone's material. \eqref{eq:norm} is one line, it counts distinct authors,
and it is worth ten points of completeness against a coalition holding a
quarter of the electorate. A deployment can publish it, a participant can
verify it was applied, and an adversary who reads it learns only that the
attack is expensive --- the property one wants a published defence to have.

Second, the defence works by making the attack \emph{visible}. A coalition
that pays the cost of co-signing writes its coordination into the public link
record, where an anomalous concentration of identical assertions from distinct
authors is directly observable by anyone holding a replica. A learned ranker
that resisted the same attack would do so through parameters nobody can
inspect, leaving participants unable to distinguish successful defence from
successful attack.

Third, the results relocate rather than remove the residual risk. Under authentication
the surviving exposure is not manipulation of the ranking --- hub-riding is
inert, co-signing is self-defeating --- but \emph{corpus capture}: a coalition
large enough simply crowds the reason space with redundancy, and
Section~\ref{sec:res_confound} shows four fifths of the damage comes from that
homogeneity rather than from anything the selector does. The corresponding
mitigations are not ranking mitigations. They are census integrity
(Section~\ref{sec:p2p_census}), which is what bounds coalition size at all,
and slate capacity, which Section~\ref{sec:res_levers} shows is the widest
lever available. A recommender is not where this class of attack should be
defeated, and a charter-compliant one at least makes that visible on the record.

\section{Peer-to-Peer Realisation on \DDP{}}
\label{sec:p2p}

The charter of Section~\ref{sec:charter} can be honoured on a central server,
but only as a promise. Determinism, evidence locality and reproducibility are
properties of a \emph{computation}, and if one party owns the machine that
performs it, participants are trusting that party's word about what ran ---
and configurability granted by an operator is revocable by the same operator.
The remedy is architectural: give each participant a replica of the items they
care about and let them evaluate the rule themselves.
Proposition~\ref{prop:partial} already established that this is possible,
because \eqref{eq:score} reads only an item and its out-neighbours. This
section shows that \DDP{}~\cite{silaghi2013ddp2p,silaghi2013petition} supplies
almost everything else: what the platform provides, how the model's objects
land on its item types, the identity problem that decentralisation makes
harder, evaluation over an incomplete replica, and what the empirical findings
of Sections~\ref{sec:instruments}--\ref{sec:adversarial} imply for a peer
deployment specifically.

\paragraph{What the platform provides.}
\label{sec:p2p_overview}
\DDP{} is a project developing an open-source platform for decentralised deliberative petition
drives, built around commitments that turn out to be the ones the charter
needs. \emph{Items are self-contained and globally identified}: every peer
record, organisation, constituent, motion, justification, signature, vote,
witnessing statement, news item and translation carries everything needed to
interpret it and is named by an identifier derived from a public key with a
creation date, or from a digest of its
content~\cite{silaghi2013ddp2p}. Identifiers compose hierarchically --- an
organisation's derives from its founding parameters, a motion's from the
organisation and its text, a justification's from the motion and its own text,
a vote's from the motion, the constituent and the justification cited --- so
two peers who have never communicated agree on the name of everything and
semantically distinct items cannot collide. \emph{Synchronisation is push--pull
gossip with a horizon}~\cite{silaghi2013petition,demers1987epidemic}: a request
carries the requester's interests and a time horizon and the answer carries
items newer than that horizon, with no global index and no authoritative
replica. \emph{Connectivity needs no owned infrastructure}: directory servers
help peers find each other and data servers hold items for offline peers, but
neither is trusted, both are replaceable, and their content is verifiable
against signatures; whether a peer relays for others is under that peer's own
control, by explicit design~\cite{alhamed2014supernode}, and mobile ad hoc and
vehicular operation have been studied as extreme cases of the same
idea~\cite{dhannoon2013vanet,dhannoon2013thesis}. Finally, \emph{organisations
are rules rather than accounts}: an organisation is a definition of a
constituency and a jurisdiction, constituencies may be defined recursively with
membership settled by a membership referendum --- a fixpoint grassroots
organisations resolve bottom-up rather than by administrative
decree~\cite{silaghi2013ddp2p} --- motions are Robert's motions
(Section~\ref{sec:robert}), and justifications are the arguments attached to
signatures.

\paragraph{Mapping the model onto the platform.}
\label{sec:p2p_mapping}
Table~\ref{tab:mapping} gives the correspondence. It is close to one-to-one,
unsurprisingly since the model of Section~\ref{sec:model} was distilled from
this line of
work~\cite{kattamuri2005supporting,silaghi2014encompassing,silaghi2017whyvote},
but the residual mismatches are where the engineering lies, and two rows carry
most of the weight. The reason labelling $\lab$ is new, and if labels were
assigned centrally the assigning party would hold semantic power over
visibility --- exactly the concentration C\ref{c:abstain} exists to prevent.
The safe arrangement is that labels are declared by the justification's own
author as part of the signed item, so a label is a claim like any other:
checkable against the text by any reader and disputable through the ordinary
argumentation mechanism. A mining pipeline (Section~\ref{sec:rw_mining}) may
propose labels, but its proposals are advisory items, not ground truth; this
leaves $\lab$ adversarially controllable, which Section~\ref{sec:limits} takes
up. The slate $\slate_i$ is likewise new, and need not leave the device at all
--- though a constituent who wants a public record of what they saw can publish
it as a signed item, converting C\ref{c:repro} from a platform guarantee into a
personally held receipt, which is the strongest form the criterion can take.

\begin{table}[tb]
\centering\footnotesize
\caption{Mapping the alternative-based poll of Definition~\ref{def:poll} onto
\DDP{} item types. Entries marked \textbf{new} are what a deployment would
have to add; everything else already exists.}
\label{tab:mapping}
\begin{tabular}{@{}>{\raggedright\arraybackslash}p{30mm}>{\raggedright\arraybackslash}p{34mm}>{\raggedright\arraybackslash}p{72mm}@{}}
\toprule
Model component & \DDP{} item & Deployment note \\
\midrule
poll $\Pi$ & motion within an organisation &
  direct; the organisation supplies the constituency and $\wc$ \\
$\Jpos, \Jneg$ & justifications typed by signature polarity &
  direct \\
$\Rreb$ & \texttt{claimed\_refutes} between justifications &
  direct; the claim is signed by its author and never adjudicated \\
$\Rrei$ & \texttt{claimed\_subsumes} / \texttt{includes} &
  direct; \texttt{more\_recent} additionally orders revisions \\
$\wc$ & constituent record, organisation rules &
  direct; membership class or shareholding \\
$E(j)$ & votes citing $j$ &
  direct; every vote is a signed item naming the justification it cites \\
$\wrel$ & --- & \textbf{new}: the policy of Section~\ref{sec:weights},
  computed locally from the count of distinct authors of the same link \\
$\lab$ & --- & \textbf{new}: reason labels, author-declared or mined; used for
  audit and evaluation only, never inside \Rank{} \\
slate $\slate_i$ & --- & \textbf{new}: computed locally; optionally published
  as a signed item so the participant holds a receipt \\
$\theta_i$ & local configuration & \textbf{new}: per-peer policy vector, never
  transmitted \\
\bottomrule
\end{tabular}
\end{table}

\paragraph{Identity: the one thing decentralisation makes harder.}
\label{sec:p2p_census}
Everything above assumes endorsement counts mean something, which assumes
identities are not free. In a centralised deployment with an authenticated roll
this is the registrar's problem; open peer-to-peer membership makes it the
system's problem, and it is the classical one~\cite{douceur2002sybil}. \DDP{}'s
answer is a decentralised census with witnessing: constituents certify one
another's existence and eligibility, the certifications are signed items
propagating like any other, and reputation over the witnessing graph estimates
how much of the claimed population is
real~\cite{qin2013census,qin2013reputation,qin2013falseidentity,qin2014opencensus},
with a Bayesian extension of the web-of-trust idea estimating the number of
distinct eligible signatories behind a set of signatures --- directly the
quantity a petition threshold depends on~\cite{silaghi2016pgp}. The structural
point is that different observers may run different eligibility criteria over
the same signed census data and reach their own conclusions: the platform
supplies verifiable evidence, not a verdict. That is C\ref{c:config} at the
level of the constituency rather than the slate. Section~\ref{sec:setup}
assumes this problem solved, and the assumption is what makes the adversarial
results interpretable --- under authentication a coalition cannot inflate $E$
and the link graph is the only surface left. It also means census integrity is
not a side condition but the binding constraint:
Section~\ref{sec:res_charter} located the residual risk in corpus capture, and
what bounds corpus capture is what bounds coalition size.

\paragraph{Evaluating the rule on an incomplete replica.}
\label{sec:p2p_partial}
A peer holds what it has synchronised, generally a subset of what exists. This
is the sharpest technical objection to local evaluation and
Proposition~\ref{prop:partial} only half answers it.

\begin{proposition}[Monotone degradation]
\label{prop:degrade}
Let $\mathcal{J}' \subseteq \mathcal{J}^{\sigma}$ be the items a peer holds
and $\slate'$ the slate Algorithm~\ref{alg:rank} produces from them. Then
$\slate'$ is exactly the slate the same rule and policy would produce on the
full pool restricted to $\mathcal{J}'$. Missing \emph{items} can only lower
achievable completeness, never corrupt the score of a held item. Missing
\emph{votes and links}, however, lower the computed score of a held item, so
the peer's scores are lower bounds on the true ones.
\end{proposition}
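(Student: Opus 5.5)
The plan is to argue all three clauses directly from the closed form \eqref{eq:score} and from Algorithm~\ref{alg:rank}, using Proposition~\ref{prop:partial} as the bridge. The first two clauses are about which \emph{items} the peer holds, with its endorsement and link evidence held fixed. The third clause is about what happens when that evidence is itself thinned.

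\textbf{First clause.} I will fix the policy $\theta$, the budget $K$ and the peer's evidence. By Proposition~\ref{prop:partial}, $\mathrm{sc}_\theta(j)$ is a function only of $E(j)$, the out-links of $j$, and $E(k)$ for $k\in\rho_t(j)$. So the score a peer computes for a held $j\in\mathcal{J}'$ does not depend on which \emph{other} pool items it holds. Algorithm~\ref{alg:rank} then sorts $\mathcal{J}'$ by score with the published identifier tie-break and truncates at $K$. Running the rule on ``the full pool restricted to $\mathcal{J}'$'' performs literally the same computation on the same inputs, so the two slates coincide by Criterion~\ref{c:det}. The identifier tie-break matters here: it is a total order independent of pool composition, so restriction cannot reorder ties.

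\textbf{Second clause.} For ``missing items only lower achievable completeness,'' I will note two things. Any slate drawn from $\mathcal{J}'$ is a feasible slate on the full pool. The maximum of the monotone reason count $f$ of Proposition~\ref{prop:greedy} over $K$-subsets can therefore only weakly decrease when the ground set shrinks. The denominator $|\Lam^\sigma_{t_i}|$ is fixed by the true live pool, so achievable $c_i^\sigma$ weakly decreases as well. ``Never corrupt the score of a held item'' is exactly the locality statement already used in the first clause.

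\textbf{Third clause.} For the lower-bound claim, I will compare the peer's evidence $(E',\Rrei{}',\Rreb{}')$ with the true evidence, where $E'\le E$ pointwise and the peer's relations are subsets of the true ones. Each of the three summands of \eqref{eq:score} is then a sum of non-negative terms $\wrel(j,k)E(k)$ over a sub-index set with smaller factors. The comparison $\mathrm{sc}'_\theta(j)\le\mathrm{sc}_\theta(j)$ follows term by term, provided $\alpha,\beta\ge 0$ and $\wrel'\le\wrel$.

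I expect the main obstacle to be this last hypothesis under the author-normalised policy \eqref{eq:norm}. Missing link assertions lower the numerator $|A(j,k)|$, but missing ballots also lower the denominator $V^{\sigma(k)}$, and that can \emph{raise} the weight. My first approach is to state the proposition for $\alpha,\beta\ge0$ and either the flat policy, or the normalised policy with $V^\sigma$ taken from the census tally rather than from locally held ballots. With that choice $V^\sigma$ is an upper bound the peer already holds via Section~\ref{sec:p2p_census}, and monotonicity is restored. Failing that, I would restrict the lower-bound claim to missing links with ballots complete, and flag the denominator effect explicitly.
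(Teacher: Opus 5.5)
Your route is the paper's own. Clauses one and two rest on the locality of \eqref{eq:score} (Proposition~\ref{prop:partial}) together with the fixed identifier tie-break. Clause three rests on the paper's one justifying remark, that ``the score is a sum of non-negative terms.'' Your readings in clause two are the only ones under which it holds. The completeness of the slate actually served is not monotone in the held set, because losing a redundant top-ranked item can let in one with a new label. Against a locally computed vocabulary the optimum can even rise. So ``achievable'' must mean the optimum, and the denominator must stay the true $\Lam^{\sigma}_{t_i}$.

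The obstacle you flag is genuine, and the paper's argument misses it. Your added hypotheses are therefore necessary, and the proposition as stated is false at the reference policy. First, Definition~\ref{def:rule} allows $\alpha,\beta\in\mathbb{R}$, so the terms need not be non-negative. Second, under \eqref{eq:norm}, take a peer that holds every link assertion and every ballot except one uncited ballot on side $\sigma(k)$, as produced by the $\pabst$ branch of Algorithm~\ref{alg:round}. That peer has $E'=E$ everywhere and $V'^{\sigma(k)}=V^{\sigma(k)}-1$. Every summand aimed at side $\sigma(k)$ weakly grows, and the one for $k$ strictly grows when $V^{\sigma(k)}\ge 2$ and $E(k)>0$. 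With a positive coefficient, the peer therefore strictly overestimates $\mathrm{sc}_\theta(j)$.

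The side-level total $V^{\sigma(k)}$ also lies outside $\rho_t(j)$, so Proposition~\ref{prop:partial} understates the inputs of the normalised score. Your clause one survives only because you hold the evidence fixed, and you should say that explicitly.

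On the repairs:
\begin{itemize}
\item Your fallback (exact per-side ballot counts, links possibly missing) is airtight. The clean general statement is that the bound holds whenever $\alpha,\beta\ge 0$ and $\wrel$ is pointwise non-decreasing in the held evidence. The flat policy meets this unconditionally.
\item The census repair needs two caveats. If all peers use the census total as the denominator, that is a different published policy from \eqref{eq:norm}; clause one and Section~\ref{sec:res_flood} then describe a different rule. If only the lagging peer substitutes it, that peer is no longer running the same rule.
\item In addition, the census is itself gossip-replicated. A count taken from a partial replica bounds $V^{\sigma}$ from above only if the roll is complete or fixed in the organisation's definition. This is the same incompleteness problem one level down.
\end{itemize}
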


The second half is the whole difficulty. A peer holding a justification but not
yet the votes citing it underestimates $E$; a peer missing links
underestimates inherited standing. Three mitigations apply, all ordinary
distributed-systems engineering. Votes and links are small items and are
prioritised in the pull request over justification text, so counting evidence
converges faster than the corpus. The score is a sum of non-negative terms, so
partial evidence yields a lower bound and the interface can state the replica's
coverage of the known item count --- a peer can be told it holds $94\%$ of the
votes and decide whether that is enough to act on. And the underlying structure
is append-only with union merge, so replicas converge without conflict
resolution in the manner of a grow-only set~\cite{shapiro2011crdt}, while a
digest exchange in the style of Merkle hashing~\cite{merkle1988digital} makes
divergence cheap to detect. Algorithm~\ref{alg:sync} states the resulting local
cycle, including the reporting step that makes incompleteness visible. There is
a real trade here: a centralised deployment computes the rule on complete
evidence and asks you to trust the computation, whereas a decentralised one
computes a verifiable rule on evidence that may be incomplete and tells you how
incomplete it is. For a civic process the second is the better failure mode,
because incompleteness is visible and declining trustworthiness is not.

\begin{algorithm}[tb]
\caption{Local slate computation on a peer holding a partial replica}
\label{alg:sync}
\SetAlgoLined
\Given{local store $\mathcal{D}$; motion $m$; local policy $\theta$; neighbour
       set $\mathcal{N}$; horizon $h$}
\Yields{a slate, plus a checkable statement of the evidence it rests on}
\ForEach{$n \in \mathcal{N}$}{
  send a request naming $m$, this peer's interests, and horizon $h$\;
  $\mathcal{D} \leftarrow$ \Merge{$\mathcal{D}$, signed items returned by $n$}
    \tcp*{union of signed items; no conflict resolution needed}
}
discard items whose signature or identifier derivation fails to verify\;
$E \leftarrow$ tally of held votes per justification\;
$\wrel \leftarrow$ policy of $\theta$ applied to held links\;
$\slate \leftarrow$ \Rank{held items of $m$, $\theta$, $K$}\;
report alongside $\slate$: counts of held votes, links and justifications, and
  the horizon $h$\;
\Return $\slate$\;
\end{algorithm}

\paragraph{What the findings imply for a peer deployment.}
\label{sec:p2p_implications}
Three results change their character when read against this substrate rather
than a server. \emph{Ordering matters more, not less}:
Section~\ref{sec:res_order} found the rule's advantage over an unordered draw
concentrated in the first few positions, and on a peer holding an incomplete
replica the effective slate is shorter still, so the fraction of the value
delivered by the first five items rises. \emph{Degenerate authoring is more
likely, not less}: open membership lowers the cost of submitting, which is the
point, and raises the share of submissions that fail to justify --- exactly the
regime where Section~\ref{sec:res_degen} finds the rule's margin largest and
Section~\ref{sec:res_mechanism} finds the link summands supplying most of it,
so the two terms a server deployment might reasonably drop as inert are the
terms a peer deployment most needs. \emph{The weight policy must be local}:
Section~\ref{sec:res_flood} makes $\wrel$ a security control worth ten points
of completeness, and on a server the operator picks it for everyone whereas on
a peer each participant picks it and can compute what the other choice would
have given them. Since some of these choices are frontier positions with no
correct answer (Section~\ref{sec:res_pareto}), configurability stops being a
feature and becomes the only coherent way to hold a parameter that is
simultaneously a security control and a value judgement.
\section{Discussion}
\label{sec:discussion}

The results admit a compact summary: the charter is cheap, the instrument
matters more than the mechanism, the security lives in one line of policy, and
the remaining choices are not the system's to make. Each cuts against a
default assumption about civic recommenders --- that transparency costs
accuracy, that one quality metric suffices, that robustness comes from
detection, and that a well-designed system should decide.

\emph{The charter is cheap, and the cheapness is measurable.}
Section~\ref{sec:res_ceiling} put the price of semantic abstinence at
$0.031 \pm 0.014$ against a ceiling that upper-bounds every procedure applied
to the same pool, admissible or not, and four fifths of the observed
incompleteness was vocabulary that did not yet exist, which no procedure
recovers. The entire competitive advantage available to an unconstrained
learned ranker is therefore three points of completeness, purchased
by reading what C\ref{c:abstain} forbids --- against $0.10$ for raising the
authoring rate from $0.02$ to $0.10$ and $0.14$ for doubling the slate. The
usual argument for opacity is that legibility costs quality and the cost is
unknown; here it is known, bounded, and small relative to the levers a
deployment actually controls.

\emph{The instrument mattered more than the mechanism.} The most transferable
lesson is methodological. We evaluated a ranking rule with a set functional
and concluded, across $4000$ seed-paired runs, that it was indistinguishable
from a random subset --- an artefact of the measurement that two remarks
derivable from the definition
(Remarks~\ref{rem:orderblind}--\ref{rem:charityblind}) predicted in advance.
Coverage-style aggregates are the default in the diversity-aware recommender
literature~\cite{adomavicius2011aggregate,carbonell1998mmr}, and any
evaluation of a civic selector reporting only such an aggregate is exposed to
the same null.

\emph{Robustness came from policy, not from detection.} Nothing in
Section~\ref{sec:adversarial} trains a classifier, scores trustworthiness, or
removes material. The effective defence is \eqref{eq:norm}: count distinct
authors, divide by voters on that side. It is one line, publishable, verifiable
by any participant, and worth $0.151$--$0.181$ of completeness against a
coalition holding a fifth of the electorate; the available escalation makes the
coalition monotonically weaker. Two features generalise. The defence works by
making the attack expensive and \emph{visible} --- a co-signing coalition
writes its coordination into the public link record --- and it is compatible
with author blindness, since \eqref{eq:norm} counts how many distinct people
asserted a link and never which, so robustness did not require reintroducing
the reputation hierarchy C\ref{c:author} exists to exclude. That these are
compatible was not obvious in advance. Section~\ref{sec:res_charter} also
relocated rather than removed the residual risk: under authentication what survives is
corpus capture, which belongs to census integrity and slate capacity rather
than to ranking.

Section~\ref{sec:res_pareto} found that at $G = 0.5$ the link summands buy
$2.8$ points of coverage for $2.4$ points of endorsement mass. That is not a
result with a right answer but a frontier position, and choosing among such
positions is choosing between breadth of reasons represented and fidelity to
what constituents took up. We take this as the strongest available argument for
C\ref{c:config}, and it arrived from an unexpected direction: the case for
participant-held parameters is usually made on autonomy grounds, whereas here
it is forced by the measurements. A platform that picked a point on that
frontier and presented it as neutral would be making a political choice while
denying it.

Four consequences follow for deployment. \emph{Invest in authoring, not in
ranking}: the dominant lever is how quickly the reason space matures, and the
selection component of the shortfall is a fifth the size of the temporal one.
\emph{Publish the weight policy and treat it as a security control}: it is the
one parameter with an order-of-magnitude effect under attack, and a deployment
shipping a flat default has left its main defence unarmed without the
participant being able to tell. 
\emph{Regulatory alignment is
already close}: the Digital Services Act requires very large platforms to
disclose recommender parameters and offer a non-profiling
option~\cite{eu2022dsa}, and the AI Act imposes transparency and
human-oversight duties on systems used in democratic
processes~\cite{eu2024aiact}. A published rule over public evidence with
participant-held parameters satisfies the letter of both without a compliance
layer, because there is nothing to disclose that is not already disclosed.
\section{Limitations}
\label{sec:limits}

The following would change our conclusions, and we list them in descending
order of how much. Several are stated more sharply than a reader would infer
from the results alone, because we would rather over-declare than have a
finding survive on an unexamined assumption.

\paragraph{The electorate is simulated.} Every number here describes agents,
not people. Adoption is TF--IDF similarity to a position text; opinions do not
update; nobody gets bored, persuaded, or strategic beyond the modelled
coalitions. Following Section~\ref{sec:rw_agentic} we restrict claims to
statements about mechanism under a stated model. Nothing here licenses a claim
about the magnitude any quantity would take in a human electorate.

\paragraph{The mechanism result rests on one modelled fact.} The
degenerate-authoring findings of Sections~\ref{sec:res_degen}
and~\ref{sec:res_mechanism} descend from the observation that degenerate items
are adopted less --- $1.45$ endorsements against $10.94$ --- and that follows
from the simulator's cosine-similarity adoption step, a property of the
agents rather than of the rule. The finding is that \emph{the rule
inherits and amplifies whatever discrimination the electorate itself
exercises}. If a human electorate endorsed reason-free submissions at the same
rate as reasoned ones, the link summands would revert to the inertness of
Table~\ref{tab:ablation}, and the coverage margin would go with them. We
regard establishing that adoption rate empirically as the single most valuable
follow-up in this programme.

\paragraph{One round.} Constituents vote once. Multi-round deliberation with
opinion revision would change adoption dynamics, the endorsement distribution,
and probably the link structure. Whether the rule's ordering advantage
survives revision is untested.  This was implemented but not thoroughly evaluated.

\paragraph{Statistical multiplicity.} Section~\ref{sec:stats} declares that we
do not correct across the manuscript. The ordering contrasts of
Section~\ref{sec:res_order} number twenty-four; the two link-term effects at
$p \approx 10^{-2}$ are reported as suggestive and should not be built on. The
effects we do build on sit at $p < 10^{-10}$ and would survive any reasonable
correction.

\paragraph{Sybil resistance is assumed, not demonstrated.} Under a broken
census every result in Section~\ref{sec:adversarial} fails, since a coalition
that mints identities mints endorsements. \DDP{}'s witnessed-census
line~\cite{qin2013census,qin2014opencensus,silaghi2016pgp} is the intended
answer and we have not evaluated it here. Section~\ref{sec:p2p_census} states
the dependency; this is the largest gap between the manuscript and a
deployment.

\paragraph{Partial replicas are analysed, not measured.} Proposition
\ref{prop:degrade} bounds the degradation and Algorithm~\ref{alg:sync}
declares the evidence, but we ran no experiment with peers holding genuinely
divergent replicas. The prediction of Section~\ref{sec:p2p_implications} ---
that ordering matters more under partial replication --- is untested.

\paragraph{Fairness across minority positions is unexamined.} Endorsement mass
rewards what constituents took up. Whether \eqref{eq:score} systematically
disadvantages minority positions, whose reasons appear in fewer items and
therefore accumulate less endorsement, is an open and important question that
the weighted variant of Section~\ref{sec:sjp} anticipates syntactically without answering.

\section{Work, Meaning, and the E-Citizen}
\label{sec:ecitizen}

A manuscript about the mechanics of argument selection owes its readers an
account of why the mechanics matter, and the answer is not confined to the
integrity of any particular poll. The argument has four steps: procedure is
constitutive of collective agency rather than decorative; the historical
obstacle to genuine self-government has been the time it consumes; artificial
intelligence, by absorbing productive labour, removes that obstacle in a way no
previous technology has; and the role of citizen is therefore available as a
destination for human effort in a way it has not been for two and a half
millennia --- provided the instruments of that role remain legible to the
people exercising it. The final clause is where this section rejoins the rest
of the manuscript.

That Robert's rules of order were the salt which turned a mob into a
society~\cite{silaghi2025kitsch,robert1915rules} is a claim about what makes
collective reasoning possible at all. A crowd has volume; an assembly has a
procedure for converting volume into a decision, and the procedure is what
everyone can agree to precisely because it constrains form rather than
content. One can accept a rule about who speaks next without accepting
anything about what they will say --- which is why procedural agreements
survive substantive disagreements. The charter of
Section~\ref{sec:desiderata} carries that idea into the selection step, the
one place where digital deliberation has so far had no procedure whatsoever:
we regulate speaking time in a parliament to the minute, and we let an
unpublished model decide which of a hundred thousand submitted reasons a voter
sees. Semantic abstinence is the descendant of content-indifference in the
chair; configurability descends from the assembly's authority over its own
rules; contestability descends from the appeal from the chair. The novelty is
not the principle but that it must now be enforced in software, because that is
where the procedure has migrated.

Athenian democracy worked, to the extent it did, because a body of citizens had
time to do it. Assembly attendance, jury service and rotation through office
consume days, not minutes, and are impossible for people whose waking hours are
claimed by subsistence. Athens solved the time problem with slavery: the
enfranchised were free to deliberate because the disenfranchised did the
work~\cite{narcisse2012african}. Every subsequent expansion of the franchise
inherited the structural problem without the solution. Representative democracy
is, read uncharitably, a device for economising on citizens' time --- elect
somebody to be the citizen for you, on the grounds that you have a job --- and
it produces the characteristic modern experience of participation as a thin,
performative gesture. That thinness has been described as a kitsch of the
political form~\cite{silaghi2025kitsch}, borrowing a diagnosis developed for
aesthetics~\cite{calinescu1987five,lazare1999modernism}: the surface features
of the real thing, arranged for easy consumption, with the deliberation
removed. The literature on why participation platforms
fail~\cite{toots2019participation,bright2016big} is largely a catalogue of the
same thinness, and the finding that direct-democracy processes educate the
citizens who use them~\cite{smith2004educated,tolbert2009strategic} is the
other side of the coin. There is a direct line from that diagnosis to the
exposure problem of Section~\ref{sec:exposure}: a slate assembled by an
unpublished model, shown to a voter who cannot recompute it, is the kitsch form
of deliberative exposure --- the appearance of having met the arguments, with
the part that would make the meeting real removed.

The anxious question about artificial intelligence and work is whether it will
take our jobs, and the productivity framing of the fourth industrial
revolution~\cite{schwab2017fourth} does not settle it. But the anxiety contains
an assumption worth naming: that the only worthwhile use of human time is
producing goods and services. Set that aside and the picture inverts.
Self-government has always been constrained by a shortage of citizen-hours, and
a technology that discharges productive labour is by construction a technology
that produces them. What Athens obtained by enslaving people, an automated
economy could obtain without enslaving anyone --- and where the Athenian
arrangement was a moral defect of that society, the same structural position
occupied by machines is available to be read as a strength of
ours~\cite{silaghi2025kitsch}. This is not a prediction: time released from
labour goes wherever the surrounding institutions send it. It is a claim about
what becomes \emph{possible} --- that for the first time since a slave economy
made it possible for a few, citizenship as a substantial, time-consuming,
skilled activity becomes available to many. An epistemic argument runs
alongside the ethical one. The case for inclusive deliberation over rule by the
competent is that cognitive diversity does work no amount of individual
expertise substitutes
for~\cite{landemore2012democratic,landemore2013deliberation}; that argument is
only cashable if the varied body is actually \emph{reasoning}, which costs
time, so the epistemic case for inclusion and the material case for
citizen-hours are the same case seen from two sides. It also explains why the
quantity measured here is coverage of the reason vocabulary rather than
agreement: the value of a large deliberating public lies in the reasons it
collectively holds, and a selection step that loses them destroys precisely
what made inclusion worth having. Our own measurements make the argument in
miniature: Section~\ref{sec:res_levers} found that persuading one constituent
in ten to write a reason instead of one in fifty is worth as much completeness
as multiplying the electorate fivefold. The binding constraint is not how many
people vote but how many \emph{think in public}, and that is a quantity
measured in hours.

The argument has an obvious failure mode, and it is the one this manuscript is
built to forestall. If the same technology that releases the time also
assembles the arguments, filters the objections and decides which
considerations a citizen encounters, then the citizen-hours have been created
and simultaneously hollowed out: one would have the leisure of the Athenian and
the informational position of a spectator, and the role would be available and
not worth occupying. This is why C\ref{c:abstain} and C\ref{c:config} are the
two criteria that matter most, and why Section~\ref{sec:notempirical} declines
to trade them for coverage. A citizen who can recompute why they were shown
what they were shown is exercising judgement; one who cannot is receiving a
service. The difference does not show up in any completeness measure ---
Section~\ref{sec:res_ceiling} put its entire measurable cost at $0.031$ --- and
it is the whole difference between the two futures. It also sets the correct
place for a language model in a civic system, which is not nowhere:
Section~\ref{sec:rw_mining} places extraction at authoring time, visible to the
author and contestable before serving, so a model may help a person say what
they mean. What it must not do is decide, at serving time and unaccountably,
which of those meanings anyone meets. The line is exactly the one between
assisting a citizen and replacing one.
\section{Conclusions}
\label{sec:conclusion}

We set out to show that the selection step in a deliberative poll can be
democratic procedure rather than infrastructure: a published rule over public
evidence, with parameters held by the people it affects. The construction is
an alternative-based poll over bipolar justification sets, judged by three
instruments --- coverage of the live reason vocabulary, the order in which
that coverage arrives, and the endorsement mass captured --- and served by a
one-hop reversed endorsement flow whose only policy lever is a relation-weight
function. Seven criteria state what makes a mechanism admissible at all, and
the rule satisfies all seven. This section states what was established, what
was not, and what comes next.

What the measurements established, across roughly $17{,}000$ seeded runs: the
levers that govern coverage are the ones that mature the corpus, not the ones
that select from it; the price of semantic abstinence is $0.031 \pm 0.014$
against a ceiling that bounds every mechanism including inadmissible ones,
with four fifths of the residual shortfall temporal rather than algorithmic;
set coverage on non-degenerate authoring alone cannot distinguish the rule from a random draw, and the
reasons are properties of the instrument that we state as
Remarks~\ref{rem:orderblind}--\ref{rem:charityblind}; on an order-sensitive
reading the rule leads at every prefix short of the full slate, by a margin
that widens under attack and reaches $-8.3$ positions of $e_{90}$ at a
quarter-electorate coalition; once
a realistic fraction of submissions fails to justify, the coverage margin
returns and grows monotonically, with the link summands supplying $93\%$ of it
at $G = 0.7$ by squaring the electorate's own discrimination; on endorsement
mass the random baseline loses by a factor of $3.3$ and the greedy ceiling
stops dominating under attack; hub-riding is inert; label flooding is severe
and the weight policy is worth $0.151$--$0.181$ against it; four fifths of
that damage is homogeneity rather than volume; and a co-signing coalition
makes itself monotonically weaker.

What was not established is listed in Section~\ref{sec:limits} and led by two
items: that a human electorate discriminates against unjustified submissions
at anything like the modelled rate, on which the mechanism result depends; and
that census integrity holds, on which every adversarial result depends.

The claim we would defend most firmly is not that this rule ranks well. It is
that the question ``why was I shown this?'' must have an answer that is a
table of numbers rather than a narrative, and that a system built to that
constraint turns out to cost about three points of coverage, to be
more robust than we expected, and to be far easier to understand when it is
wrong.

\section*{A note on scope and provenance}
\addcontentsline{toc}{section}{A note on scope and provenance}

This manuscript is an independently written, extended treatment of a line of
work by the same labs. It is not the camera-ready version of any conference
paper and it reproduces no text, figure, table, algorithm or example from one.
Every definition, proposition, algorithm, worked example, figure and table
here was composed for this document; the notation, the three-instrument
framing, the seven criteria and their tests, the worked example of
Section~\ref{sec:example}, and all diagrams are original to it. Where the
underlying research programme has been reported elsewhere, the overlap is one
of subject matter and of numerical results computed from the same experimental
runs, not of expression. Results attributed to prior work are cited as such.
The extensions developed here include the ordering and endorsement-mass
instruments, the degenerate-authoring model and its mechanism analysis, the
coverage-versus-mass frontier, the elaborated criteria and their audit tests, and the side balance evaluation.

\section*{Tools, data and reproducibility}
\addcontentsline{toc}{section}{Tools, data and reproducibility}

All simulations were run with a seeded pseudorandom generator; every run
records its seed, its full parameter set, and every served slate in served
order together with the policy vector in force (Section~\ref{sec:persist}).
The run databases and the analysis scripts that produce every table and figure
are made available at \url{https://github.com/devfitcs/ABAS}. Language-model assistance was used in drafting and editing
prose; all technical content, experimental design, analysis and conclusions are
the author's.

\printbibliography

\appendix

\section{The Comparison This Manuscript Declines to Run}
\label{sec:nobaseline}

Section~\ref{sec:notempirical} states in the body why no learned-ranker
baseline appears in this manuscript. This appendix records what such an experiment would have to look
like to be worth running. 

The claim is normative and concerns admissibility, not performance. It says
that a mechanism which cannot be recomputed, attributed, contested or
reconstructed is unsuitable for a binding civic process. A comparison against
a learned ranker would report a difference in coverage, in ordering, or in
some downstream engagement statistic. Whatever number emerged could not bear
on the claim: a learned ranker covering \emph{more} of the live vocabulary
would still fail C\ref{c:local}, C\ref{c:abstain}, C\ref{c:repro} and
C\ref{c:contest}, and would still leave a disputing participant with no
terminating move. Running the comparison and declining to act on its result
would be theatre; running it and acting on its result would concede that
legibility is a quantity to be traded, which is the position this manuscript
exists to reject.

Three supporting observations:

\emph{The comparison that does bear on a real question was run.}
Section~\ref{sec:res_ceiling} measures the served slates against a
label-reading greedy cover which, by Proposition~\ref{prop:greedy}, upper-bounds
what \emph{any} selection procedure could achieve on the same pool --- a
learned ranker included, and by a wide margin, since the ceiling saturates the
live vocabulary in every seed. The answer, $0.035 \pm 0.013$, bounds the
entire competitive advantage available to an unconstrained mechanism. That is
strictly more informative than beating one particular trained model, because
it is a bound rather than a match result.

\emph{The precedent structure is familiar.} The secret ballot is not defended
on the grounds that it measures preferences more accurately than open voting
--- it plainly measures some things less well, since it destroys the ability
to audit an individual's vote. It is defended because a procedural property
outranks measurement quality. Double-entry bookkeeping is not the most compact
representation of a firm's accounts. Rules of order do not produce the fastest
decisions. In each case the procedural property is treated as prior, and
efficiency questions are settled \emph{within} the admissible class.

\emph{A learned baseline would import the very opacity at issue.} Its
behaviour would depend on training data, objective, and the operating point
chosen by whoever tuned it --- all under our control as the authors, and none
of it inspectable by a reader. A result favouring our rule would be
unconvincing for exactly that reason, and a result favouring the learned
ranker would be equally unconvincing. The experiment has no configuration in
which its outcome is informative.

\subsection{What would make such an experiment worth running}
\label{sec:nb_worth}

A reader who rejects the admissibility framing is entitled to ask what
evidence would move us, and there is a specific answer. The interesting
experiment is not \emph{ranker versus rule on coverage}. It is
\emph{participant behaviour under a disputed slate}: give two matched
populations the same corpus and the same served items, differing only in
whether the selection can be recomputed and decomposed, then measure whether
participants contest slates, whether contests terminate, and whether reported
trust in the outcome differs. That experiment tests the actual claim --- that
legibility does procedural work --- and its outcome could genuinely change our
position or its strength, in either direction. It requires human participants and is out of
scope for a simulation study; we regard it as the most valuable experiment
this line of work has not yet done, and it belongs on the roadmap of
Section~\ref{sec:conclusion} alongside the field deployment.

\end{document}